\documentclass{article}

\PassOptionsToPackage{numbers}{natbib}
\usepackage[preprint]{PACE-FNO}

\usepackage[T1]{fontenc}    
\usepackage[hidelinks]{hyperref}
\usepackage{url}            
\usepackage{booktabs}       
\usepackage{amsfonts}       
\usepackage{nicefrac}       
\usepackage{microtype}      
\usepackage{xcolor}         

\usepackage{amsmath,amssymb}
\usepackage{amsthm}
\usepackage{algorithm}
\usepackage{algpseudocode}
\usepackage{graphicx}
\usepackage{subcaption}
\usepackage{multirow}
\usepackage{array}
\usepackage{placeins}
\usepackage{tikz}
\usepackage{pgfplots}
\usepackage{xspace}
\usepackage{enumitem} 

\usetikzlibrary{arrows.meta, positioning, calc, backgrounds, shadows.blur}
\pgfplotsset{compat=1.18}
\graphicspath{{images/}}

\definecolor{cblue}{RGB}{138,182,207}
\definecolor{corange}{RGB}{234,166,92}
\definecolor{cgreen}{RGB}{223,223,174}
\definecolor{cyellow}{RGB}{236,214,132}
\definecolor{cred}{RGB}{232,108,64}
\definecolor{cmidio}{RGB}{181,214,191}

\newcommand{\D}{\mathcal D}
\newcommand{\g}{\mathfrak g}

\newcommand{\eg}{e.g.\xspace}
\newcommand{\ie}{i.e.\xspace}
\algrenewcommand\algorithmicrequire{\textbf{Input:}}
\algrenewcommand\algorithmicensure{\textbf{Output:}}
\newtheorem{theorem}{Theorem}[section]
\newtheorem{lemma}[theorem]{Lemma}

\newtheorem{corollary}[theorem]{Corollary}
\theoremstyle{definition}
\newtheorem{remark}{Remark}[section]

\newtheorem{definition}{Definition}[section]

\title{Physics-Aligned Canonical Equivariant Fourier Neural Operator under Symmetry-Induced Shifts}

\author{
  Jiaxiao Xu \\
  Shanghai University of Finance and Economics \\
  \texttt{2025213187@stu.sufe.edu.cn} \\
  \And
  Changhong Mou\\
  Utah State University\\
  \texttt{changhong.mou@usu.edu} \\
  \And
  Yeyu Zhang\\
  Shanghai University of Finance and Economics \\
  \texttt{zhangyeyu@mail.shufe.edu.cn} \\
  \And
  Fengxiang He\\
  University of Edinburgh \\
  \texttt{fhe@ed.ac.uk} \\
}

\begin{document}

\maketitle
\begin{abstract}
Neural operators approximate PDE solution maps, but they need not respect the symmetries of the governing equation. In out-of-distribution (OOD) regimes, a standard neural operator must often learn coordinate alignment and physical evolution within a single map, which can hurt generalization. We use known continuous symmetries of evolution equations on periodic domains to separate these two roles. We propose the Physics-Aligned Canonical Equivariant Fourier Neural Operator (PACE-FNO), which estimates the input frame with a Lie-algebra coordinate estimator, maps the field to a reference frame, applies a standard Fourier Neural Operator (FNO), and restores the prediction to the target frame. We train alignment and operator prediction jointly using bounded symmetry perturbations, with an optional low-dimensional refinement step that updates the estimated frame at inference. Equivariance is enforced by the input and output transformations, while the FNO architecture remains unchanged. Across 1-D and 2-D Burgers, shallow-water, and Navier-Stokes equations on periodic domains, PACE-FNO matches the in-distribution (ID) accuracy of standard neural operators and reduces out-of-distribution (OOD) relative error by up to $12\times$ over FNO with symmetry augmentation (FNO+Aug) under translations and Galilean shifts, with smaller gains for coupled rotation-translation shifts. Ablations show that aligning the input and restoring the output frame account for most OOD gains; inference-time refinement provides a smaller correction.

\end{abstract}

\section{Introduction}
\label{intro}

Neural operators learn maps between function spaces and approximate PDE solution operators \cite{karniadakis2021physics, li2020fourier, lu2019deeponet, kovachki2023neural, boulle2024guide}. We study maps $\mathcal{G}^\dagger:\mathcal{A}\to\mathcal{U}$ from input to output fields for parametric PDEs, fluid simulation, and weather modeling \cite{wen2022ufno, pathak2022fourcastnet, brandstetter2022message, lippe2023pde}. Once trained, these models generalize across parameters and discretizations without rerunning a solver, but out-of-distribution (OOD) generalization remains a central challenge.

The difficulty comes from a geometric mismatch. Many PDE solution operators are equivariant under continuous symmetry groups: translated, rotated, or co-moving inputs induce outputs transformed by the same group action \cite{olver1993applications, bronstein2021geometric, finzi2020generalizing}. If training data lie near a canonical slice of the orbit $[a]=\{\mathcal{C}_g(a)\mid g\in G\}$, physically equivalent inputs can fall outside the sampled coordinate range even though the equation is unchanged. Spectral truncation can amplify this mismatch when transformed fields move energy outside retained modes \cite{zheng2024alias}. Thus small in-distribution (ID) error on canonical samples need not imply small error along the group orbit.

We address OOD generalization by exploiting continuous PDE symmetries, including translation, rotation, and Galilean boost. These transformations preserve the solution operator up to induced input and output actions, yet move observations along group orbits not covered by the canonical training measure. For this regime we propose the Physics-Aligned Canonical Equivariant Fourier Neural Operator (PACE-FNO), which decouples symmetry alignment from dynamics prediction. PACE-FNO trains a Lie-algebra coordinate estimator jointly with a standard Fourier Neural Operator (FNO), so the canonicalizer adapts to the target dynamics. Its default inference path is one-shot; per-sample group optimization is only an optional frozen-weight refinement.

In practice, PACE-FNO maps each OOD input to a canonical frame, applies the FNO there, and pushes the prediction back to the target frame. We study equations on tori, \ie periodic domains $\mathbb{T}^d$, where OOD samples come from bounded group actions that preserve the equation and boundary conditions. The method targets symmetry-induced shifts \cite{bronstein2021geometric, shumaylov2024lie, puny2022frame, kaba2023equivariance, ma2024canonicalization, dym2024equivariant}, not arbitrary distribution shift.


\begin{figure}[t!]
\centering
\tikzset{
    myfont/.style={font=\large\sffamily},
    orbit/.style={thick, color=gray!62},
    pullback/.style={very thick, dashed, color=orange!85!black, ->},
    rollout/.style={very thick, color=violet!70!black, ->},
    pushforward/.style={very thick, color=teal!70!black, ->},
    connector/.style={dashed, thick, color=gray!55},
    manifoldborder/.style={draw=gray!50, line width=1.05pt},
    mathlabel/.style={
        fill=white,
        fill opacity=0.92,
        text opacity=1,
        inner sep=1.5pt,
        rounded corners=2pt,
        font=\large
    },
    maninode/.style={circle, inner sep=1.8pt},
    orbitline/.style={thick, color=gray!62},
    canonarrow/.style={very thick, color=violet!70!black, ->},
    projarrow/.style={thick, color=violet!70!black, ->},
    classbox/.style={
        draw=black!12,
        thick,
        rounded corners=8pt,
        fill=white
    },
    pt/.style={circle, inner sep=2pt}
}

\begin{subfigure}[t]{0.49\textwidth}
\centering
\resizebox{\linewidth}{!}{%
\begin{tikzpicture}[
    myfont,
    >=Stealth,
    line cap=round,
    line join=round
]


\shade[top color=gray!5, bottom color=gray!18]
(0.15,1.05)
.. controls (0.95,2.55) and (2.10,4.55) .. (3.45,5.25)
.. controls (4.65,5.85) and (5.75,5.00) .. (6.70,5.45)
.. controls (7.85,5.95) and (8.95,5.75) .. (9.75,4.95)
.. controls (10.35,4.35) and (10.55,3.35) .. (10.85,2.95)
.. controls (9.75,3.00) and (8.80,2.75) .. (7.75,2.15)
.. controls (6.65,1.55) and (5.55,1.15) .. (4.45,1.45)
.. controls (3.10,1.80) and (2.05,0.85) .. (1.15,0.85)
.. controls (0.70,0.82) and (0.35,0.92) .. cycle;

\begin{scope}
\clip
(0.15,1.05)
.. controls (0.95,2.55) and (2.10,4.55) .. (3.45,5.25)
.. controls (4.65,5.85) and (5.75,5.00) .. (6.70,5.45)
.. controls (7.85,5.95) and (8.95,5.75) .. (9.75,4.95)
.. controls (10.35,4.35) and (10.55,3.35) .. (10.85,2.95)
.. controls (9.75,3.00) and (8.80,2.75) .. (7.75,2.15)
.. controls (6.65,1.55) and (5.55,1.15) .. (4.45,1.45)
.. controls (3.10,1.80) and (2.05,0.85) .. (1.15,0.85)
.. controls (0.70,0.82) and (0.35,0.92) .. cycle;

\fill[white, opacity=0.16] (2.6,4.8) ellipse (2.2cm and 0.85cm);
\fill[white, opacity=0.12] (7.8,5.0) ellipse (2.0cm and 0.95cm);
\fill[black, opacity=0.05] (4.8,1.0) ellipse (3.0cm and 0.65cm);

\draw[gray!55, opacity=0.20, line width=0.35pt]
(0.55,1.10) .. controls (1.45,2.25) and (2.75,4.45) .. (4.15,5.10)
              .. controls (5.45,5.65) and (6.35,4.95) .. (7.35,5.20)
              .. controls (8.45,5.45) and (9.20,5.05) .. (10.10,3.55);

\draw[gray!55, opacity=0.18, line width=0.35pt]
(1.00,0.95) .. controls (1.95,2.05) and (3.10,4.10) .. (4.55,4.82)
              .. controls (5.80,5.30) and (6.75,4.72) .. (7.75,4.90)
              .. controls (8.78,5.08) and (9.45,4.75) .. (10.35,3.30);

\draw[gray!55, opacity=0.17, line width=0.35pt]
(1.55,0.92) .. controls (2.40,1.85) and (3.55,3.75) .. (4.95,4.42)
              .. controls (6.15,4.88) and (7.10,4.38) .. (8.12,4.48)
              .. controls (9.00,4.58) and (9.72,4.18) .. (10.55,3.08);

\draw[gray!55, opacity=0.16, line width=0.35pt]
(2.15,1.05) .. controls (2.95,1.72) and (4.00,3.25) .. (5.35,3.95)
              .. controls (6.42,4.30) and (7.35,3.92) .. (8.35,3.98)
              .. controls (9.18,4.00) and (9.90,3.72) .. (10.75,2.98);

\draw[gray!55, opacity=0.15, line width=0.35pt]
(2.90,1.28) .. controls (3.55,1.75) and (4.45,2.82) .. (5.75,3.35)
              .. controls (6.75,3.65) and (7.65,3.42) .. (8.62,3.42)
              .. controls (9.45,3.40) and (10.05,3.22) .. (10.82,2.96);

\draw[gray!55, opacity=0.16, line width=0.35pt]
(0.18,1.10) .. controls (1.55,1.55) and (3.10,1.95) .. (4.65,2.08)
              .. controls (6.25,2.18) and (7.80,2.05) .. (9.35,1.88)
              .. controls (10.05,1.82) and (10.48,1.85) .. (10.85,1.95);

\draw[gray!55, opacity=0.15, line width=0.35pt]
(0.40,1.65) .. controls (1.75,2.10) and (3.20,2.48) .. (4.78,2.62)
              .. controls (6.30,2.75) and (7.85,2.62) .. (9.40,2.40)
              .. controls (10.00,2.30) and (10.38,2.28) .. (10.72,2.32);

\draw[gray!55, opacity=0.15, line width=0.35pt]
(0.82,2.38) .. controls (2.05,2.88) and (3.45,3.20) .. (4.98,3.35)
              .. controls (6.45,3.48) and (7.92,3.38) .. (9.25,3.12)
              .. controls (9.88,2.98) and (10.25,2.92) .. (10.55,2.92);

\draw[gray!55, opacity=0.14, line width=0.35pt]
(1.32,3.28) .. controls (2.42,3.70) and (3.70,4.00) .. (5.10,4.12)
              .. controls (6.45,4.22) and (7.75,4.12) .. (8.92,3.88)
              .. controls (9.48,3.75) and (9.85,3.58) .. (10.15,3.35);

\draw[gray!55, opacity=0.13, line width=0.35pt]
(2.05,4.30) .. controls (3.00,4.65) and (4.05,4.90) .. (5.18,5.00)
              .. controls (6.28,5.08) and (7.30,5.00) .. (8.28,4.82)
              .. controls (8.78,4.70) and (9.15,4.52) .. (9.45,4.20);

\draw[white, opacity=0.32, line width=0.8pt]
(1.5,4.35) .. controls (3.0,5.15) and (5.0,5.05) .. (7.0,4.80)
            .. controls (8.3,4.60) and (9.0,4.25) .. (9.6,3.65);
\end{scope}

\draw[manifoldborder]
(0.15,1.05)
.. controls (0.95,2.55) and (2.10,4.55) .. (3.45,5.25)
.. controls (4.65,5.85) and (5.75,5.00) .. (6.70,5.45)
.. controls (7.85,5.95) and (8.95,5.75) .. (9.75,4.95)
.. controls (10.35,4.35) and (10.55,3.35) .. (10.85,2.95)
.. controls (9.75,3.00) and (8.80,2.75) .. (7.75,2.15)
.. controls (6.65,1.55) and (5.55,1.15) .. (4.45,1.45)
.. controls (3.10,1.80) and (2.05,0.85) .. (1.15,0.85)
.. controls (0.70,0.82) and (0.35,0.92) .. cycle;

\node[text=gray!78!black, font=\Large\bfseries] at (3.2,5.95)
{Solution Space $\mathcal{M}$};


\shade[
    inner color=red!11,
    outer color=red!3,
    opacity=0.90,
    rotate around={-10:(5.80,3.20)}
]
(5.80,3.20) ellipse (3.10cm and 1.72cm);

\shade[
    inner color=blue!22,
    outer color=blue!8,
    opacity=0.98,
    rotate around={-10:(4.55,2.72)}
]
(4.55,2.72) ellipse (1.22cm and 0.72cm);

\node[text=red!68!black, font=\large\bfseries, align=center] at (5.25,4.48)
{Out-of-distribution\\orbit};

\node[text=blue!82!black, font=\large\bfseries, align=center] at (4.05,1.98)
{Canonical slice\\(ID)};

\node[maninode, fill=blue!85!black, label={[text=blue!85!black]135:$\tilde a_c$}] (ac) at (4.50,2.72) {};
\node[maninode, fill=red!75!black, label={[text=red!75!black]28:$a_{\mathrm{aug}}$}] (aaug) at (7.25,3.72) {};

\draw[orbit]
(ac) to[out=18,in=205]
node[pos=0.42, sloped, above=2pt, mathlabel] {$\mathcal{C}_g \tilde a_c$}
(aaug);

\draw[pullback]
(aaug) to[out=235,in=350]
node[pos=0.52, sloped, below=2pt, mathlabel, text=orange!85!black]
{$\mathcal{C}_g^{-1} a_{\mathrm{aug}}$}
(ac);

\node[
    draw=black!12,
    thick,
    rounded corners=8pt,
    fill=white,
    minimum width=8.25cm,
    minimum height=2.1cm
] (box) at (7.15,-1.05) {};

\node[text=gray!65, font=\large] at (7.15,-0.20) {canonical slice};

\node[maninode, fill=blue!85!black] (c1) at (4.55,-1.05) {};
\node[below=2pt of c1, font=\large] {$\tilde a_c$};

\node[
    draw=violet!55!black,
    rounded corners=4pt,
    thick,
    minimum width=1.1cm,
    minimum height=0.55cm,
    fill=violet!6
] (fno) at (6.15,-1.05) {\large FNO};

\node[maninode, fill=violet!80!black] (c2) at (8.05,-1.05) {};
\node[below=2pt of c2, font=\large] {$\tilde u_c$};

\node[maninode, fill=teal!75!black] (c3) at (10.05,-1.05) {};
\node[below=2pt of c3, font=\large] {$\hat u_{\mathrm{pred}}$};

\draw[rollout] (c1) -- (fno);
\draw[rollout] (fno) -- (c2);
\draw[pushforward]
(c2) -- node[above=2pt, text=teal!70!black, font=\large]
{$\mathcal{C}_{\hat g}$} (c3);

\draw[connector] (ac) to[out=-88,in=170] (box.west);

\end{tikzpicture}
}
\caption{}
\label{fig:main_overview}
\end{subfigure}
\hfill
\begin{subfigure}[t]{0.49\textwidth}
\centering
\resizebox{\linewidth}{!}{%
\begin{tikzpicture}[
    myfont,
    >=Stealth,
    line cap=round,
    line join=round
]


\shade[top color=gray!4, bottom color=gray!18]
(0.25,0.75)
.. controls (1.10,2.15) and (2.75,3.20) .. (4.55,3.40)
.. controls (5.85,3.55) and (7.00,3.00) .. (7.85,2.28)
.. controls (7.15,2.25) and (6.20,2.00) .. (5.18,1.58)
.. controls (3.90,1.08) and (2.45,0.94) .. (1.28,1.00)
.. controls (0.82,1.02) and (0.48,0.94) .. cycle;

\begin{scope}
\clip
(0.25,0.75)
.. controls (1.10,2.15) and (2.75,3.20) .. (4.55,3.40)
.. controls (5.85,3.55) and (7.00,3.00) .. (7.85,2.28)
.. controls (7.15,2.25) and (6.20,2.00) .. (5.18,1.58)
.. controls (3.90,1.08) and (2.45,0.94) .. (1.28,1.00)
.. controls (0.82,1.02) and (0.48,0.94) .. cycle;

\draw[gray!55, opacity=0.16, line width=0.30pt]
(0.62,1.08) .. controls (1.90,1.50) and (3.68,1.62) .. (5.55,1.58)
.. controls (6.42,1.55) and (7.10,1.70) .. (7.65,1.92);

\draw[gray!55, opacity=0.15, line width=0.30pt]
(0.72,1.56) .. controls (1.88,1.95) and (3.66,2.06) .. (5.38,2.04)
.. controls (6.22,2.02) and (6.92,2.08) .. (7.48,2.22);

\draw[gray!55, opacity=0.14, line width=0.30pt]
(1.02,2.18) .. controls (2.05,2.52) and (3.58,2.70) .. (4.98,2.66)
.. controls (5.88,2.62) and (6.62,2.52) .. (7.12,2.34);

\draw[gray!55, opacity=0.13, line width=0.30pt]
(1.35,1.02) .. controls (1.95,1.65) and (2.72,2.33) .. (3.82,2.96);

\draw[gray!55, opacity=0.12, line width=0.30pt]
(2.22,0.98) .. controls (2.82,1.50) and (3.72,2.08) .. (4.92,2.68);

\fill[white, opacity=0.12] (3.10,2.92) ellipse (1.25cm and 0.40cm);
\end{scope}

\draw[manifoldborder]
(0.25,0.75)
.. controls (1.10,2.15) and (2.75,3.20) .. (4.55,3.40)
.. controls (5.85,3.55) and (7.00,3.00) .. (7.85,2.28)
.. controls (7.15,2.25) and (6.20,2.00) .. (5.18,1.58)
.. controls (3.90,1.08) and (2.45,0.94) .. (1.28,1.00)
.. controls (0.82,1.02) and (0.48,0.94) .. cycle;

\node[text=gray!78!black, font=\Large\bfseries] at (2.95,3.72)
{Orbit on $\mathcal{M}$};

\node[pt, fill=gray!80!black, label={[text=black]105:$x$}] (x) at (2.30,1.72) {};
\node[pt, fill=gray!80!black, label={[text=black]80:$g_1\!\cdot\!x$}] (g1x) at (3.75,2.18) {};
\node[pt, fill=gray!80!black, label={[text=black]65:$g_2\!\cdot\!x$}] (g2x) at (5.35,2.34) {};

\draw[orbitline]
(x) to[out=18,in=200] (g1x)
    to[out=16,in=205] (g2x);

\node[text=gray!70!black, font=\large] at (3.95,2.90)
{$[x]=\{g\!\cdot\!x\mid g\in G\}$};

\node[pt, fill=violet!80!black, label={[text=violet!80!black]330:$\tilde x$}] (xtilde) at (6.30,1.42) {};

\draw[canonarrow]
(g2x) to[out=-65,in=135]
node[pos=0.43, below=2pt, text=violet!75!black, font=\footnotesize]
{canonicalization} (xtilde);

\begin{scope}[shift={(-1.4,0)}]
\node[classbox, minimum width=3.7cm, minimum height=4.2cm] (flowbox) at (10.9,1.65) {};

\node[font=\large\bfseries, align=center, text width=3.25cm] at (10.9,3.35)
{Equivalence-class mapping};

\node[pt, fill=gray!55] (q1) at (9.95,2.55) {};
\node[pt, fill=gray!65] (q2) at (10.45,2.25) {};
\node[pt, fill=gray!55] (q3) at (10.95,2.50) {};
\draw[gray!60, thick] (q1) -- (q2) -- (q3);
\draw[gray!50, dashed] (q1) -- (q3);

\node[text=gray!75!black, font=\large, align=center] at (10.45,1.78)
{orbit / class $[x]$};

\draw[projarrow] (10.45,1.35) -- (10.45,0.55)
node[midway, right=1pt, text=violet!75!black, font=\footnotesize]
{canonicalization};

\node[pt, fill=violet!80!black] (qc) at (10.45,0.18) {};
\node[text=violet!80!black, font=\large, below=2pt of qc] {$\tilde x$};

\draw[projarrow] (10.45,-0.40) -- (10.45,-1.10)
node[midway, right=1pt, text=violet!75!black, font=\footnotesize]
{orbit-class view};

\node[
    draw=violet!45!black,
    rounded corners=4pt,
    thick,
    fill=violet!6,
    minimum width=1.9cm,
    minimum height=0.55cm
] at (10.45,-1.55) {\large $[\tilde x]\in \mathcal M/G$};
\end{scope}

\end{tikzpicture}
}
\caption{}
\label{fig:orbit_quotient}
\end{subfigure}

\caption{Geometric view of PACE-FNO. (a) The input is pulled from its observed orbit to a canonical slice, evolved there, and pushed forward to the target frame. (b) Points connected by the group action share an orbit $[x]=\{g\cdot x\mid g\in G\}$; canonicalization chooses one representative $\tilde x$ from each equivalence class.}
\label{fig:framework_and_quotient}
\end{figure}

\textbf{Contributions.}
\begin{itemize}
    \item A Lie-algebra canonicalization layer for solution operators on fields over tori. It uses spectral group actions and terminal-frame correction, and wraps a standard FNO predictor without changing its architecture.
    \item An error decomposition for symmetry-induced OOD shifts that separates canonical operator approximation from input and output alignment errors, showing when canonicalization helps (Theorem~\ref{thm:equiv_approx}).
    \item Empirical evaluations on 1-D and 2-D Burgers, shallow-water, and Navier-Stokes equations on tori. One-shot canonicalization reduces OOD error by up to $12\times$ on shifts dominated by translation and by $3$--$9\%$ on coupled rotation-translation shifts, while TTA gives an optional accuracy-latency trade-off.
\end{itemize}

\paragraph{Related works.}
Existing approaches address symmetry-induced OOD generalization in three main ways, but each leaves part of the frame alignment to the predictor or to inference-time optimization:
    \textbf{(1) Data augmentation.} \cite{wang2020incorporating, brandstetter2022lie, akhound2023lie} exposes the model to transformed samples, but the learned operator still represents frame alignment and PDE evolution in one map, tying performance to the sampled range. PACE-FNO instead removes the estimated group action before prediction and restores the terminal frame afterward, so the FNO operates mainly on the canonical slice.
    \textbf{(2) Equivariant architectures.} \cite{cohen2016group, finzi2020generalizing, smets2023pde, pfaff2021learning} enforce commutation by design and give strong inductive bias when the representation and discretization are fixed. They can also restrict architectural choices or require problem-specific group representations. PACE-FNO keeps the standard FNO architecture and imposes equivariance through input/output group actions, separating symmetry handling from the spectral predictor.
    \textbf{(3) Canonicalization methods.} \cite{puny2022frame, kaba2023equivariance, ma2024canonicalization, dym2024equivariant} estimate a frame correction, predict in a standard frame, and restore the output frame. The closest prior work, LieLAC \cite{shumaylov2024lie}, adds an energy-based Lie-group canonicalizer to a pretrained model and solves a group optimization problem at inference, with optional fine-tuning. PACE-FNO learns the Lie-algebra coordinate estimator jointly with the operator, using PDE-specific terminal-frame correction.

\section{Preliminaries}
\label{pre}

\paragraph{Operator learning.}
Let $\mathcal{A}$ and $\mathcal{U}$ be Banach spaces of input and output fields \cite{kovachki2023neural, li2020fourier, li2020multipole, cao2021galerkin, tripura2023wavelet, cao2024laplace}. The physical solution operator is $\mathcal{G}^\dagger : \mathcal{A} \to \mathcal{U}$, and $\mathcal{G}_\theta$ is its learned finite-dimensional approximation. Empirical training minimizes
\[
    \|\mathcal{G}^\dagger - \mathcal{G}_\theta\|^2_{L^2_\mu(\mathcal{A};\mathcal{U})}
    =
    \mathbb{E}_{a\sim\mu}
    \|\mathcal{G}^\dagger(a)-\mathcal{G}_\theta(a)\|^2_{\mathcal{U}},
\]
while approximation theory uses the compact-set metric
\(
    \sup_{a\in K}
    \|\mathcal{G}^\dagger(a)-\mathcal{G}_\theta(a)\|_{\mathcal{U}}
\).

\noindent\textbf{Out-of-distribution (OOD) construction.}
The training measure is supported near a canonical slice, not the full orbit of symmetry-equivalent states. Let $\mu_c$ be the canonical data distribution and $P_G$ a distribution over bounded group elements. The induced OOD measure is
\[
a_{\mathrm{OOD}}=\mathcal{C}_g(a_c),
    \qquad
    a_c\sim\mu_c,\quad g\sim P_G.
\]

The target operator is unchanged up to the induced group action; only the input-output frame moves. Thus the OOD split tests frame generalization, not viscosity, forcing, solver, or boundary-condition variation.

\paragraph{Lie groups and induced actions.}
Let $G$ act smoothly on the spatial domain $\Omega$ \cite{kirillov2008introduction}, inducing $[\mathcal{C}_g a](x)=a(g^{-1}\cdot x)$
on fields. Since $G$ is generally nonlinear \cite{lee2003smooth}, we use Lie-algebra coordinates $\xi\in\mathfrak{g}=T_eG$ and recover $g=\exp(\xi).$
    
For a basis $\{v_1,\ldots,v_k\}$ of $\mathfrak g$, a generator expands as $\xi=\sum_{j=1}^{k}\alpha_j v_j$, so the estimator regresses the local coordinates $\alpha_j$ before applying the exponential map. PACE-FNO uses this representation for translations, rotations, and problem-specific kinematic components.

\paragraph{Symmetries of PDEs and neural operators.}
Many PDEs carry continuous symmetries: transformed solutions remain valid under $G$ \cite{olver1993applications}. Equivariant operators enforce consistency under admissible coordinate changes \cite{finzi2020generalizing, helwig2023group, smets2023pde}. The Fourier Neural Operator (FNO) \cite{li2020fourier} alternates local maps with spectral convolutions and is widely used for discretization-invariant operator approximation \cite{li2020fourier, kovachki2023neural, tran2023factorized}:
$v_{t+1}(x)
    =
    \sigma\!\left(
    W v_t(x)
    +
    \mathcal{F}^{-1}\!\big(R_\phi \cdot \mathcal{F}(v_t)\big)(x)
    \right).$
    
Here $R_\phi$ is a learnable tensor on retained Fourier modes. Spectral truncation is efficient, but a transformed input can redistribute phase and high-frequency content relative to the canonical training frame, leaving the predictor outside its training regime.

\paragraph{Equivariance in operator learning.}
For symmetry group $G$, predictions should satisfy
$\mathcal{G}_\theta(\mathcal{C}_g a)
    =
    \mathcal{C}_g(\mathcal{G}_\theta(a)).$
Hard-coding this constraint can reduce flexibility \cite{finzi2020generalizing, helwig2023group}. PACE-FNO instead canonicalizes the input, applies the learned solution operator, and maps the output back \cite{shumaylov2024lie, puny2022frame, kaba2023equivariance, ma2024canonicalization}.

\section{Methodology}
\label{method}

\begin{figure}[htbp]
    \centering
    \makebox[\textwidth][c]{\includegraphics[width=1\textwidth]{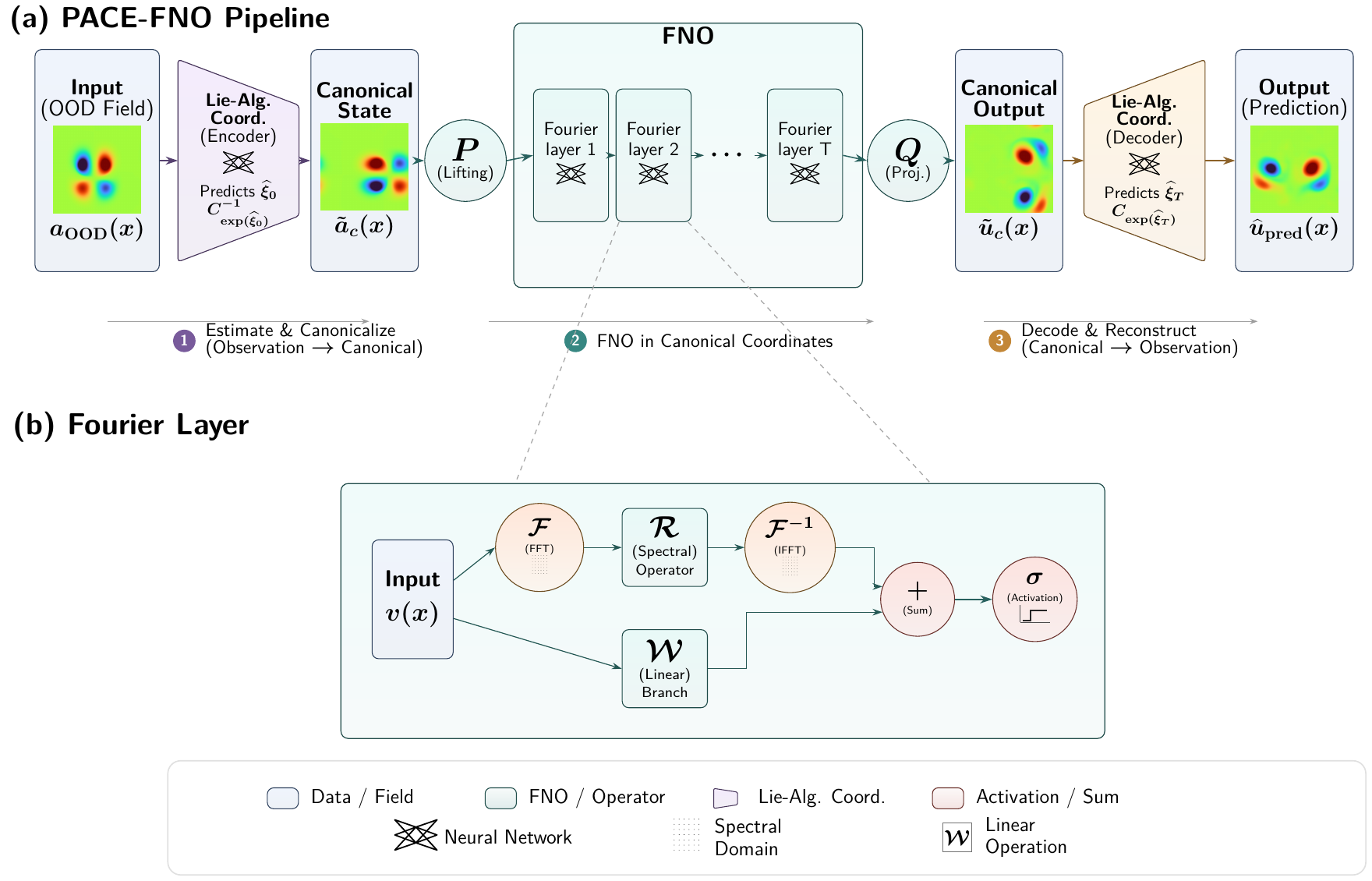}}
\caption{PACE-FNO pipeline. (a) The Lie-algebra encoder predicts $\hat{\xi}_0$, pulls the OOD input $a_{\mathrm{OOD}}(x)$ to the canonical state $\tilde{a}_c(x)$, applies the FNO to obtain $\tilde{u}_c(x)$, and restores the final prediction $\hat{u}_{\mathrm{pred}}(x)$. Dashed connectors indicate that panel~(b) expands a representative FNO layer. (b) One Fourier layer with spectral branch, linear branch $\mathcal{W}$, summation, and activation $\sigma$.}
\label{fig:pace_fno_pipeline}
\end{figure}

In OOD regimes with frame changes, a standard neural operator must handle both frame alignment and physical evolution. PACE-FNO separates the two: a Lie-algebra estimator canonicalizes the input, a standard FNO solves the PDE on the canonical slice, and a terminal group action maps the output back. This design builds on recent work in Lie-equivariant learning, canonicalization, and test-time adaptation \cite{finzi2020generalizing, brandstetter2022lie, shumaylov2024lie, ma2024canonicalization}.

Unlike post-hoc energy canonicalization such as LieLAC \cite{shumaylov2024lie}, PACE-FNO learns alignment jointly during training. One-shot inference uses one estimator pass, one inverse action, one FNO pass, and one output-frame action; iterative refinement is only residual correction. We omit a direct LieLAC comparison because no public implementation is available and its setting does not match our PDE/OOD protocol (Appendix~\ref{app:lpsda_comparison}).

For each problem, $G$ acts through a bounded subset $G_B \subseteq \mathbb{T}^d \rtimes SO(d) \rtimes \mathrm{Gal}$, where $\mathrm{Gal}$~($\equiv$ Galilean) appears only for co-moving frames. We use $\mathbb{T}^1 \rtimes \mathrm{Gal}$ for 1-D Burgers, $\mathbb{T}^2 \rtimes \mathrm{Gal}$ for 2-D Burgers and SWE, and $SE(2)=\mathbb{T}^2 \rtimes SO(2)$ for Navier-Stokes. When spatial and kinematic components are active, $\mathfrak{g}=\mathfrak{g}_{\mathcal S}\oplus\mathfrak{g}_{\mathcal K}$.
ID samples come from a canonical distribution $\mu_c$ with no macroscopic drift; OOD samples apply bounded group actions to canonical states. The solution operator stays fixed up to equivariance; only the orbit coordinate changes.

Given OOD input $a_{\mathrm{OOD}}$, the estimator predicts the input-frame generator
\[
    \hat{\xi}_0
    =
    \hat{\xi}_{\mathcal S,0}
    \oplus
    \hat{\xi}_{\mathcal K,0}.
\]
The neural estimator learns $\hat{\xi}_{\mathcal S,0}$. When present, $\hat{\xi}_{\mathcal K,0}$ is computed from the observed field, \eg by mean background velocity, rather than taken from the test generator. The output-frame generator follows
\[
    \hat{\xi}_T = \Gamma_B(\hat{\xi}_0;T),
\]
where $\Gamma_B$ is the identity for conserved rigid spatial actions and adds the analytically induced co-moving translation for Galilean shifts. The overall forward map is
\begin{equation}
    \hat{u}_{\mathrm{pred}}
    =
    \mathcal{C}_{\exp(\hat{\xi}_T)}
    \circ
    \mathcal{G}_\theta
    \circ
    \mathcal{C}_{\exp(\hat{\xi}_0)}^{-1}
    (a_{\mathrm{OOD}}),
    \label{eq:pace_forward_main}
\end{equation}
where $\mathcal{G}_\theta$ is the FNO predictor \cite{li2020fourier}. Eq.~\eqref{eq:pace_forward_main} is the one-shot rule. Ground-truth generators are used only to create perturbed training pairs and supervise alignment; at evaluation, $a_{\mathrm{OOD}}$ alone determines $\hat{\xi}_0$ and $\hat{\xi}_T$. Pullback and pushforward are spectral: translations use the shift theorem and rotations use three-shear decomposition \cite{paeth1990fast, unser1995convolution}, avoiding interpolation dissipation. Appendix~\ref{app:method_details} gives details.

Training uses bounded group perturbations, following restricted Lie-group augmentation \cite{brandstetter2022lie}. The total loss combines three terms:
\begin{equation*}
    \mathcal{L}_{\mathrm{total}}(\theta,\phi)
    =
    \mathcal{L}_{\mathrm{PDE}}
    +
    \lambda_1 \mathcal{L}_{\mathrm{equiv}}
    +
    \lambda_2 \mathcal{L}_{\mathrm{anchor}}.
\end{equation*}
$\mathcal{L}_{\mathrm{PDE}}$ is a relative $L^2$ field loss, optionally with a Fourier-domain $H^1$ term. $\mathcal{L}_{\mathrm{equiv}}$ matches the estimated generator to the training perturbation, and $\mathcal{L}_{\mathrm{anchor}}$ regresses Lie coordinates with a decayed weight to stabilize TTA. We balance terms with gradient-normalization heuristics \cite{wang2021understanding, chen2018gradnorm}; full definitions are in Appendix~\ref{app:method_details}.
Algorithm~\ref{alg:pace-fno-main} summarizes the training loop and the two inference modes.

\begin{algorithm}[htbp]
\caption{PACE-FNO training and inference}
\label{alg:pace-fno-main}
\small
\begin{algorithmic}[1]
\Require Training pairs $(x,y)\in\D$; $\mathcal{G}_\theta$, $\mathcal{E}_\phi$, group action $\mathcal C$; weights $\lambda_{\rm eq},\lambda_{\rm anc}$.
\Ensure Trained $(\theta,\phi)$ and one-shot/TTA prediction rules.
\For{each training step}
    \State Sample $(x,y)$ and a bounded training generator $\Delta_0\in\g$.
    \State Derive the training target generator $\Delta_T=\Gamma_B(\Delta_0;T)$ and form $x^+=\mathcal C_{\exp(\Delta_0)}x$, $y^+=\mathcal C_{\exp(\Delta_T)}y$.
    \State Estimate $\hat{\xi}_0^+=\mathcal{E}_\phi(x^+)$ and derive $\hat{\xi}_T^+=\Gamma_B(\hat{\xi}_0^+;T)$.
    \State Canonicalize $x_c=\mathcal C_{\exp(\hat{\xi}_0^+)}^{-1}x^+$ and predict $\hat u_c=\mathcal{G}_\theta(x_c)$.
    \State Push forward $\hat y=\mathcal C_{\exp(\hat{\xi}_T^+)}\hat u_c$.
    \State Update $(\theta,\phi)$ with $\mathcal L_{\rm pde}+\lambda_{\rm eq}\mathcal L_{\rm eq}+\lambda_{\rm anc}\mathcal L_{\rm anc}$.
\EndFor
\Statex
\State \textbf{One-shot:} estimate $\hat{\xi}_0=\mathcal{E}_\phi(x)$, set $\hat{\xi}_T=\Gamma_B(\hat{\xi}_0;T)$, and return $\mathcal C_{\exp(\hat{\xi}_T)}\mathcal{G}_\theta(\mathcal C_{\exp(\hat{\xi}_0)}^{-1}x)$.
\State \textbf{Optional TTA:} freeze networks and refine only the spatial part of $\hat{\xi}_0$ by minimizing $\mathcal J_{\rm TTA}(\hat{\xi}_0)$ for $K$ steps; recompute $\hat{\xi}_T=\Gamma_B(\hat{\xi}_0;T)$.
\State Return the same pushforward prediction with coordinates taken modulo the spatial period.
\end{algorithmic}
\end{algorithm}

The next result separates canonical approximation from residual input and terminal alignment error under compactness, equivariance, and Lipschitz assumptions (Appendix~\ref{proof}).

\begin{theorem}[Error Decomposition under Symmetry-Induced Shifts]
\label{thm:equiv_approx}
Assume the hypotheses of Lemma~\ref{lemma:compactness} (compact canonical support and continuous group action), Theorem~\ref{theorem:fno_universal} (universal FNO approximation on the canonical compact set), and Lemmas~\ref{lemma:Lipschitz_1}--\ref{lemma:Lipschitz_2} (isometric/Lipschitz group action and Lipschitz learned operator). Then the prediction error of PACE-FNO over the bounded OOD orbit satisfies
\[
\sup_{a \in \mathcal{A}_{\mathrm{OOD}}}
\left\|
\mathcal{C}_{\hat{g}_T} \circ \mathcal{G}_{\theta} \circ \mathcal{C}^{-1}_{\hat{g}_0}(a)
-
\mathcal{G}^{\dagger}(a)
\right\|_{\mathcal{U}}
\le
\epsilon + M_0 \epsilon_{\mathrm{geo},0} + \epsilon_{\mathrm{geo},T},
\]
where $\hat{g}_0=\exp(\hat{\xi}_0)$, $\hat{g}_T=\exp(\hat{\xi}_T)$, and $M_0=K_C L_\theta$ with $K_C$ the group-action Lipschitz constant and $L_\theta$ the FNO Lipschitz constant. Here $\epsilon$ is the canonical approximation error, while $\epsilon_{\mathrm{geo},0}$ and $\epsilon_{\mathrm{geo},T}$ are residual input and terminal alignment errors.
\end{theorem}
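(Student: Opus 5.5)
The plan is a triangle inequality around the exact canonical pipeline. Fix $a\in\mathcal{A}_{\mathrm{OOD}}$, so $a=\mathcal{C}_{g}(a_c)$ with $a_c$ in the compact canonical support $K_c$ and $g\in G_B$. Let $g_T=\exp(\Gamma_B(\log g;T))$ be the true terminal frame. Equivariance of the solution operator, as assumed in Lemma~\ref{lemma:compactness} and the surrounding hypotheses, gives
\[
\mathcal{G}^\dagger(a)=\mathcal{C}_{g_T}\mathcal{G}^\dagger(a_c).
\]
Let $\tilde a=\mathcal{C}^{-1}_{\hat g_0}(a)$ be the estimated canonical input. I would insert two intermediate quantities, $\mathcal{C}_{\hat g_T}\mathcal{G}_\theta(a_c)$ and $\mathcal{C}_{\hat g_T}\mathcal{G}^\dagger(a_c)$, and split the error as
\begin{align*}
\bigl\|\mathcal{C}_{\hat g_T}\mathcal{G}_\theta(\tilde a)-\mathcal{G}^\dagger(a)\bigr\|
&\le \bigl\|\mathcal{C}_{\hat g_T}\mathcal{G}_\theta(\tilde a)-\mathcal{C}_{\hat g_T}\mathcal{G}_\theta(a_c)\bigr\|
+\bigl\|\mathcal{C}_{\hat g_T}\mathcal{G}_\theta(a_c)-\mathcal{C}_{\hat g_T}\mathcal{G}^\dagger(a_c)\bigr\|\\
&\quad+\bigl\|\mathcal{C}_{\hat g_T}\mathcal{G}^\dagger(a_c)-\mathcal{C}_{g_T}\mathcal{G}^\dagger(a_c)\bigr\|.
\end{align*}

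\textbf{The three terms, in order.}
\begin{enumerate}
\item For the first term, I use Lemma~\ref{lemma:Lipschitz_1} (the group action is Lipschitz with constant $K_C$, and is an isometry for translations and rotations on the torus) together with Lemma~\ref{lemma:Lipschitz_2} (the learned operator is $L_\theta$-Lipschitz). This bounds the term by $K_C L_\theta\|\tilde a-a_c\|=M_0\|\mathcal{C}^{-1}_{\hat g_0}\mathcal{C}_{g}a_c-a_c\|$. I define this residual, taking the supremum over the orbit, as $\epsilon_{\mathrm{geo},0}$.
\item For the middle term, the pushforward is an isometry for spectral shifts and rotations, so it equals $\|\mathcal{G}_\theta(a_c)-\mathcal{G}^\dagger(a_c)\|$. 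Since $a_c\in K_c$, Theorem~\ref{theorem:fno_universal} bounds this by $\epsilon$.
\item The last term is by definition the terminal alignment residual $\epsilon_{\mathrm{geo},T}=\sup\|(\mathcal{C}_{\hat g_T}-\mathcal{C}_{g_T})\mathcal{G}^\dagger(a_c)\|$. It is finite because $\mathcal{G}^\dagger(K_c)$ is compact and the action is continuous, which is Lemma~\ref{lemma:compactness}.
\end{enumerate}
Taking the supremum over $a$ then gives the stated bound.

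\textbf{Main obstacle.} The delicate step is bookkeeping which constant multiplies which residual. In the first term, $K_C$ is needed only if the pushforward $\mathcal{C}_{\hat g_T}$ is not an isometry. If instead $K_C$ should come from the input pullback, I would define $\epsilon_{\mathrm{geo},0}$ as a discrepancy $\|\hat g_0^{-1}g-e\|$ measured in the group. Lipschitz continuity of $h\mapsto\mathcal{C}_h a_c$ on the compact set, a constant I would absorb into $K_C$, then converts it to field distance. Either way the pushforward is isometric or bounded, so at most one factor of $K_C$ appears, matching $M_0=K_CL_\theta$.

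A second subtlety is that $a_c$ must lie in $K_c$ for the universal-approximation step. This is why I evaluate $\mathcal{G}_\theta$ at the true $a_c$ rather than at $\tilde a$: the latter may leave $K_c$, and all the misalignment is charged to the Lipschitz term instead.
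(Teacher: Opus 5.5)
Your proposal is correct and matches the paper's proof essentially step for step. It uses the same decomposition $a=\mathcal{C}_{g_0}(a_c)$ with $\mathcal{G}^\dagger(a)=\mathcal{C}_{g_T}\mathcal{G}^\dagger(a_c)$ and the same two intermediate points $\mathcal{C}_{\hat g_T}\mathcal{G}_\theta(a_c)$ and $\mathcal{C}_{\hat g_T}\mathcal{G}^\dagger(a_c)$. The three bounds are also the same: the Lipschitz pushforward times the Lipschitz FNO for the first term, isometry plus Theorem~\ref{theorem:fno_universal} for the second, and the definition of $\epsilon_{\mathrm{geo},T}$ for the third.

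Your residual $\|\mathcal{C}^{-1}_{\hat g_0}\mathcal{C}_{g_0}a_c-a_c\|$ coincides with the paper's $\epsilon_{\mathrm{geo},0}$ because $a_c=\mathcal{C}_{g_0}^{-1}(a)$. Your primary account of where $K_C$ enters, namely from the pushforward $\mathcal{C}_{\hat g_T}$, is exactly the paper's, so the alternative group-distance bookkeeping is not needed.
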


This is an analysis decomposition, not a distribution-free bound: for symmetry-induced OOD, PACE-FNO reduces prediction to canonical-set approximation when residual alignment errors are small. The proof is in Appendix~\ref{proof}.

Orbit averaging projects onto an equivariant subspace without increasing covering numbers \cite{puny2022frame, ma2024canonicalization, dym2024equivariant}; Appendix~\ref{app:generalization} gives the resulting bound.

\begin{theorem}[Symmetry-Induced Complexity Reduction for Canonicalized Operators]
\label{thm:symmetry_complexity_reduction}
Let $\mathcal{F}$ be a class of neural operators from $\mathcal{A}$ to $\mathcal{U}$, and let $G_B\subset G$ be a compact bounded symmetry group whose actions on $\mathcal{A}$ and $\mathcal{U}$ are bijective isometries. Let $\mathcal{F}_{G}$ be the $G_B$-equivariant projection of $\mathcal{F}$ under orbit averaging. Define
\(
d_\infty(F_1,F_2)
:=
\sup_{a\in\mathcal{A}_{\mathrm{OOD}}}
\|F_1(a)-F_2(a)\|_{\mathcal{U}}
\). 
The projection is non-expansive; hence, for any $r>0$,
\[
\mathcal{N}(\mathcal{F}_{G}, r, d_\infty)
\le
\mathcal{N}(\mathcal{F}, r, d_\infty).
\]
Thus the covering-number bound for $\mathcal{F}_{G}$ is no larger than that for the original class $\mathcal{F}$. Approximate equivariance is measured by the worst-case commutation residual
\[
\eta_G(F)
:=
\sup_{\substack{g\in G_B\\ a\in\mathcal{A}_{\mathrm{OOD}}}}
\left\|
(F\circ\mathcal{C}_g)(a)
-
(\mathcal{C}_g\circ F)(a)
\right\|_{\mathcal{U}}
\le
\eta .
\]
When the loss $\ell$ is $L_\ell$-Lipschitz and $\eta_G(F)\le\eta$, the generalization gap of $F$ exceeds that of its equivariant projection $\Pi_G F$ by at most $2L_\ell\eta$.
\end{theorem}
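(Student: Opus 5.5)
The plan is to define the orbit-averaging projection explicitly and prove the three claims in turn: non-expansiveness, the covering-number inequality, and the $2L_\ell\eta$ gap bound. Let $\nu$ be the normalized Haar measure on the compact group $G_B$. For $F\in\mathcal{F}$ set
\[
(\Pi_G F)(a) := \int_{G_B} \mathcal{C}_g^{-1}\big(F(\mathcal{C}_g a)\big)\,d\nu(g),
\]
a Bochner integral in $\mathcal{U}$, and let $\mathcal{F}_G=\Pi_G(\mathcal{F})$. I will use that $\mathcal{C}_g$ is a linear bijective isometry; this holds for the spectral translation and rotation actions on tori. I will also assume, as the statement implicitly does, that $\mathcal{A}_{\mathrm{OOD}}$ is $G_B$-invariant.

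\textbf{Equivariance and non-expansiveness.} Equivariance of $\Pi_G F$ follows from left-invariance of $\nu$ via the substitution $g\mapsto gh$. For non-expansiveness, fix $a\in\mathcal{A}_{\mathrm{OOD}}$. Pulling the norm inside the integral and using the isometry property gives
\[
\|\Pi_G F_1(a)-\Pi_G F_2(a)\|_{\mathcal{U}}\le \int_{G_B}\|F_1(\mathcal{C}_g a)-F_2(\mathcal{C}_g a)\|_{\mathcal{U}}\,d\nu(g)\le d_\infty(F_1,F_2),
\]
where the last step uses $\mathcal{C}_g a\in\mathcal{A}_{\mathrm{OOD}}$. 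Taking the supremum over $a$ yields $d_\infty(\Pi_G F_1,\Pi_G F_2)\le d_\infty(F_1,F_2)$.

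\textbf{Covering numbers.} Let $\{F_i\}$ be an $r$-cover of $\mathcal{F}$. Then $\{\Pi_G F_i\}$ is an $r$-cover of $\mathcal{F}_G$ with no more elements, since every element of $\mathcal{F}_G$ has the form $\Pi_G F$ with $d_\infty(F,F_i)\le r$ for some $i$, and non-expansiveness carries this to the image. If covers are required to be internal, the images $\Pi_G F_i$ already lie in $\mathcal{F}_G$, so no radius doubling is needed.

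\textbf{Generalization gap.} First compare $F$ with its projection. For each $a$ and $g$, the isometry of $\mathcal{C}_g^{-1}$ gives
\[
\|\mathcal{C}_g^{-1}F(\mathcal{C}_g a)-F(a)\|=\|F(\mathcal{C}_g a)-\mathcal{C}_g F(a)\|\le\eta_G(F)\le\eta .
\]
Averaging over $\nu$ then gives $d_\infty(\Pi_G F,F)\le\eta$. With $\ell$ being $L_\ell$-Lipschitz in the prediction, the pointwise loss difference is at most $L_\ell\eta$. Hence both the population risk and the empirical risk of $F$ and $\Pi_G F$ differ by at most $L_\ell\eta$ each, provided the data are supported in $\mathcal{A}_{\mathrm{OOD}}$. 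Writing the gap as population minus empirical risk, the triangle inequality gives
\[
\mathrm{gap}(F)\le \mathrm{gap}(\Pi_G F)+2L_\ell\eta .
\]

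The main obstacle is measure-theoretic rather than analytic. I need the Haar integral to be well defined: this requires $g\mapsto \mathcal{C}_g^{-1}F(\mathcal{C}_g a)$ to be Bochner measurable, which I would get from the continuity of the group action (Lemma~\ref{lemma:compactness}) together with continuity of $F$. I also need $G_B$ to be a genuine compact group, so that Haar measure exists and $\mathcal{A}_{\mathrm{OOD}}$ is closed under the action. For merely bounded subsets, such as Galilean boosts, I would interpret $G_B$ as a compact subgroup or restrict to the torus and rotation components, where this holds exactly.
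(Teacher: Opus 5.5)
Your proposal is correct and follows the paper's proof step for step: the same Haar-averaged projection $\Pi_G$, non-expansiveness from the isometry property together with $G_B$-invariance of $\mathcal{A}_{\mathrm{OOD}}$, pushing an internal $r$-cover through $\Pi_G$, and the bound $d_\infty(F,\Pi_G F)\le\eta$ giving an $L_\ell\eta$ deviation in each of the population and empirical risks. You are more explicit than the paper about its implicit assumptions (linearity of $\mathcal{C}_g$ to pull it through the integral, Bochner measurability, data supported in $\mathcal{A}_{\mathrm{OOD}}$); your only slip is that the substitution $g\mapsto gh$ uses right- rather than left-invariance of $\nu$, which is immaterial because Haar measure on a compact group is bi-invariant.
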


The one-shot estimate can have residual error near support boundaries or ambiguous canonical frames, consistent with spectral bias in neural networks \cite{rahaman2019spectral, xu2019frequency, xu2020neural}. Optional TTA refines only the low-dimensional input-frame generator during evaluation, with all network weights frozen \cite{wang2021tent, sun2020test, sinha2022test}. For $a_{\mathrm{OOD}}$, it minimizes
\begin{equation}
    \mathcal{J}_{\mathrm{TTA}}(\hat{\xi}_0)
    =
    \big\|
    \mathcal{P}_{\mathcal S}\circ\mathcal{E}_\phi\big(
    \mathcal{C}_{\exp(\hat{\xi}_0)}^{-1}(a_{\mathrm{OOD}})
    \big)
    \big\|_{\mathfrak g}^2,
    \label{eq:tta_main}
\end{equation}
using AdamW over $\hat{\xi}_0$, where $\mathcal{P}_{\mathcal S}$ selects spatial generator components and $\|\cdot\|_{\mathfrak g}$ is the Euclidean norm. If Eq.~\eqref{eq:tta_main} removes the residual transformation, the pulled-back input should appear canonical to the estimator, up to symmetry-equivalent minima. Freezing network weights avoids catastrophic forgetting \cite{kirkpatrick2017overcoming, french1999catastrophic}; the cost is slower inference, reported separately.

Under local regularity, the continuous-time gradient flow of Eq.~\eqref{eq:tta_main} decreases the energy and approaches the minima manifold, though finite-step AdamW may miss the best non-convex basin. Empirically, one-shot canonicalization accounts for most gains; TTA corrects residual mismatch. Proofs are in Appendix~\ref{proof}.

\section{Experiments}
\label{exper}
We evaluate PACE-FNO on Burgers, shallow-water (SWE), and Navier-Stokes (NS) equations on tori. Equations and cross-resolution protocols follow standard operator-learning evaluations \cite{li2020fourier, kovachki2023neural, takamoto2022pdebench, li2021markov}; the symmetry-induced OOD splits are constructed here. Tables report ID and OOD relative errors. Each OOD input is $a_{\mathrm{OOD}}=\mathcal{C}_g(a_c)$ with held-out canonical field $a_c$ and $g$ sampled beyond the training augmentation range. Baselines are FNO \cite{li2020fourier}, FNO with augmentation (FNO+Aug) \cite{wang2020incorporating}, and GFNO \cite{helwig2023group}; runtime and shift ranges are in Appendix~\ref{app:benchmarks}.

\subsection{Burgers equations}

The Burgers experiments isolate translation and Galilean drift. The 1-D equation is
\begin{equation*}
\begin{alignedat}{2}
    \partial_t u+\frac{1}{2}\partial_x(u^2)
    &= \nu\,\partial_{xx}u,
    &\qquad &(x,t)\in\mathbb{T}\times(0,T),\\
    u(x,0) &= u_0(x),
    &\qquad &x\in\mathbb{T},\\
    u(0,t) &= u(1,t),
    &\qquad &t\in(0,T).
\end{alignedat}
\end{equation*}
The corresponding OOD initial state is
\(
    u_{0,\mathrm{OOD}}(x)=u_0(x-\Delta x)+c
\).
Here $\Delta x$ changes the spatial frame, while $c$ induces the co-moving terminal correction.

Under the group shift, FNO has high error ($1.0352$), augmentation reduces it to $0.3222$, and one-shot PACE-FNO lowers it by about $12\times$ to $0.0267$. TTA gives only a marginal gain ($0.0259$) at much higher inference cost, and GFNO does not improve OOD error.

\begin{table}[htbp]
\centering
\caption{1-D Burgers under cross-resolution group shift (resolution $1024\!\to\!2048$).}
\label{tab:burgers_1d_main}
\small
\resizebox{\textwidth}{!}{%
\begin{tabular}{lcccc}
\toprule
Model & ID Err. $\downarrow$ & OOD Err. $\downarrow$ & Train (s/ep.) & OOD Inf. (s) \\
\midrule
FNO (no aug.)         & $0.0662 \pm 0.0028$ & $1.0352 \pm 0.0001$ & $0.4212 \pm 0.0095$ & $0.3943 \pm 0.0031$ \\
FNO+Aug               & $0.0293 \pm 0.0005$ & $0.3222 \pm 0.0077$ & $0.7698 \pm 0.0062$ & $1.2826 \pm 0.0340$ \\
PACE-FNO (one-shot)   & $0.0623 \pm 0.0015$ & $0.0267 \pm 0.0014$ & $0.4560 \pm 0.0030$ & $0.5287 \pm 0.0171$ \\
PACE-FNO + TTA        & $0.0623 \pm 0.0015$ & $\mathbf{0.0259 \pm 0.0007}$ & $0.4560 \pm 0.0030$ & $5.4611 \pm 0.1030$ \\
GFNO                  & $0.1406 \pm 0.0064$ & $1.0244 \pm 0.0027$ & $1.0365 \pm 0.0411$ & $1.0148 \pm 0.0091$ \\
\bottomrule
\end{tabular}%
}
\end{table}

The 2-D variant uses
\begin{equation*}
\begin{alignedat}{2}
    \partial_t u+\frac{1}{2}\partial_x(u^2)+\frac{1}{2}\partial_y(u^2)
    &= \nu(\partial_{xx}u+\partial_{yy}u),
    &\qquad &(x,y,t)\in\mathbb{T}^2\times(0,T),\\
    u(x,y,0) &= u_0(x,y),
    &\qquad &(x,y)\in\mathbb{T}^2 .
\end{alignedat}
\end{equation*}
with translated and boosted OOD inputs $u_{0,\mathrm{OOD}}(x,y)=u_0(x-t_x,y-t_y)+U_{\mathrm{bg}}$.

\begin{table}[htbp]
\centering
\caption{2-D Burgers under cross-resolution group shift (resolution $64^2\!\to\!128^2$).}
\label{tab:burgers_2d_main}
\scriptsize
\resizebox{\textwidth}{!}{%
\begin{tabular}{lcccc}
\toprule
Model & ID Err. $\downarrow$ & OOD Err. $\downarrow$ & Train (s/ep.) & OOD Inf. (s) \\
\midrule
FNO (no aug.)         & $0.0274 \pm 0.0010$ & $0.6137 \pm 0.0362$ & $0.6524 \pm 0.0157$ & $0.1244 \pm 0.0118$ \\
FNO+Aug               & $0.0261 \pm 0.0009$ & $0.2035 \pm 0.0423$ & $0.7559 \pm 0.0177$ & $0.1045 \pm 0.0038$ \\
PACE-FNO (one-shot)   & $0.0393 \pm 0.0012$ & $0.0755 \pm 0.0139$ & $1.5770 \pm 0.0066$ & $0.1399 \pm 0.0023$ \\
PACE-FNO + TTA        & $0.0393 \pm 0.0012$ & $\mathbf{0.0640 \pm 0.0012}$ & $1.5770 \pm 0.0066$ & $4.4345 \pm 0.0409$ \\
GFNO                  & $0.0967 \pm 0.0054$ & $1.0131 \pm 0.0119$ & $4.8613 \pm 0.3527$ & $0.7462 \pm 0.0080$ \\
\bottomrule
\end{tabular}%
}
\end{table}

\begin{figure}[htbp]
    \centering
    \includegraphics[width=0.8\textwidth]{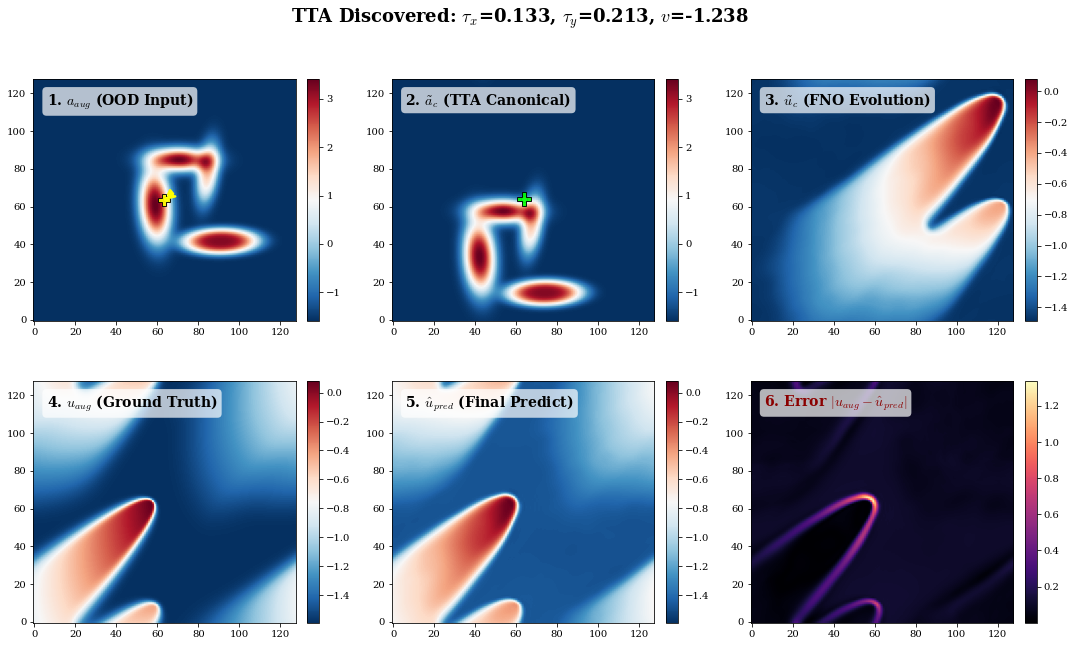}
    \caption{OOD prediction of PACE-FNO on 2-D Burgers under translation and Galilean drift (full comparison in Appendix Figure~\ref{fig:burgers_2d_ood_comparison}).}
    \label{fig:burgers_2d_ood_main}
\end{figure}

Table~\ref{tab:burgers_2d_main} shows that one-shot PACE-FNO reduces OOD error by $2.7\times$ relative to FNO+Aug ($0.076$ vs.\ $0.20$). TTA further reduces error to $0.064$ but adds $4.3$\,s over one-shot inference.

\subsection{2-D shallow-water equations}

The shallow-water system tests whether PACE-FNO separates wave evolution from transport in coupled conservative variables $(h,\mathbf m)$:
\begin{equation*}
\begin{alignedat}{2}
    \partial_t h+\nabla\!\cdot\!\mathbf m
    &= \nu\nabla^2 h,
    &\qquad &(\mathbf x,t)\in\mathbb{T}^2\times(0,T),\\
    \partial_t\mathbf m
    +\nabla\!\cdot\!\left(\frac{\mathbf m\otimes\mathbf m}{h}\right)
    +\nabla\!\left(\frac{1}{2}gh^2\right)
    &= \nu\nabla^2\mathbf m,
    &\qquad &(\mathbf x,t)\in\mathbb{T}^2\times(0,T),\\
    \mathbf U(\mathbf x,0)
    &=\mathbf U_0(\mathbf x),
    &\qquad &\mathbf x\in\mathbb{T}^2 .
\end{alignedat}
\end{equation*}

\begin{table}[htbp]
\centering
\caption{2-D shallow water under cross-resolution group shift (resolution $64^2\!\to\!128^2$).}
\label{tab:swe_2d_main}
\small
\resizebox{\textwidth}{!}{%
\begin{tabular}{lcccc}
\toprule
Model & ID Err. $\downarrow$ & OOD Err. $\downarrow$ & Train (s/ep.) & OOD Inf. (s) \\
\midrule
FNO (no aug.)         & $0.0827 \pm 0.0009$ & $0.7260 \pm 0.0003$ & $0.7024 \pm 0.0474$ & $0.1501 \pm 0.0068$ \\
FNO+Aug               & $0.0769 \pm 0.0010$ & $0.5136 \pm 0.0010$ & $0.8502 \pm 0.0082$ & $0.1503 \pm 0.0012$ \\
PACE-FNO (one-shot)   & $0.0491 \pm 0.0002$ & $\mathbf{0.1069 \pm 0.0168}$ & $2.6139 \pm 0.0046$ & $0.2952 \pm 0.0349$ \\
PACE-FNO + TTA        & $0.0491 \pm 0.0002$ & $0.1071 \pm 0.0168$ & $2.6139 \pm 0.0046$ & $8.1388 \pm 0.0855$ \\
GFNO                  & $0.0307 \pm 0.0007$ & $0.7593 \pm 0.0071$ & $6.7509 \pm 0.0355$ & $0.7161 \pm 0.0401$ \\
\bottomrule
\end{tabular}%
}
\end{table}

\begin{figure}[htbp]
    \centering
    \includegraphics[width=0.82\textwidth]{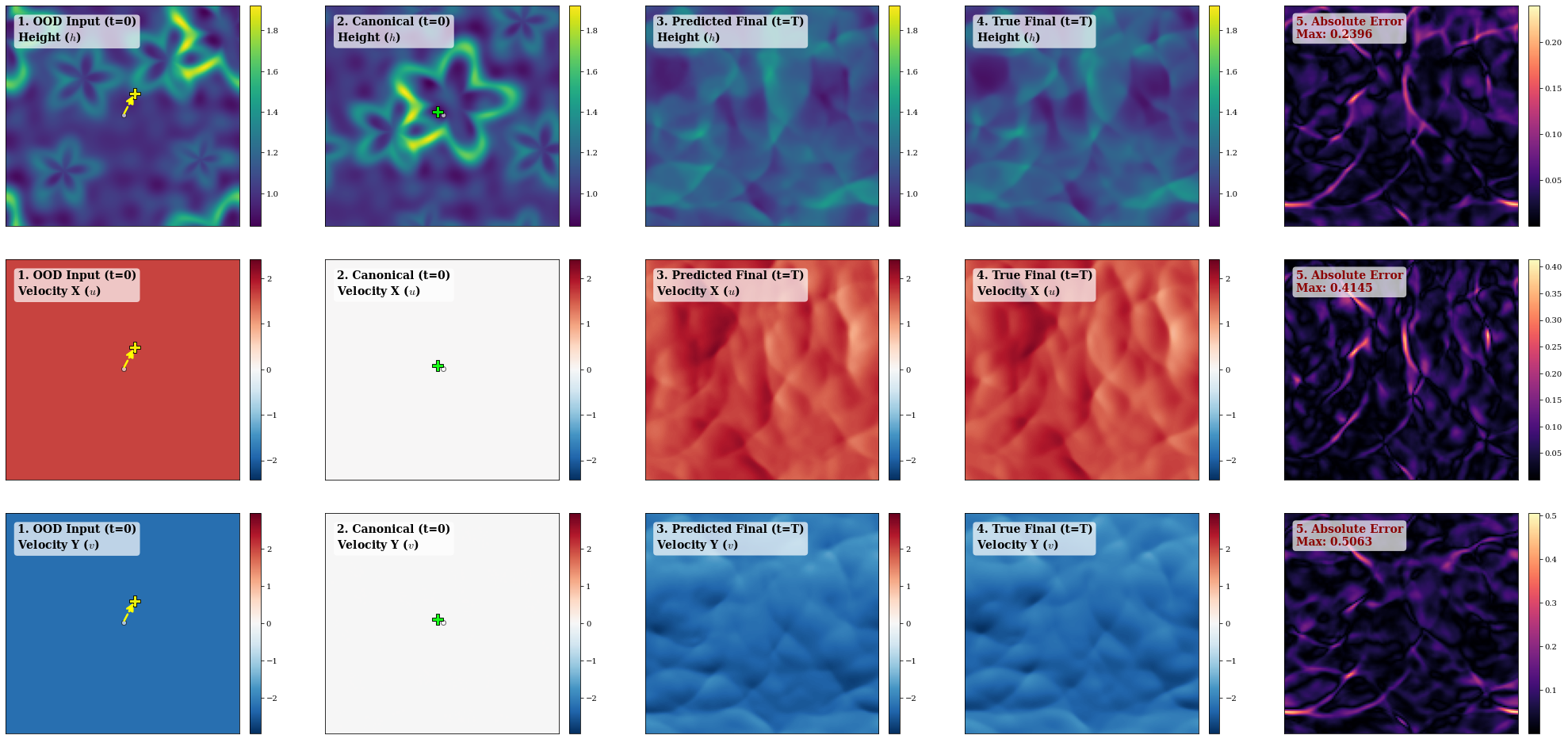}
    \caption{OOD prediction of PACE-FNO on the 2-D shallow-water system under Galilean perturbation (full comparison in Appendix Figure~\ref{fig:swe_ood_comparison}).}
    \label{fig:swe_2d_ood_main}
\end{figure}

OOD states apply sub-pixel translations and a Galilean background velocity. PACE-FNO estimates translation from the shifted height/momentum field and computes background velocity from conservative channels, reducing OOD error by $4.8\times$ relative to FNO+Aug ($0.107$ vs.\ $0.514$). TTA does not change the reported error at this precision.

\subsection{(2+1)-D spatiotemporal Navier-Stokes equations}

The Navier-Stokes experiment considers coupled translation and rotation shifts in spatiotemporal rollout:
\begin{equation*}
\begin{alignedat}{2}
    \partial_t\omega+\mathbf u\!\cdot\!\nabla\omega
    &= \nu\nabla^2\omega,
    &\qquad &(\mathbf x,t)\in\mathbb{T}^2\times(0,T_{\mathrm{total}}),\\
    \mathbf u &= (\partial_y\psi,-\partial_x\psi),
    \qquad \nabla^2\psi=-\omega,
    &\qquad &(\mathbf x,t)\in\mathbb{T}^2\times(0,T_{\mathrm{total}}),\\
    \omega(\mathbf x,0) &= \omega_0(\mathbf x),
    &\qquad &\mathbf x\in\mathbb{T}^2 .
\end{alignedat}
\end{equation*}

\begin{table}[htbp]
\centering
\caption{Spatiotemporal Navier-Stokes under cross-resolution group shift (resolution $64^2\!\to\!128^2$).}
\label{tab:ns_2d_main}
\scriptsize
\resizebox{\textwidth}{!}{%
\begin{tabular}{lcccc}
\toprule
Model & ID Err. $\downarrow$ & OOD Err. $\downarrow$ & Train (s/ep.) & OOD Inf. (s) \\
\midrule
FNO (no aug.)         & $0.0377 \pm 0.0043$ & $0.6422 \pm 0.0302$ & $10.6448 \pm 0.0392$ & $1.8308 \pm 0.6626$ \\
FNO+Aug               & $0.0266 \pm 0.0059$ & $0.2840 \pm 0.2001$ & $10.7371 \pm 0.0281$ & $1.9761 \pm 0.5997$ \\
PACE-FNO (one-shot)   & $0.0129 \pm 0.0006$ & $0.2766 \pm 0.0353$ & $16.5571 \pm 0.3631$ & $3.3728 \pm 0.1381$ \\
PACE-FNO + TTA        & $0.0129 \pm 0.0006$ & $\mathbf{0.2593 \pm 0.0522}$ & $16.5571 \pm 0.3631$ & $10.0376 \pm 0.0418$ \\
GFNO                  & $0.0459 \pm 0.0016$ & $0.4091 \pm 0.0118$ & $55.9359 \pm 0.0496$ & $9.2956 \pm 0.0585$ \\
\bottomrule
\end{tabular}%
}
\end{table}

\begin{figure}[htbp]
    \centering
    \includegraphics[width=0.98\textwidth]{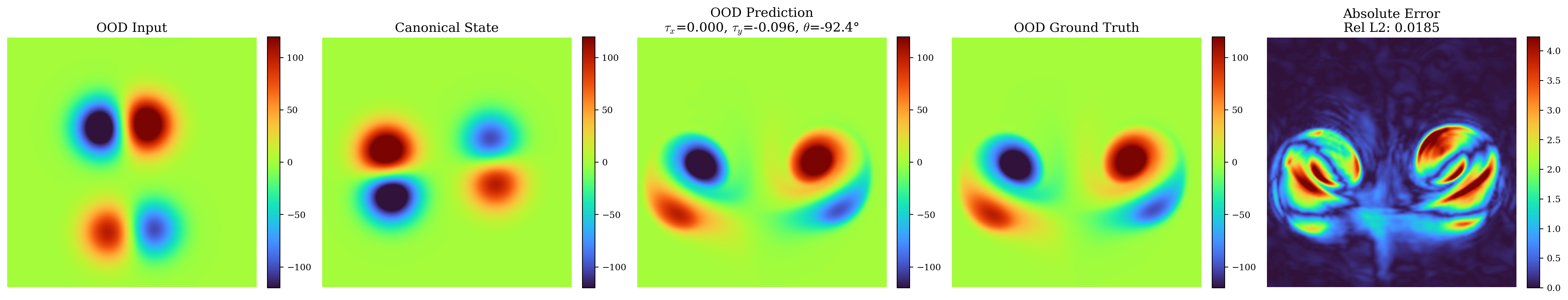}
    \caption{OOD prediction of PACE-FNO under $SE(2)$ perturbations for $(2+1)$-D Navier-Stokes (full comparison in Appendix Figure~\ref{fig:ns_3d_ood_visualization}).}
    \label{fig:ns_3d_ood_main}
\end{figure}

Each method observes $T_{\mathrm{in}}=10$ steps and predicts a $T=20$ rollout. OOD samples apply $SE(2)$ transformations to the whole trajectory, so the same rigid motion is removed and restored; here $\hat{\xi}_T=\hat{\xi}_0$. Table~\ref{tab:ns_2d_main} shows that TTA reduces OOD error from $0.277$ to $0.259$ ($6\%$ relative) at $3\times$ inference cost ($10.0$\,s vs.\ $3.4$\,s).

\subsection{Ablation and TTA sensitivity}
\label{sec:ablation_main}

Appendix Table~\ref{tab:ablation_pacefno_all} isolates PACE-FNO components. Removing canonicalization eliminates most OOD gains; the full Lie generator matters most for 2-D Burgers, SWE, and NS. TTA is stable over tested step counts and learning rates (Figure~\ref{fig:tta_sensitivity_ns_main} and Appendix Figure~\ref{fig:tta_sensitivity_errorbar_swe}). Figure~\ref{fig:ood_tta_landscape} shows a non-convex 2-D Burgers landscape with symmetry-induced low-energy basins, so the estimator need only produce an equivalent canonical field.

\begin{figure}[htbp]
    \centering
    \includegraphics[width=0.86\textwidth]{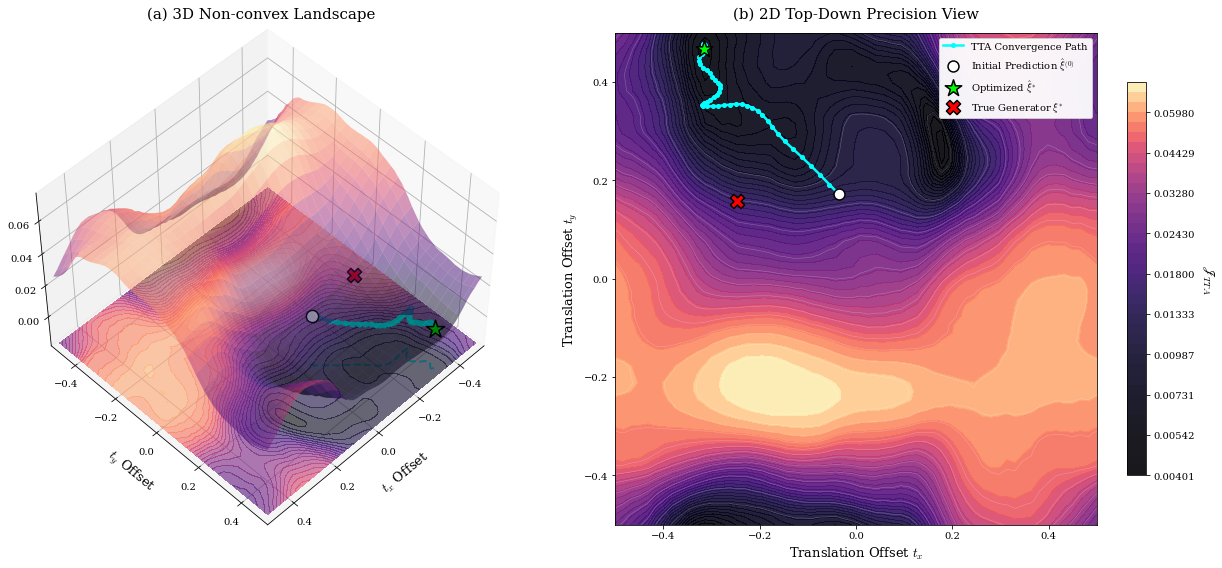}
    \caption{Empirical landscape of $\mathcal{J}_{\mathrm{TTA}}$ on 2-D Burgers. The landscape is non-convex with symmetry-induced low-energy basins, and descent from the estimator initialization converges to one valid basin.}
    \label{fig:ood_tta_landscape}
\end{figure}

\begin{figure}[htbp]
    \centering
    \begin{subfigure}[t]{0.44\linewidth}
        \centering
        \includegraphics[width=\linewidth]{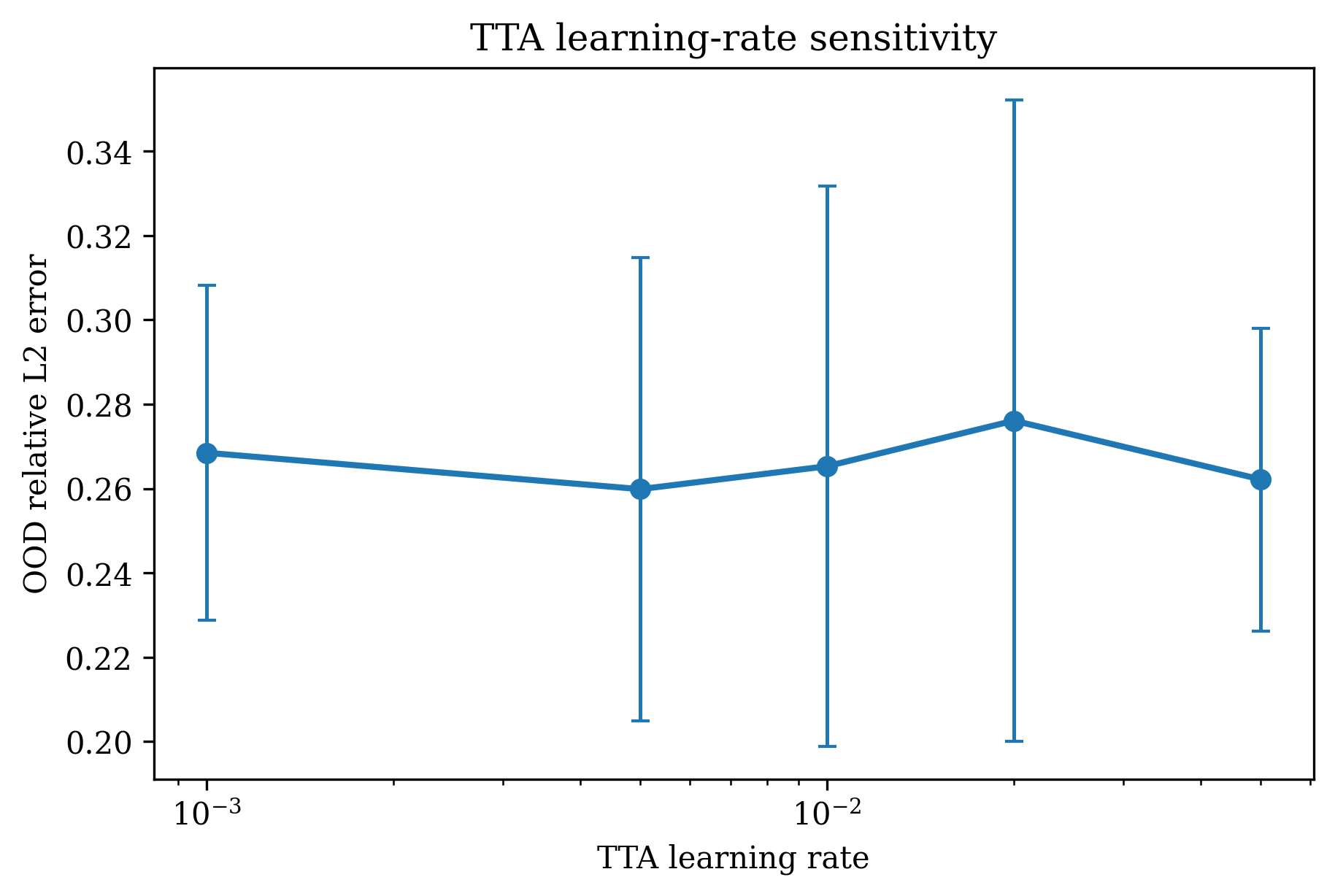}
    \end{subfigure}
    \hfill
    \begin{subfigure}[t]{0.44\linewidth}
        \centering
        \includegraphics[width=\linewidth]{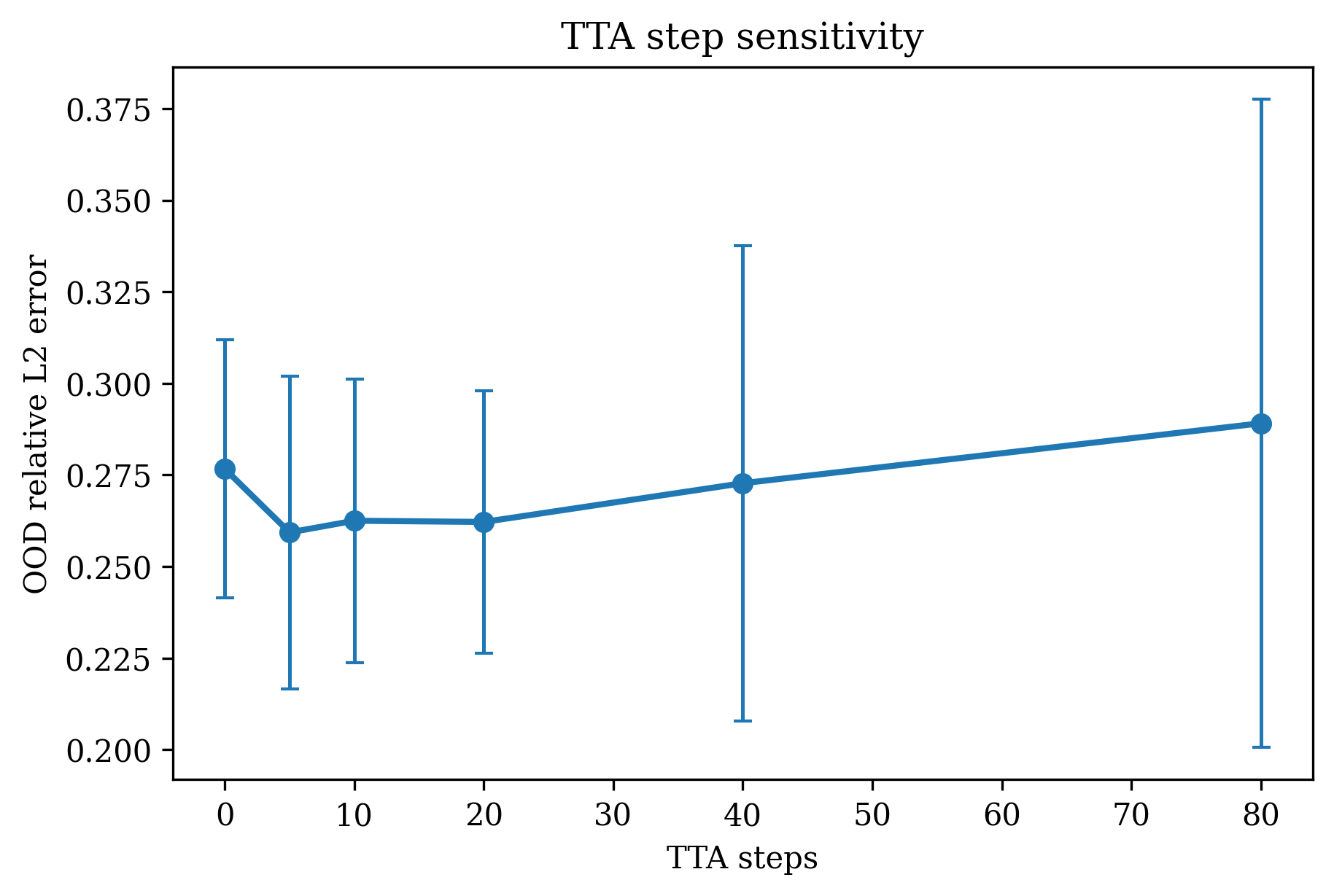}
    \end{subfigure}
    \caption{TTA sensitivity for the $(2+1)$-D Navier-Stokes equation. The error reduction is stable across the tested learning rates and step counts.}
    \label{fig:tta_sensitivity_ns_main}
\end{figure}

\section{Conclusions}
PACE-FNO separates frame alignment from PDE evolution by estimating input-frame symmetry coordinates, applying a standard FNO on a canonical slice, and mapping the output back to the target frame. The error decomposition makes approximation and alignment terms explicit. Across four equations on tori, one-shot canonicalization reduces OOD error by up to $12\times$ over FNO+Aug on translation-dominated shifts (Burgers, SWE) and by $3$--$9\%$ on coupled rotation-translation shifts (NS), while matching ID accuracy. TTA gives a smaller correction at higher cost.
\paragraph{Limitations.}
\label{limit}
The symmetry group is specified a priori, so approximate, emergent, or unknown symmetries are outside the current scope. Spectral group actions assume fields on a torus; irregular domains would need graph or point-cloud transport instead of Fourier shifts and shear rotations \cite{li2023geo, bonev2023spherical}. We also assume clean observations on regular grids; robustness to sensor noise, sparse measurements, and partial observability is not studied here.
\paragraph{Future work.}
A natural next step is learning approximate symmetries from data rather than specifying them in advance. Extending canonicalization to irregular domains via graph or spherical neural operators would broaden the approach beyond periodic boundary conditions. Multi-task canonicalization across PDE families and discrete symmetries such as reflection equivariance are also open directions.
\paragraph{Broader impact.}
\label{broader}
This work is foundational research. No negative social impact is identified.

%

\bibliographystyle{unsrtnat}
\bibliography{references}
\clearpage
\appendix

\section{Additional methodological details}
\label{app:method_details}

This section gives the definitions and training details used in the main text.

Let the problem-dependent continuous transformation group be denoted by
\[
G_B \subseteq \mathbb{T}^d \rtimes SO(d) \rtimes \mathrm{Gal},
\]
where inactive factors are dropped for a given PDE. When both spatial and kinematic components are present, the corresponding Lie algebra admits the decomposition
\[
\mathfrak{g} = \mathfrak{g}_{\mathcal S} \oplus \mathfrak{g}_{\mathcal K}.
\]

\paragraph{Kinematic reduction.}
Let $\boldsymbol{v} \in L^1(D;\mathbb{R}^d)$ denote the velocity field. The kinematic generator $\boldsymbol{v}_{\mathrm{bg}} \in \mathfrak{g}_{\mathcal K}$ and the centering operator $\mathcal{S}_{\boldsymbol v}:\mathcal{A}\to\mathcal{A}$ are defined by
\begin{equation*}
    \boldsymbol{v}_{\mathrm{bg}}
    =
    \mathcal{P}_{\mathrm{phys}}(a)
    :=
    \frac{1}{|D|}\int_D \boldsymbol{v}(x)\,dx,
    \qquad
    \mathcal{S}_{\boldsymbol v}(a)
    =
    a - \boldsymbol{v}_{\mathrm{bg}}.
\end{equation*}
This co-moving decomposition matches the usual interpretation of background transport and momentum removal \cite{pope2000turbulent, sommerfeld1952mechanics}.

\paragraph{Neural parameterization.}
To parameterize the geometric subalgebra $\mathfrak{g}_{\mathcal S}$ without singularities, we use a neural mapping
\[
\mathcal{M}_\phi : \mathcal{A} \to \mathbb{R}^{2k},
\]
followed by a smooth retraction $\mathrm{Ret}:\mathbb{R}^{2k}\to\mathfrak{g}_{\mathcal S}$, following the general idea of continuous latent parameterizations for rotational variables \cite{falorsi2018explorations, shumaylov2024lie}. The spatial generator is written as
\begin{equation*}
    \xi_{\mathcal S}
    =
    \mathrm{Ret}\!\left(
    \mathcal{M}_\phi(\mathcal{S}_{\boldsymbol v}(a))
    \right).
\end{equation*}
The full Lie-algebra coordinate estimate is then
\begin{equation*}
    \mathcal{E}_\phi(a)
    =
    \xi_{\mathcal S} \oplus \boldsymbol{v}_{\mathrm{bg}}.
\end{equation*}

\paragraph{Spectral group action.}
Given the full Lie algebra coordinate $\hat{\xi}\in\mathfrak g$, we obtain the corresponding group element through $\hat{g}=\exp(\hat{\xi})$. To avoid dissipation induced by interpolation, the group action is implemented spectrally. For translations, the continuous Fourier shift theorem gives \cite{bracewell1986fourier}
\begin{equation*}
    \mathcal{F}(\mathcal{T}_{\boldsymbol{\tau}_{\mathrm{trans}}}\circ a)(\mathbf{k})
    =
    \exp\!\big(-2\pi i \langle \mathbf{k}, \boldsymbol{\tau}_{\mathrm{trans}}\rangle\big)\hat{a}(\mathbf{k}).
\end{equation*}
For rotations, we use a three-shear factorization in the spectral domain \cite{paeth1990fast, unser1995convolution}:
\begin{equation*}
    \mathcal{F}(\mathcal{R}_\theta \circ a)
    =
    \left(
    \tilde{\mathcal{H}}_x(\alpha)
    \circ
    \tilde{\mathcal{H}}_y(\beta)
    \circ
    \tilde{\mathcal{H}}_x(\alpha)
    \right)\mathcal{F}(a).
\end{equation*}

\paragraph{Training-time perturbations.}
To expose the learned operator to controlled geometric shifts, we sample bounded perturbation generators $\xi_0$ from a restricted training support $\Omega_{\mathrm{train}}\subsetneq\Omega_{\mathrm{OOD}}$ and generate
\begin{equation*}
    a_{\mathrm{aug}}
    =
    \mathcal{C}_{\exp(\xi_0)}(a),
\end{equation*}
for canonical samples $a\sim\mu_c$, following the logic of restricted Lie-group augmentation \cite{brandstetter2022lie}. The same sampled generator defines the perturbed training target through the kinematic map $\xi_T=\Gamma_B(\xi_0;T)$. These sampled values are used only to construct training pairs and alignment losses; PACE-FNO does not receive them during in-distribution (ID) or out-of-distribution (OOD) evaluation.

\paragraph{Source of input and terminal generators.}
The estimator and the known kinematic relations play different roles. At inference, PACE-FNO receives only the observed field. The estimator infers the spatial frame, analytic reductions compute any kinematic background from the observed physical channels, and the terminal frame follows from the PDE symmetry relation. Table~\ref{tab:generator_sources} summarizes this separation.

\begin{table}[htbp]
\centering
\caption{Source of the generators used by PACE-FNO. Learned quantities come from $\mathcal{E}_\phi$ at inference; analytic quantities come from the observed field or the symmetry-induced transport rule. Sampled generators are used only for training perturbations and alignment losses.}
\label{tab:generator_sources}
\small
\begin{tabular}{
>{\raggedright\arraybackslash}p{0.13\textwidth}
>{\raggedright\arraybackslash}p{0.20\textwidth}
>{\raggedright\arraybackslash}p{0.31\textwidth}
>{\raggedright\arraybackslash}p{0.24\textwidth}}
\toprule
Test case & Learned at inference & Analytic at inference & Training-only sampled generator \\
\midrule
1-D Burgers
& Input translation phase $\hat{\delta}_x$
& Background offset $\hat v_{\rm bg}$ from the field mean; terminal shift $\hat{\delta}_{x,T}=\hat{\delta}_x+\hat v_{\rm bg}T$
& $\delta_x$, $v_{\rm bg}$ construct augmented pairs and alignment losses \\
2-D Burgers
& Input translation $\hat{\mathbf t}=(\hat t_x,\hat t_y)$
& Scalar background $\hat U_{\rm bg}$ from the field mean; terminal translation $\hat{\mathbf t}_T=\hat{\mathbf t}+(\hat U_{\rm bg}T,\hat U_{\rm bg}T)$
& $\mathbf t$, $U_{\rm bg}$ construct augmented pairs and alignment losses \\
2-D SWE
& Input translation $\hat{\Delta\mathbf x}$
& Background velocity $\hat{\mathbf V}_{\rm bg}$ from conservative channels, using the spatial mean of $\mathbf m/h$; terminal translation $\hat{\Delta\mathbf x}_T=\hat{\Delta\mathbf x}+\hat{\mathbf V}_{\rm bg}T$
& $\Delta\mathbf x$, $\mathbf V_{\rm bg}$ construct boosted momentum and alignment losses \\
(2+1)-D NS
& Input $SE(2)$ frame $(\hat\theta,\hat t_x,\hat t_y)$
& Rigid spatial symmetry is conserved, so $\hat{\xi}_T=\hat{\xi}_0$
& $(\theta,t_x,t_y)$ rotate/translate training trajectories and supervise alignment losses \\
\bottomrule
\end{tabular}
\end{table}

\paragraph{Detailed losses.}
The total training objective is
\begin{equation*}
    \mathcal{L}_{\mathrm{total}}(\theta,\phi)
    =
    \mathcal{L}_{\mathrm{PDE}}
    +
    \lambda_1 \mathcal{L}_{\mathrm{equiv}}
    +
    \lambda_2 \mathcal{L}_{\mathrm{anchor}}.
\end{equation*}
The physical term matches the perturbed target in both the field and gradient domains:
\begin{equation*}
\mathcal{L}_{\mathrm{PDE}}
=
\mathbb{E}
\left[
\frac{\|\hat{u}_{\mathrm{pred}}-u_{\mathrm{aug}}\|_{L^2}}
{\|u_{\mathrm{aug}}\|_{L^2}}
+
\gamma
\frac{
\||\mathbf{K}|\odot(\mathcal{F}(\hat{u}_{\mathrm{pred}})-\mathcal{F}(u_{\mathrm{aug}}))\|_{L^2}
}{
\||\mathbf{K}|\odot \mathcal{F}(u_{\mathrm{aug}})\|_{L^2}
}
\right].
\end{equation*}
The equivariance regularizer is
\begin{equation*}
    \mathcal{L}_{\mathrm{equiv}}
    =
    \mathbb{E}
    \big[
    \|\mathbf{M}(\hat{\xi}_0)-\mathbf{M}(\xi_0)\|_F^2
    \big],
\end{equation*}
and the anchor loss is
\begin{equation*}
    \mathcal{L}_{\mathrm{anchor}}
    =
    \mathbb{E}
    \big[
    \|\hat{\xi}_0-\xi_0\|_{\mathfrak g}^2
    \big].
\end{equation*}

We balance $\lambda_1$ and $\lambda_2$ so that the geometric terms stay comparable to the physical gradient, following gradient-balancing heuristics \cite{wang2021understanding, chen2018gradnorm}. The anchor weight is gradually decayed during training.

\section{Additional experimental details and results}
\label{app:details}
\label{app:benchmarks}
\label{app:full_results}

We report the problem construction and additional results here. All problem settings follow a cross-resolution operator-learning protocol: methods are trained on canonical low-resolution data and evaluated on geometrically shifted OOD data at inference. The equations are standard in operator-learning evaluation \cite{li2020fourier, kovachki2023neural, takamoto2022pdebench}; the symmetry-induced OOD splits are new to this work.

\subsection{Implementation and reproducibility details}
\label{app:reproducibility}

All reported numbers use three random seeds, $\{42,1234,3407\}$, with Python, NumPy, and PyTorch randomness reset before constructing each data loader. Tables report mean and standard deviation across seeds. Runtime entries are wall-clock training time per epoch and one full OOD evaluation pass, including iterative updates when TTA is enabled. All experiments ran on a single NVIDIA GeForce RTX 4080 GPU with 32GB memory, driver 580.105.08, and CUDA 13.0. Total training compute can be estimated from the listed epochs, per-epoch time, and three seeds. The FNO+Aug baselines use the same FNO architecture and optimizer settings as the corresponding FNO baselines and differ only in the training/evaluation data construction.

The implementation and reproduction scripts are available at \url{https://github.com/emoixiao/PACE-FNO}.

\paragraph{Code and asset provenance.}
The supplementary package contains author-written training, evaluation, and plotting code for the experiments in this paper, together with configuration files and figure assets generated from those runs. We do not redistribute third-party datasets, pretrained checkpoints, or external codebases as part of the public release. External software dependencies and benchmark sources remain under their respective original licenses and terms of use; these dependencies are cited in the paper and used only to the extent required for reproduction.

\begin{table}[htbp]
\centering
\caption{Core training hyperparameters used by the experimental scripts. ``Batch'' denotes train/OOD-evaluation batch size. For GFNO on NS we use time modes $10$ in addition to spatial modes $10$. For SWE, the FNO-model weight decay is $10^{-4}$ for FNO+Aug/PACE-FNO and $10^{-3}$ for FNO without augmentation and GFNO, matching the experiment configuration.}
\label{tab:repro_core_hparams}
\small
\resizebox{\textwidth}{!}{%
\begin{tabular}{llcccccc}
\toprule
Test case & Models & Train/Test & ID Grid & OOD Grid & Batch & Epochs & Optimizer, Scheduler, Architecture \\
\midrule
1-D Burgers
& FNO, FNO+Aug, GFNO, PACE-FNO
& 1000/200
& $1024$
& $2048$
& $20/1$
& 500
& Adam, lr $10^{-3}$, wd $10^{-4}$; StepLR $(50,0.5)$; modes $20$, width $64$ \\
2-D Burgers
& FNO, FNO+Aug, GFNO, PACE-FNO
& 1000/200
& $64^2$
& $128^2$
& $20/10$
& 500
& Adam, lr $10^{-3}$, wd $10^{-4}$; StepLR $(100,0.5)$; modes $12$, width $32$ \\
2-D SWE
& FNO, FNO+Aug, GFNO, PACE-FNO
& 1000/200
& $64^2$
& $128^2$
& $20/10$
& 500
& Adam, lr $10^{-3}$; StepLR $(100,0.2)$; modes $16$, width $32$ \\
(2+1)-D NS
& FNO, FNO+Aug, GFNO, PACE-FNO
& 1000/200
& $64^2$
& $128^2$
& $10/10$
& 100
& Adam, lr $2.5{\times}10^{-3}$, wd $10^{-4}$; StepLR $(100,0.5)$; spatial modes $10$, width $20$ \\
\bottomrule
\end{tabular}
}
\end{table}

\begin{table}[htbp]
\centering
\caption{PACE-FNO specific losses and TTA settings. One-shot PACE-FNO uses the same trained checkpoint and sets the number of TTA steps to zero. The Lie-algebra coordinate estimator parameter group uses the same learning rate as the FNO predictor and no additional weight decay unless explicitly listed.}
\label{tab:pace_repro_hparams}
\scriptsize
\setlength{\tabcolsep}{4pt}
\renewcommand{\arraystretch}{1.08}
\resizebox{\textwidth}{!}{%
\begin{tabular}{llll}
\toprule
Test case & PACE-FNO Training Loss & TTA Variables and Optimizer & TTA Steps / LR \\
\midrule
1-D Burgers
& Split $L^2$/$H^1$ loss with EMA $H^1$ weight ($\alpha=0.9$) plus $10\,\mathcal L_{\rm equiv}$
& Translation phase, Adam
& 20 / 0.05 \\
2-D Burgers
& $\mathcal L_{2}+5\mathcal L_{H^1}+1\mathcal L_{\rm equiv}+5\mathcal L_{\rm anchor}$
& $(t_x,t_y)$, Adam
& 50 / 0.01 \\
2-D SWE
& Channel-weighted PDE loss plus scheduled $H^1$, $10\mathcal L_{\rm equiv}$, $5\mathcal L_{\rm anchor}$, and $0.5$ invariant-height loss
& $(t_x,t_y)$, AdamW with wd $10^{-4}$
& 80 / 0.05 \\
(2+1)-D NS
& $\mathcal L_{2}+5\mathcal L_{H^1}+1\mathcal L_{\rm equiv}+\lambda_{\rm anc}(e)\mathcal L_{\rm anchor}$, $\lambda_{\rm anc}(e)=20(0.85)^e+0.1$
& $(\theta,t_x,t_y)$, Adam
& 5 / 0.05 \\
\bottomrule
\end{tabular}
}
\end{table}

\subsection{1-D Burgers equation}

This appendix only records the data-generation and adaptation details not shown in the main text. Canonical ID initial fields are sampled from a Gaussian random field, while OOD samples are generated by
\[
u_{0,\mathrm{OOD}}(x)=u_0(x-\Delta x)+c,
\]
which combines spatial translation $\Delta x$ and a Galilean velocity offset $c$. This follows the restricted Lie-group augmentation framework \cite{brandstetter2022lie}.

\textbf{Data Augmentation via Lie Group Actions.} To test how much standard neural operators can learn from transformed examples alone, we sample measure-preserving perturbations from the Galilean group $G = \mathbb{T}^1 \rtimes \mathrm{Gal}$. For a canonical initial field $u_0(x)$, the augmented state at $t=0$ is generated by the pushforward action of $g_0 = \exp(\xi_0)$:
\begin{equation*}
    u_{\mathrm{aug}}(x, 0) = u_0(x - \delta_x) + v_{\mathrm{bg}}
\end{equation*}
where the initial Lie algebra generator $\xi_0 \in \mathfrak{g}$ is defined by the superposition of the spatial translation and velocity boost differential operators:
\begin{equation*}
    \xi_0 = -\delta_x \frac{\partial}{\partial x} + v_{\mathrm{bg}} \mathcal{I}_u
\end{equation*}
Here, $\delta_x \sim \mathcal{U}(-0.1, 0.1)$ parameterizes the spatial translation generator within the geometric subalgebra $\mathfrak{g}_{\mathcal{S}}$, $\mathcal{I}_u$ denotes the identity shift operator on the codomain, and $v_{\mathrm{bg}} \sim \mathcal{U}(-0.2, 0.2)$ represents the Galilean kinematic velocity boost within $\mathfrak{g}_{\mathcal{K}}$.

To maintain Galilean invariance during physical evolution, the target state at time $T$ is shifted by the induced kinematic convection. The target group element evolves analytically as $g_T = \exp(\xi_0 + \Delta x \cdot \frac{\partial}{\partial x})$, where the spatial translation increment is $\Delta x = v_{\mathrm{bg}} T$.

\textbf{Unsupervised Energy Functional via Canonical Self-Consistency.} After canonicalization, the trained estimator $\mathcal{E}_\phi$ should produce a near-zero residual. We define the residual phase shift as $\epsilon_{\mathrm{res}} = \mathcal{P}_{\mathcal{S}} \circ \mathcal{E}_\phi \big( \mathcal{C}_{\exp(\hat{\xi}_{\mathcal{S}})}^{-1}(a_{\mathrm{OOD}}) \big)$, where $\mathcal{P}_{\mathcal{S}}$ projects onto the spatial translation subspace. The TTA loss is the batch mean squared residual:
\begin{equation*}
    \mathcal{J}_{\mathrm{TTA}}(\hat{\xi}_{\mathcal{S}}) = \frac{1}{B} \sum_{i=1}^B \left( \epsilon_{\mathrm{res}}^{(i)} \right)^2 = \frac{1}{B} \sum_{i=1}^B \left( \mathcal{P}_{\mathcal{S}} \circ \mathcal{E}_\phi \Big( \mathcal{C}_{\exp(\hat{\xi}_{\mathcal{S}}^{(i)})}^{-1}(a_{\mathrm{OOD}}^{(i)}) \Big) \right)^2
\end{equation*}
where $B$ denotes the batch size.

\begin{figure}[htbp]
        \centering
        \includegraphics[width=0.74\textwidth]{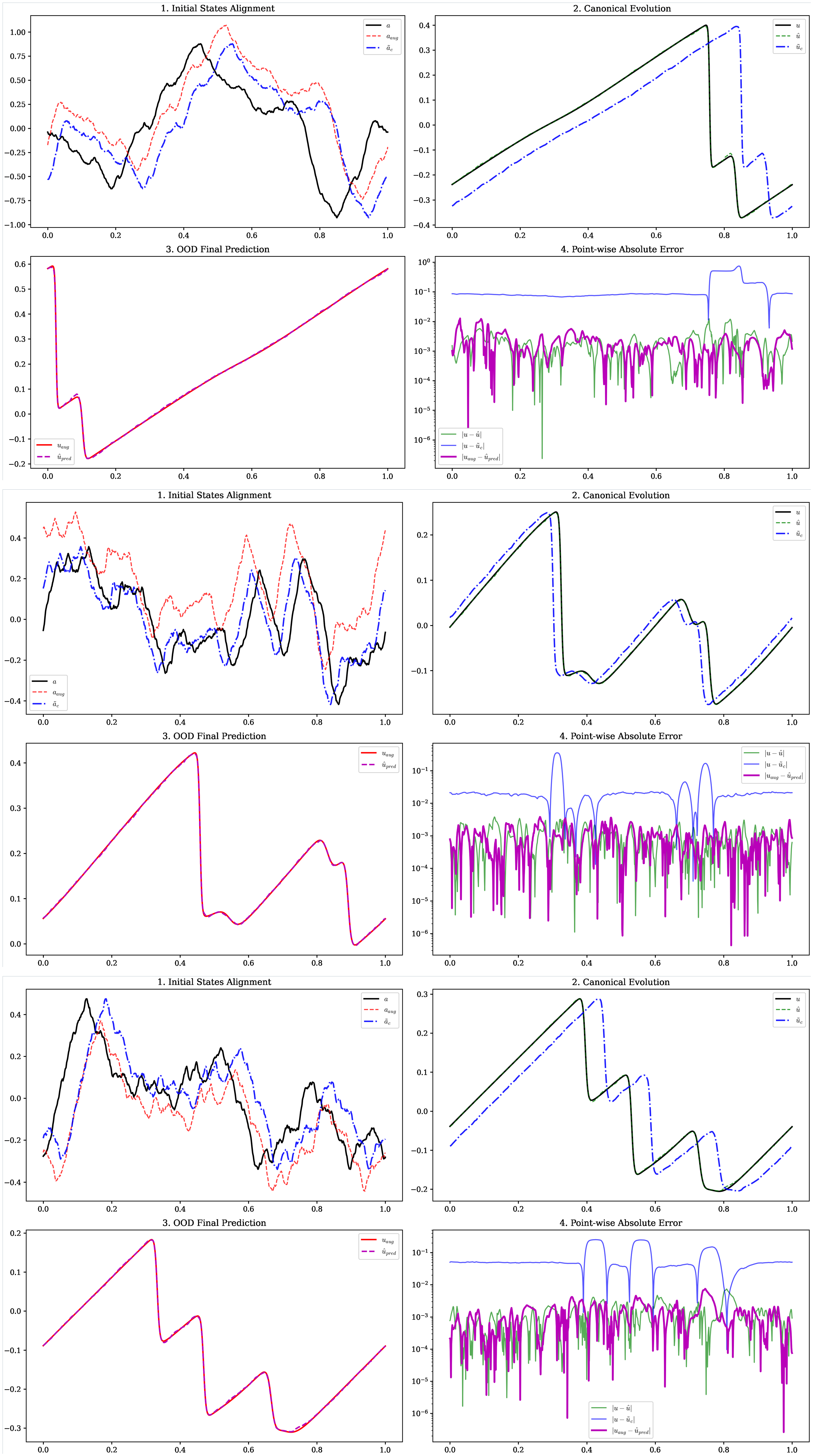}
        \caption{Canonicalization on 1-D Burgers. The physical panels show the translated and boosted input being pulled back toward the canonical frame; the latent panels show that the estimator residual is small after alignment.}
        \label{fig:1-D Burgers ID}
    \end{figure}

\begin{figure}[htbp]
        \centering
        \begin{subfigure}[t]{0.74\textwidth}
            \centering
            \includegraphics[width=\linewidth]{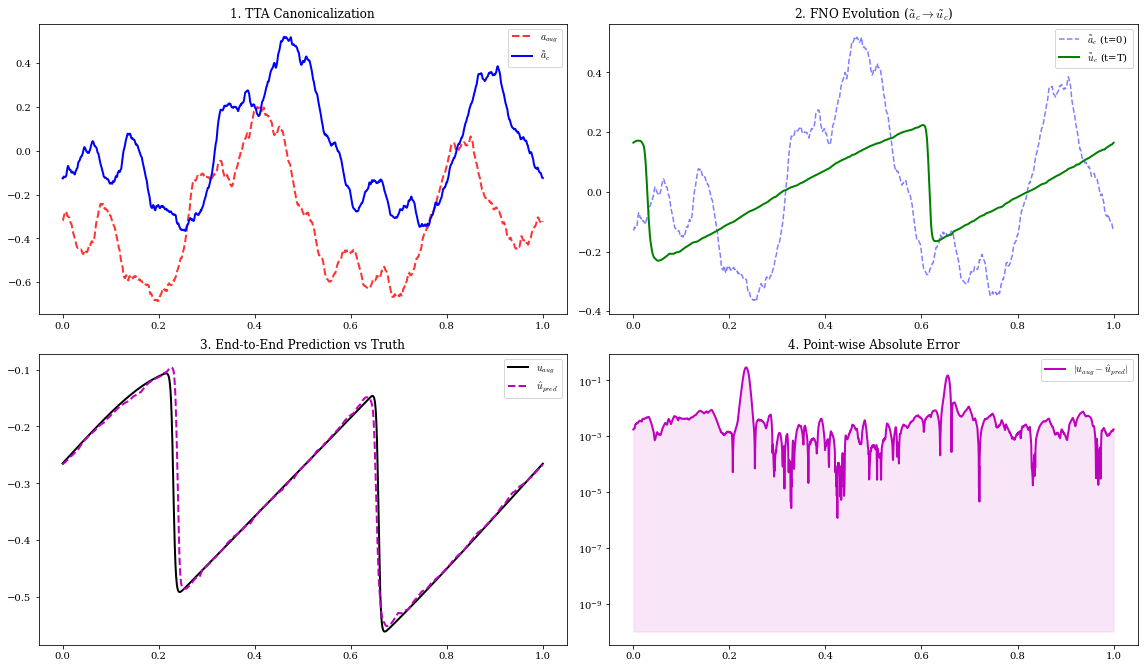}
            \subcaption{PACE-FNO.}
        \end{subfigure}

        \vspace{0.35em}

        \begin{subfigure}[t]{0.74\textwidth}
            \centering
            \includegraphics[width=\linewidth]{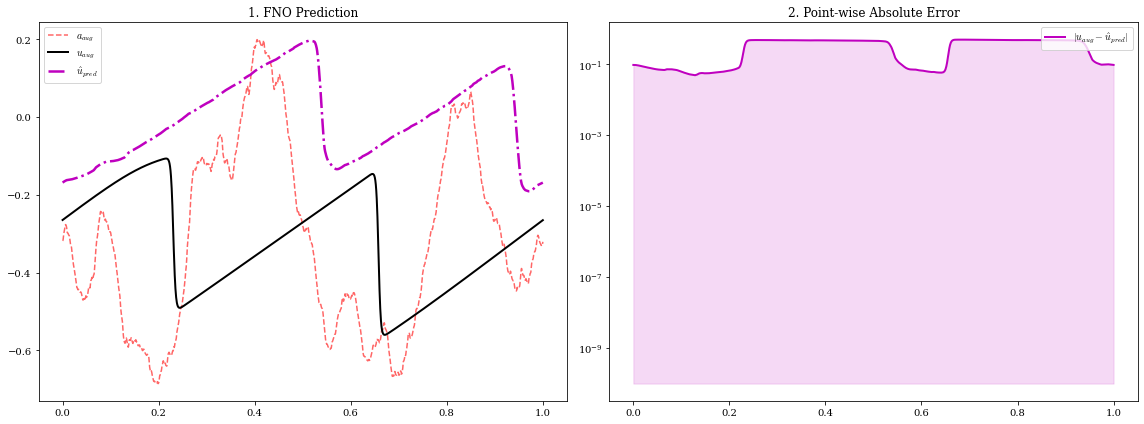}
            \subcaption{FNO.}
        \end{subfigure}

        \vspace{0.35em}

        \begin{subfigure}[t]{0.74\textwidth}
            \centering
            \includegraphics[width=\linewidth]{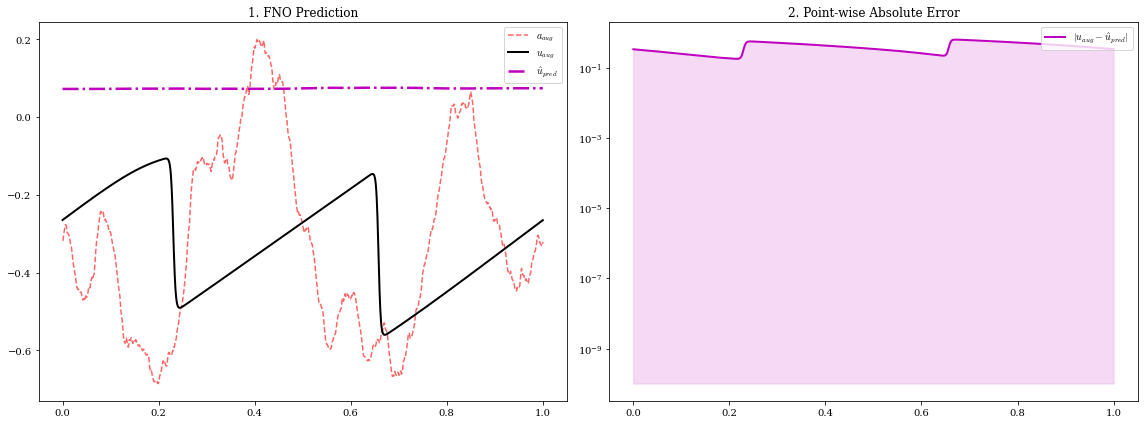}
            \subcaption{FNO+Aug.}
        \end{subfigure}

        \vspace{0.35em}

        \begin{subfigure}[t]{0.74\textwidth}
            \centering
            \includegraphics[width=\linewidth]{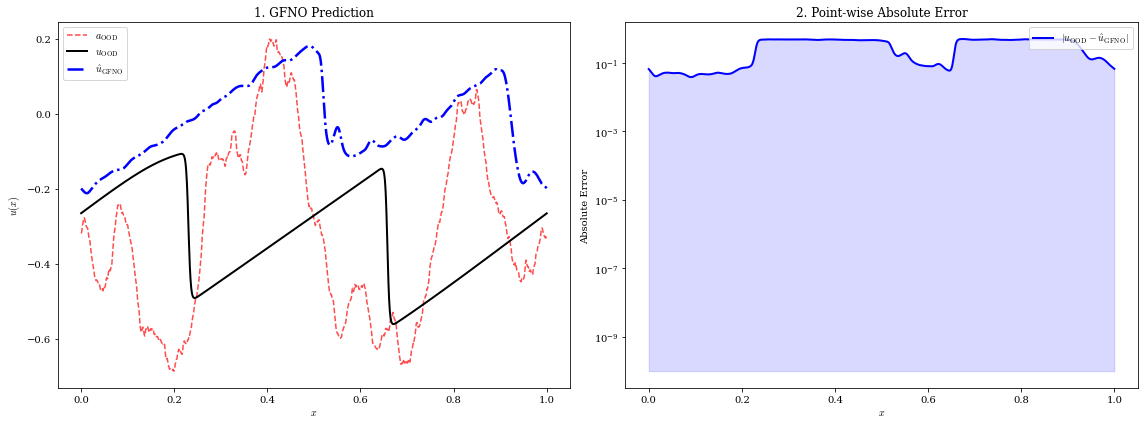}
            \subcaption{GFNO.}
        \end{subfigure}
        \caption{OOD prediction on 1-D Burgers under translation and Galilean drift. PACE-FNO aligns the wave location and amplitude with the target, while the baselines show frame mismatch or phase error under the same shifted input.}
    \end{figure}
\FloatBarrier

\subsection{2-D Burgers equation}

For the two-dimensional Burgers setting, canonical states are centered and aligned, while OOD samples follow the same form as in the main text,
\[
u_{0,\mathrm{OOD}}(x,y)=u_0(x-t_x,y-t_y)+U_{\mathrm{bg}},
\]
with translations $(t_x,t_y)$ and background flow $U_{\mathrm{bg}}$.

\textbf{Data Augmentation via Two-Dimensional Lie Group Actions.}
For the 2-D setting, we sample measure-preserving perturbations from the Galilean group $G = \mathbb{T}^2 \rtimes \mathrm{Gal}$. The corresponding initial Lie algebra generator $\xi_0 \in \mathfrak{g}$ is defined by the superposition of the two-dimensional spatial translation and velocity boost differential operators:
\begin{equation*}
    \xi_0 = -t_x \frac{\partial}{\partial x} - t_y \frac{\partial}{\partial y} + U_{\mathrm{bg}} \mathcal{I}_u
\end{equation*}
Here, the translation vector $\mathbf{t} = (t_x, t_y) \sim \mathcal{U}(-0.1, 0.1)^2$ parameterizes the spatial generators within the geometric subalgebra $\mathfrak{g}_{\mathcal{S}}$, $\mathcal{I}_u$ denotes the identity shift operator on the codomain, and $U_{\mathrm{bg}} \sim \mathcal{U}(-0.5, 0.5)$ represents the Galilean velocity boost within $\mathfrak{g}_{\mathcal{K}}$. These shifts violate the mass and amplitude constraints of the in-distribution dataset.

To preserve Galilean invariance during spatiotemporal evolution, the target state at time $T$ undergoes kinematic convection. The target group element evolves analytically as $g_T = \exp\big(\xi_0 + \Delta \mathbf{x} \cdot \nabla \big)$, where the translation increment vector is driven by the background flow such that $\Delta \mathbf{x} = (U_{\mathrm{bg}} T, U_{\mathrm{bg}} T)$.

\textbf{Unsupervised Energy Functional via Two-Dimensional Canonical Self-Consistency.} The estimator applied to the canonicalized state should return near-zero residual shifts when alignment succeeds. We define the residual two-dimensional phase shift vector as $\boldsymbol{\epsilon}_{\mathrm{res}} = (\epsilon_{\mathrm{res}, x}, \epsilon_{\mathrm{res}, y})$. The adaptation objective during evaluation is the empirical mean squared error of these two-dimensional phase residuals over a given batch:
\begin{equation*}
    \mathcal{J}_{\mathrm{TTA}}(\hat{\xi}_{\mathcal{S}}) = \frac{1}{B} \sum_{i=1}^B \left( (\epsilon_{\mathrm{res}, x}^{(i)})^2 + (\epsilon_{\mathrm{res}, y}^{(i)})^2 \right)
\end{equation*}
where $B$ denotes the batch size.

\begin{figure}[htbp]
    \centering
    \includegraphics[width=0.98\textwidth]{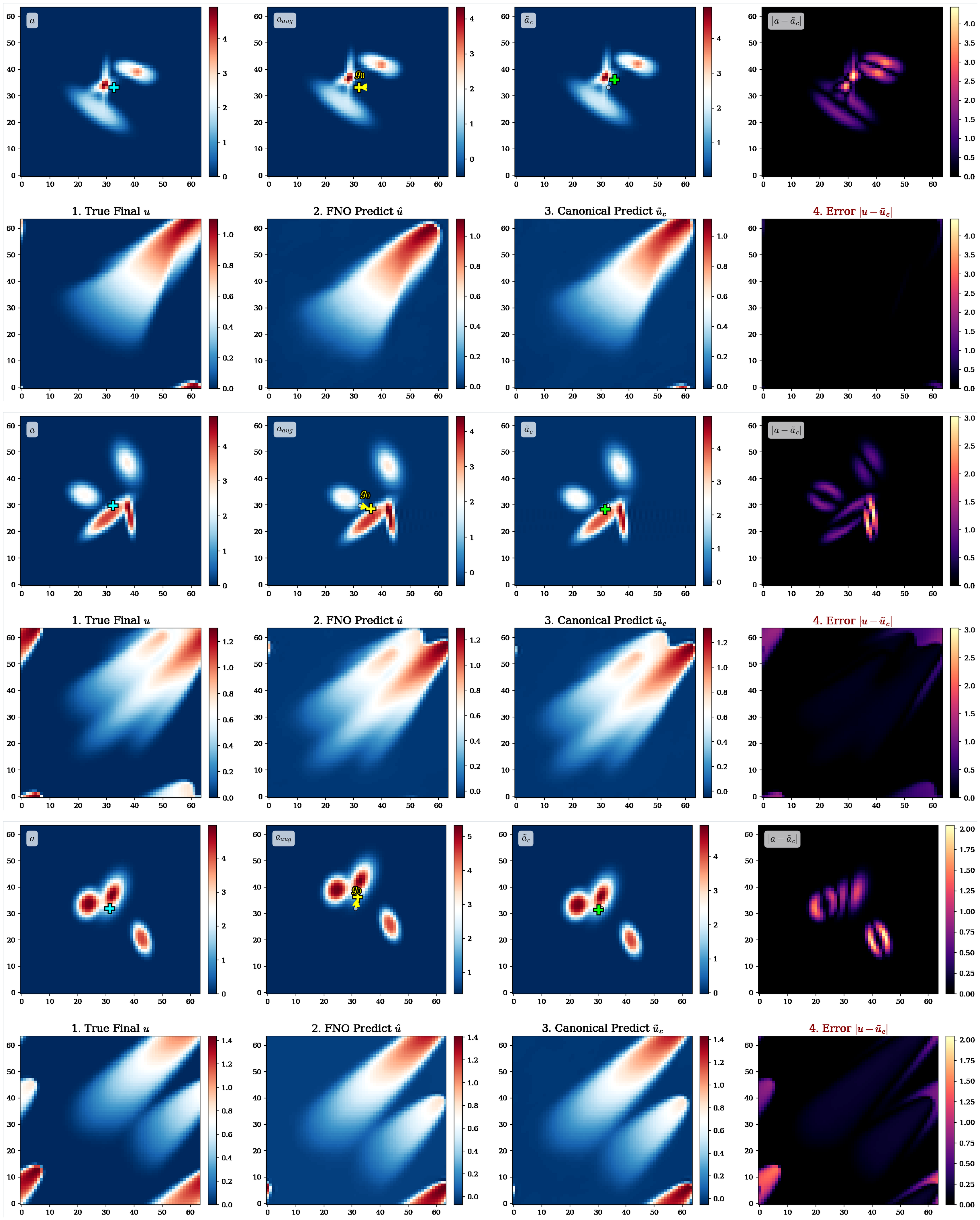}
    \caption{Canonicalization on 2-D Burgers. The learned pullback removes the dominant spatial shift and background-flow component before the FNO rollout, leaving a canonical field whose latent residual is close to the ID frame.}
    \label{fig:2-D Burgers ID}
\end{figure}

\begin{figure*}[htbp]
    \centering
    \begin{subfigure}[t]{0.84\textwidth}
        \centering
        \includegraphics[width=\linewidth]{images/PACE-FNO_2d_burgers_OOD.png}
        \subcaption{PACE-FNO.}
    \end{subfigure}

    \vspace{0.45em}

    \begin{subfigure}[t]{0.84\textwidth}
        \centering
        \includegraphics[width=\linewidth]{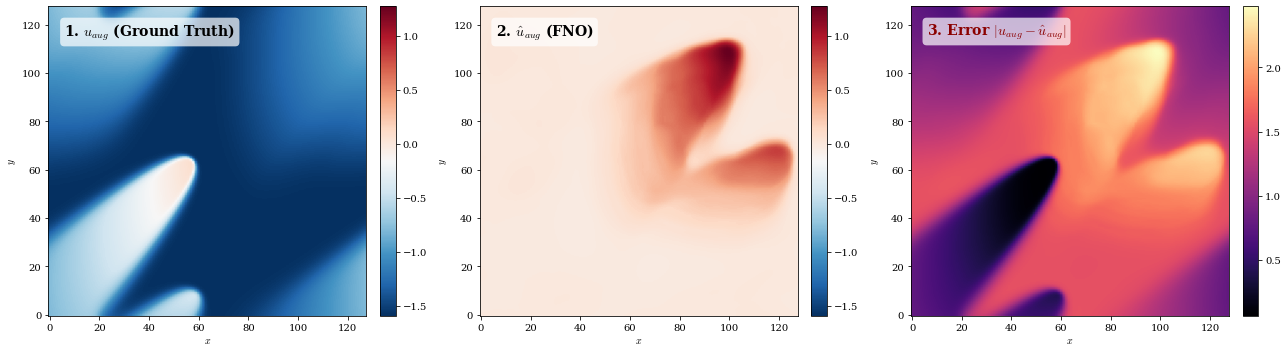}
        \subcaption{FNO.}
    \end{subfigure}

    \vspace{0.45em}

    \begin{subfigure}[t]{0.84\textwidth}
        \centering
        \includegraphics[width=\linewidth]{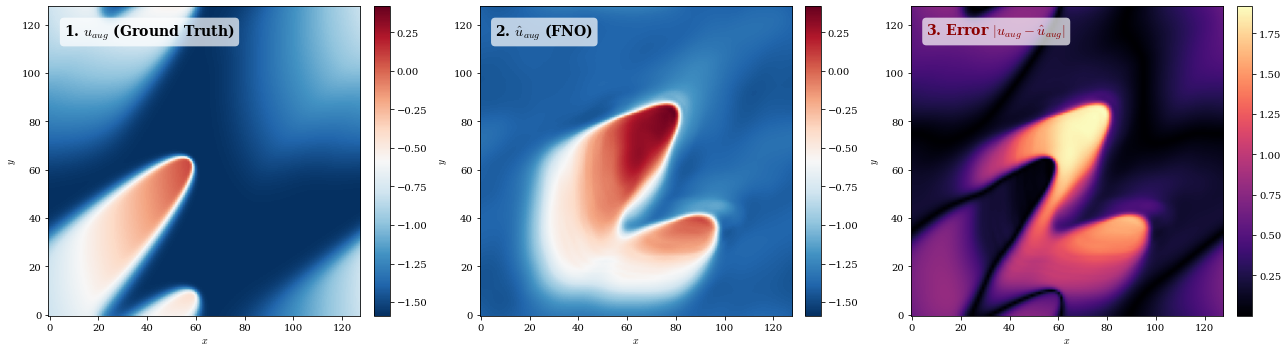}
        \subcaption{FNO+Aug.}
    \end{subfigure}

    \vspace{0.45em}

    \begin{subfigure}[t]{0.84\textwidth}
        \centering
        \includegraphics[width=\linewidth]{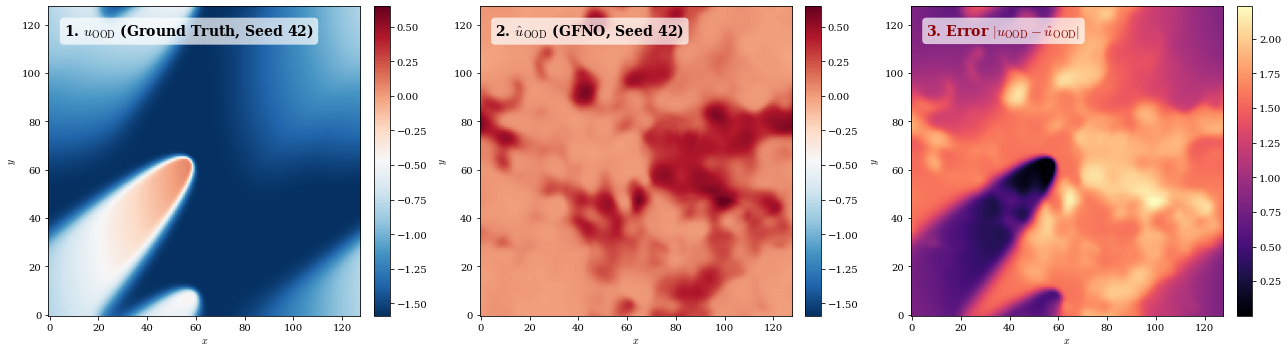}
        \subcaption{GFNO.}
    \end{subfigure}
    \caption{OOD prediction on 2-D Burgers under translation and Galilean drift. PACE-FNO more accurately preserves the location and shape of the advected structures, while FNO, FNO+Aug, and GFNO retain visible frame errors under the same OOD shift.}
    \label{fig:burgers_2d_ood_comparison}
\end{figure*}

\FloatBarrier
\subsection{2-D shallow-water equations}

For the shallow-water setting, canonical conditions enforce zero macroscopic background velocity and centered mass distribution. OOD states are generated by
\[
\mathbf{U}_{0,\mathrm{OOD}}(\mathbf{x})
=
\mathcal{B}_{\mathbf{V}_{\mathrm{bg}}}\mathbf{U}_0(\mathbf{x}-\Delta\mathbf{x}),
\]
which combines sub-pixel translation and a Galilean background flow. For the conservative state $\mathbf{U}=[h,\mathbf{m}]^\top$, the boost operator leaves height unchanged and shifts momentum by the transported background velocity, $\mathcal{B}_{\mathbf V}[h,\mathbf m]^\top=[h,\mathbf m+h\mathbf V]^\top$.

\textbf{Data Augmentation via Two-Dimensional Lie Group Actions.}
For shallow-water dynamics, we use the same Galilean group $G = \mathbb{T}^2 \rtimes \mathrm{Gal}$ to perturb the initial vector field. The pushforward action of $g_0 = \exp(\xi_0)$ is the boost-and-translation map above, with initial Lie algebra generator
\begin{equation*}
    \xi_0 = -t_x \frac{\partial}{\partial x} - t_y \frac{\partial}{\partial y} + \mathbf{V}_{\mathrm{bg}} \mathcal{I}_{\mathbf{m}}
\end{equation*}
Here, the translation vector $\Delta \mathbf{x} = (t_x, t_y) \sim \mathcal{U}(-0.15, 0.15)^2$ parameterizes the spatial generators within the geometric subalgebra $\mathfrak{g}_{\mathcal{S}}$. To reduce interpolation-induced dissipation of high-frequency surface tension ripples, these continuous sub-pixel translations are executed analytically in the Fourier domain. $\mathcal{I}_{\mathbf{m}}$ denotes the boost direction acting on the momentum channels, and $\mathbf{V}_{\mathrm{bg}} = (u_{\mathrm{bg}}, v_{\mathrm{bg}}) \sim \mathcal{U}(-0.2, 0.2)^2$ represents the Galilean background flow within $\mathfrak{g}_{\mathcal{K}}$. These shifts violate the mass-centering and amplitude constraints of the in-distribution dataset.

To maintain Galilean invariance during spatiotemporal evolution, the target state at time $T$ undergoes kinematic convection. The target group element evolves analytically as $g_T = \exp\big(\xi_0 + \Delta \mathbf{x}_T \cdot \nabla \big)$, where the translation increment vector is dynamically driven by the background flow such that $\Delta \mathbf{x}_T = \mathbf{V}_{\mathrm{bg}} T$.

\textbf{Unsupervised Energy Functional via Two-Dimensional Canonical Self-Consistency.} After canonicalization, the estimator should produce near-zero residual shifts. We define the residual two-dimensional phase shift vector as $\boldsymbol{\epsilon}_{\mathrm{res}} = (\epsilon_{\mathrm{res}, x}, \epsilon_{\mathrm{res}, y})$. The TTA objective is the empirical mean squared error over a batch:
\begin{equation*}
    \mathcal{J}_{\mathrm{TTA}}(\hat{\xi}_{\mathcal{S}}) = \frac{1}{B} \sum_{i=1}^B \left( (\epsilon_{\mathrm{res}, x}^{(i)})^2 + (\epsilon_{\mathrm{res}, y}^{(i)})^2 \right)
\end{equation*}
where $B$ denotes the batch size.

\begin{figure*}[htbp]
    \centering
    \includegraphics[width=0.98\textwidth]{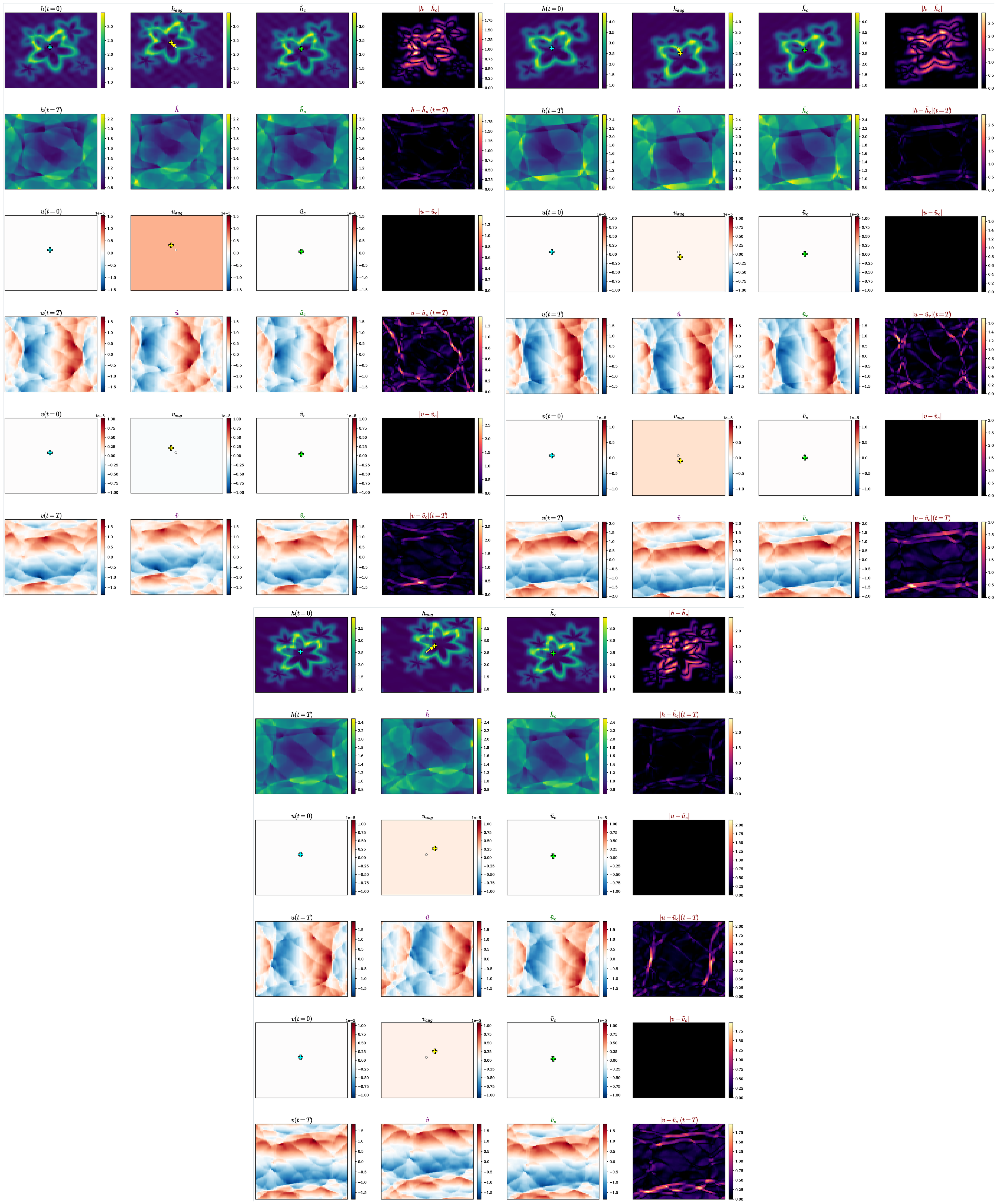}

    \caption{
    Canonicalization behavior of PACE-FNO under Galilean perturbations for the 2-D shallow-water system. The aligned states remove the dominant translation and background transport before prediction, which explains the smaller OOD error in Table~\ref{tab:swe_2d_main}.
    }
    \label{fig:pace_fno_2d_swe_multi_canonicalization}
\end{figure*}

\begin{figure}[htbp]
    \centering
    \begin{subfigure}[t]{0.58\linewidth}
        \centering
        \includegraphics[width=\linewidth]{images/PACE-FNO_2d_swe_OOD.png}
        \subcaption{PACE-FNO.}
    \end{subfigure}

    \vspace{0.35em}

    \begin{subfigure}[t]{0.58\linewidth}
        \centering
        \includegraphics[width=\linewidth]{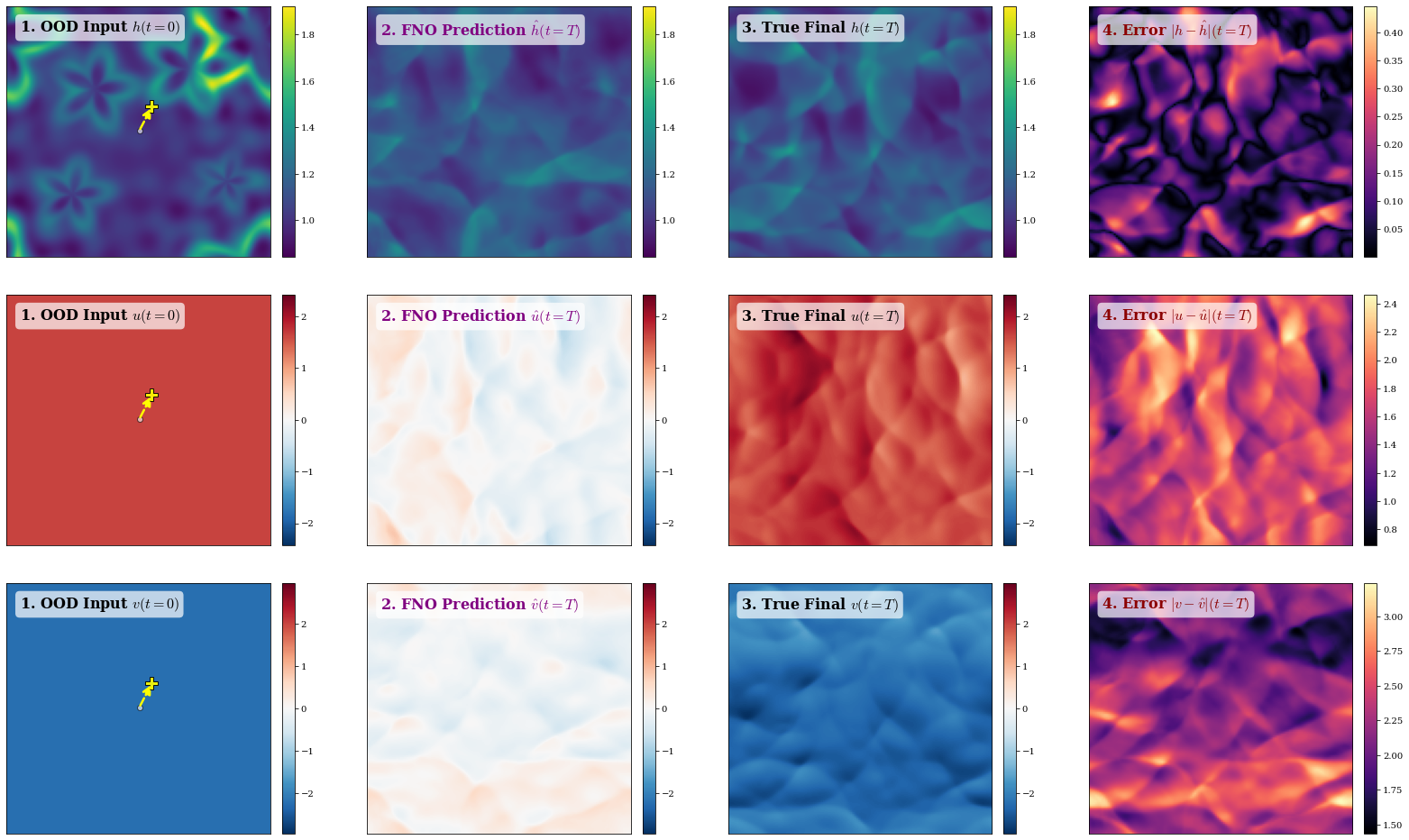}
        \subcaption{FNO.}
    \end{subfigure}

    \vspace{0.35em}

    \begin{subfigure}[t]{0.58\linewidth}
        \centering
        \includegraphics[width=\linewidth]{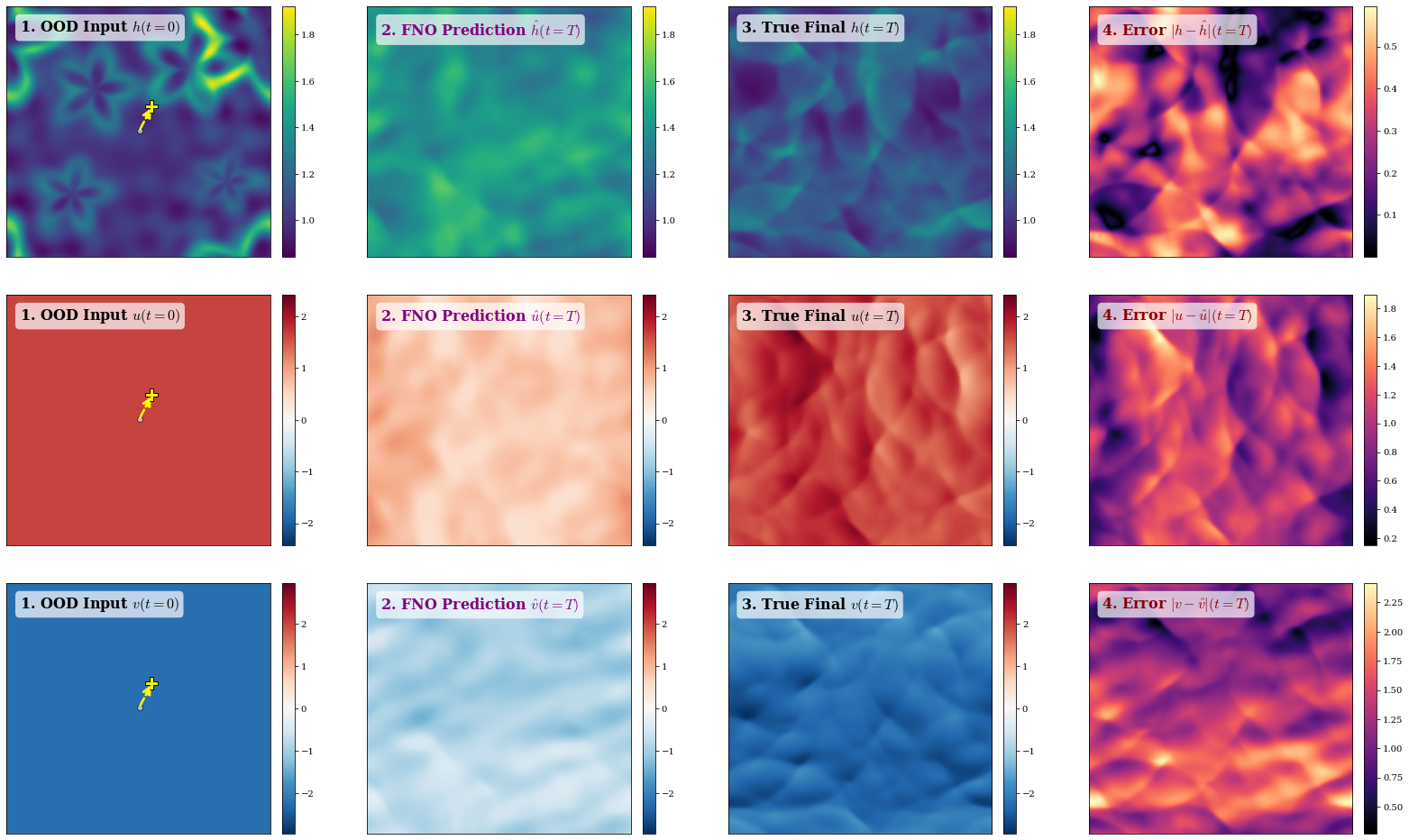}
        \subcaption{FNO+Aug.}
    \end{subfigure}

    \vspace{0.35em}

    \begin{subfigure}[t]{0.58\linewidth}
        \centering
        \includegraphics[width=\linewidth]{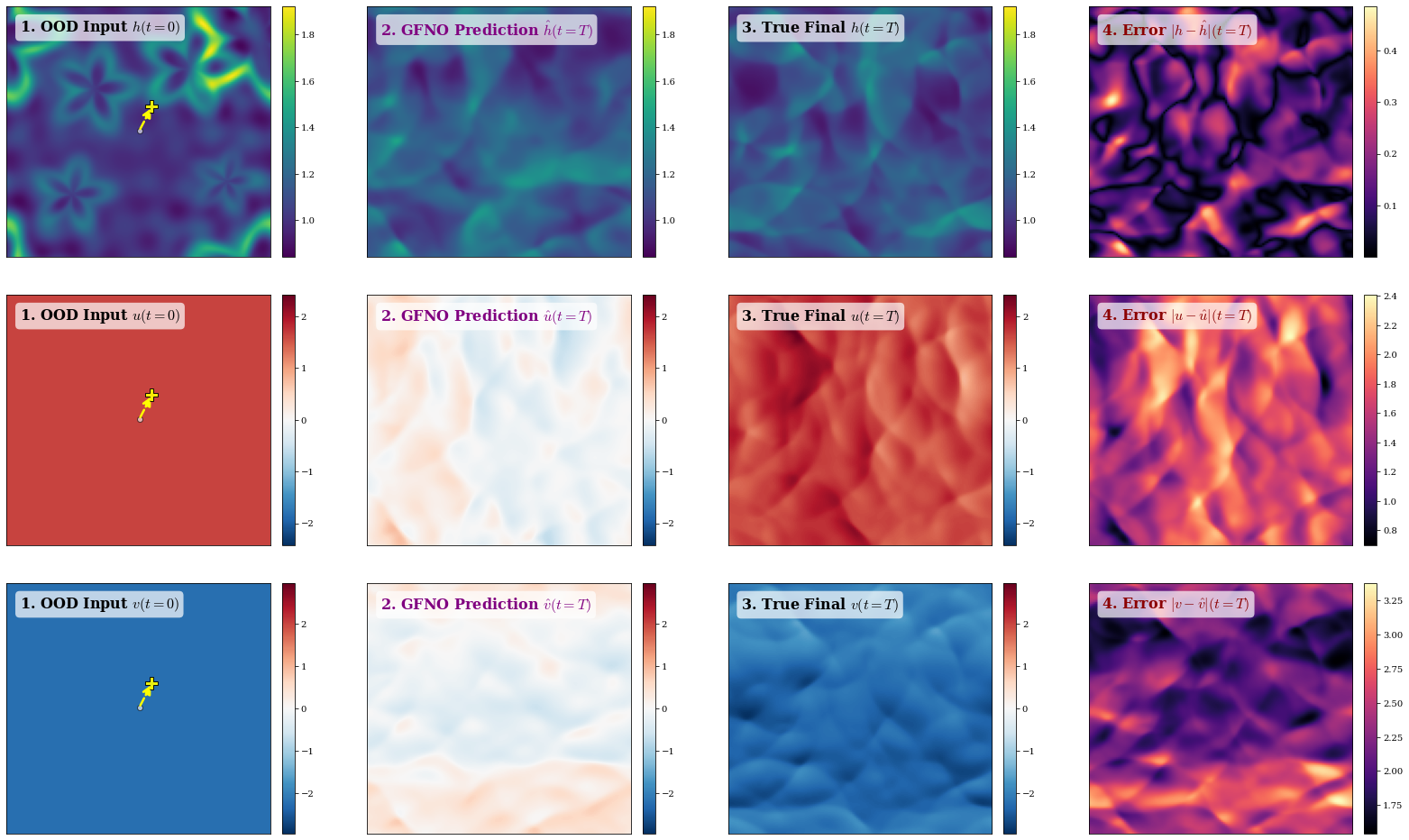}
        \subcaption{GFNO.}
    \end{subfigure}
    \caption{OOD prediction on the 2-D shallow-water system under Galilean perturbation. PACE-FNO more accurately tracks the transported wave pattern, while the baselines show larger displacement and amplitude errors.}
    \label{fig:swe_ood_comparison}
\end{figure}
\FloatBarrier
\subsection{(2+1)-D Navier-Stokes equations}

This setting evaluates spatiotemporal rollouts under coupled translation and rotation shifts. Models observe an input window of $T_{\mathrm{in}}=10$ steps and predict a future rollout of $T=20$ steps. The OOD trajectory is generated by applying a single composite $SE(2)$ action consistently across the trajectory; at $t=0$ this gives
\[
\omega_{0,\mathrm{OOD}}(\mathbf{x})
=
\omega_0\big(\mathbf{R}_{-\theta}(\mathbf{x}-\Delta\mathbf{x})\big),
\]
which applies a translation and a spatial rotation to the canonical initial condition.

\textbf{Data Augmentation via $SE(2)$ Lie Group Actions.}
For the Navier-Stokes task, we sample composite perturbations from the special Euclidean group $G = SE(2) = T(2) \rtimes SO(2)$. The corresponding initial Lie algebra generator $\xi_0 \in \mathfrak{se}(2)$ is defined by the superposition of the two-dimensional spatial translation and rigid rotation differential operators:
\begin{equation*}
    \xi_0 = -t_x \frac{\partial}{\partial x} - t_y \frac{\partial}{\partial y} - \theta \left( x \frac{\partial}{\partial y} - y \frac{\partial}{\partial x} \right)
\end{equation*}
Here, the translation vector $\Delta \mathbf{x} = (t_x, t_y) \sim \mathcal{U}(-0.05, 0.05)^2$ parameterizes the continuous $T(2)$ spatial translations within the geometric subalgebra, and the angle $\theta \sim \mathcal{U}(-\pi/4, \pi/4)$ parameterizes the $SO(2)$ rotations. Unlike kinematic background flows, these rigid geometric perturbations do not accumulate a macroscopic advection increment over time. On the discretized torus, arbitrary rotations should be read as the controlled rotated-field construction used in this experiment, rather than as a claim that every rotation angle is an exact automorphism of the square torus.

\textbf{Unsupervised Energy Functional via $SE(2)$ Canonical Self-Consistency.} After canonicalization, the estimator should return little residual translation or rotation. We write the residual translation as $\mathbf{t}_{\mathrm{res}} = (t_{x,\mathrm{res}}, t_{y,\mathrm{res}})$ and the residual rotation as $\theta_{\mathrm{res}}$. To avoid phase-wrapping discontinuities and singular gradients near the coordinate identifications of $\mathbb{T}^2 \times SO(2)$, the TTA loss uses a cosine distance rather than mean squared error:
\begin{equation*}
    \mathcal{J}_{\mathrm{TTA}}(\hat{\xi}_{\mathcal{S}}) = \frac{1}{B} \sum_{k=1}^B \left( \big(1 - \cos(\theta_{\mathrm{res}}^{(k)})\big) + \big(1 - \cos(2\pi t_{x,\mathrm{res}}^{(k)})\big) + \big(1 - \cos(2\pi t_{y,\mathrm{res}}^{(k)})\big) \right)
\end{equation*}
This formulation maps the residual vectors onto a smooth hypersphere and avoids artificial jumps caused by angle wrapping, such as the discontinuity from $-\pi$ to $\pi$.

\begin{figure}[htbp]
        \centering
        \includegraphics[width=0.98\textwidth]{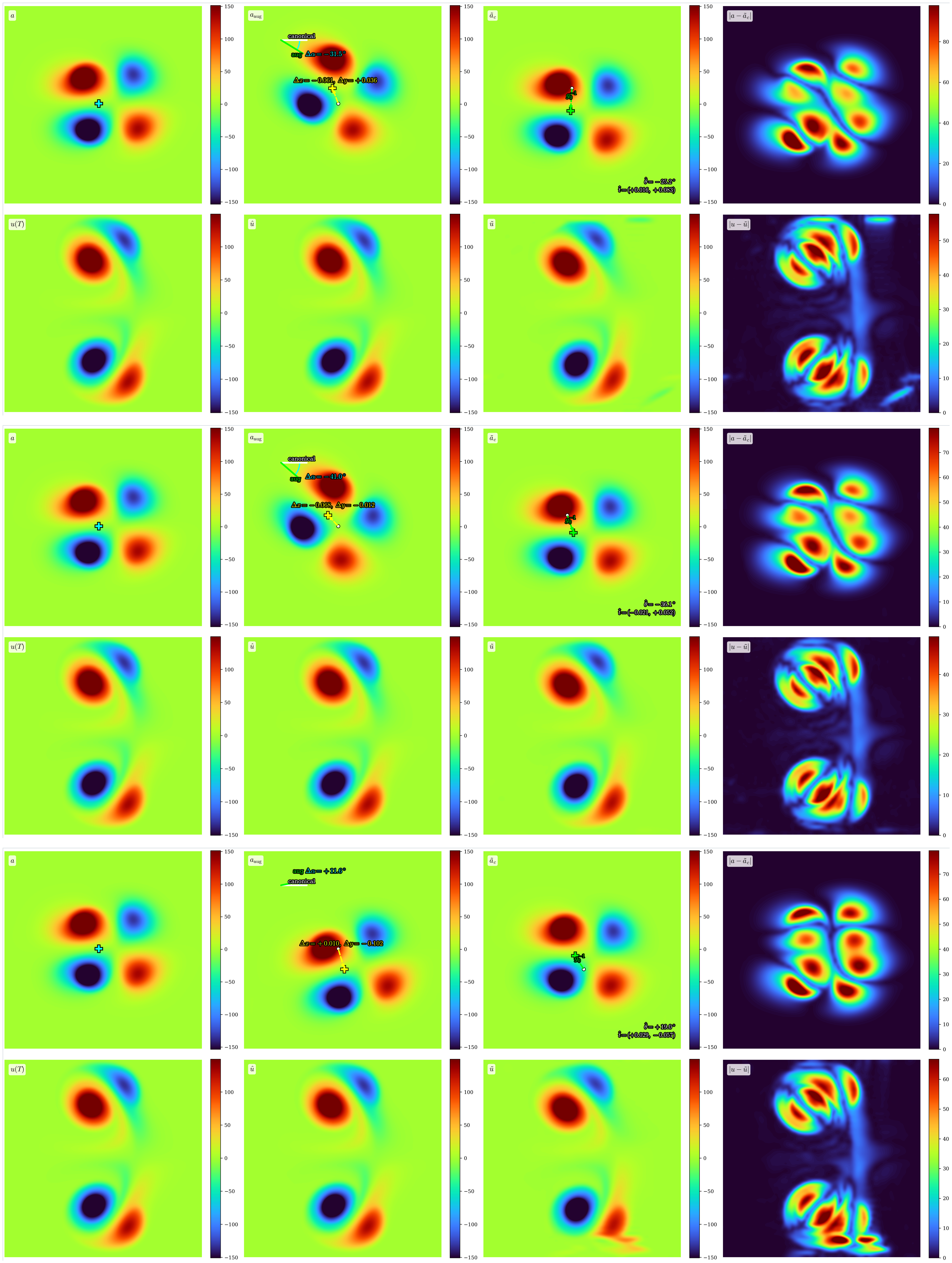}
        \caption{Canonicalization on the $(2+1)$-dimensional Navier-Stokes setting. The learned frame correction reduces the imposed translation-rotation shift before rollout, so the latent representation is closer to the canonical training regime.}
        \label{fig:3-D NS ID}
    \end{figure}

\begin{figure}[htbp]
    \centering
    \begin{subfigure}[t]{0.98\textwidth}
        \centering
        \includegraphics[width=\linewidth]{images/PACE-FNO_3d_ns_OOD.png}
        \subcaption{PACE-FNO.}
    \end{subfigure}

    \vspace{0.4em}

    \begin{subfigure}[t]{0.98\textwidth}
        \centering
        \includegraphics[width=\linewidth]{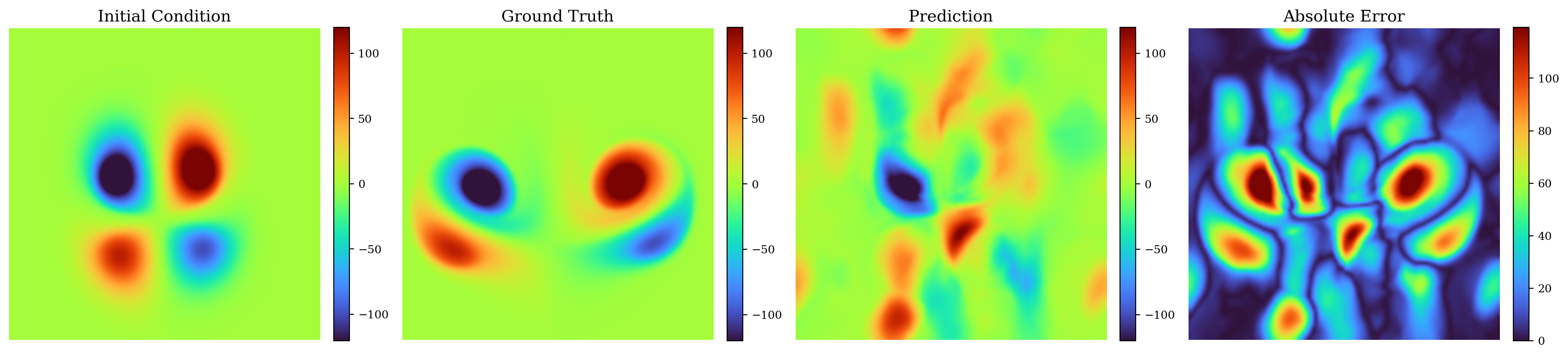}
        \subcaption{FNO.}
    \end{subfigure}

    \vspace{0.4em}

    \begin{subfigure}[t]{0.98\textwidth}
        \centering
        \includegraphics[width=\linewidth]{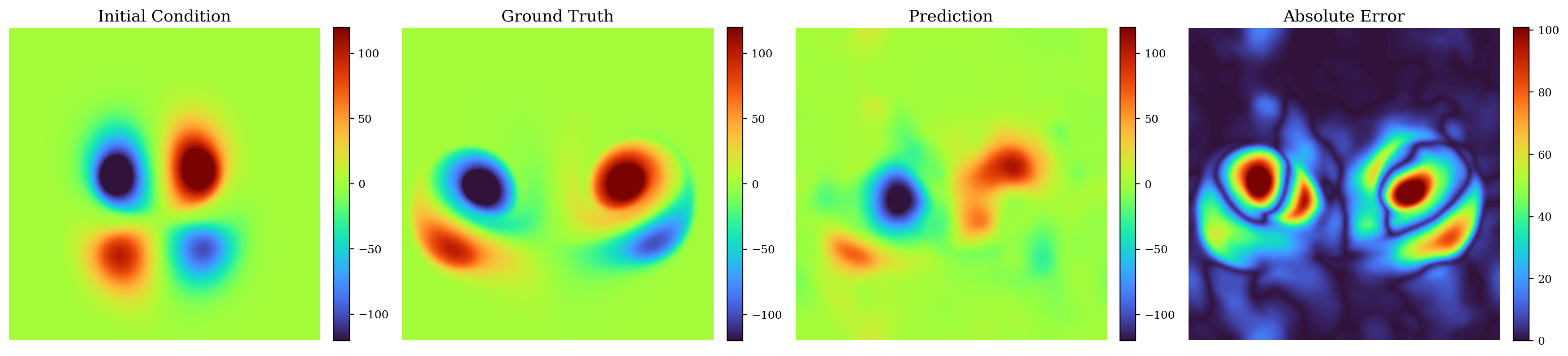}
        \subcaption{FNO+Aug.}
    \end{subfigure}

    \vspace{0.4em}

    \begin{subfigure}[t]{0.98\textwidth}
        \centering
        \includegraphics[width=\linewidth]{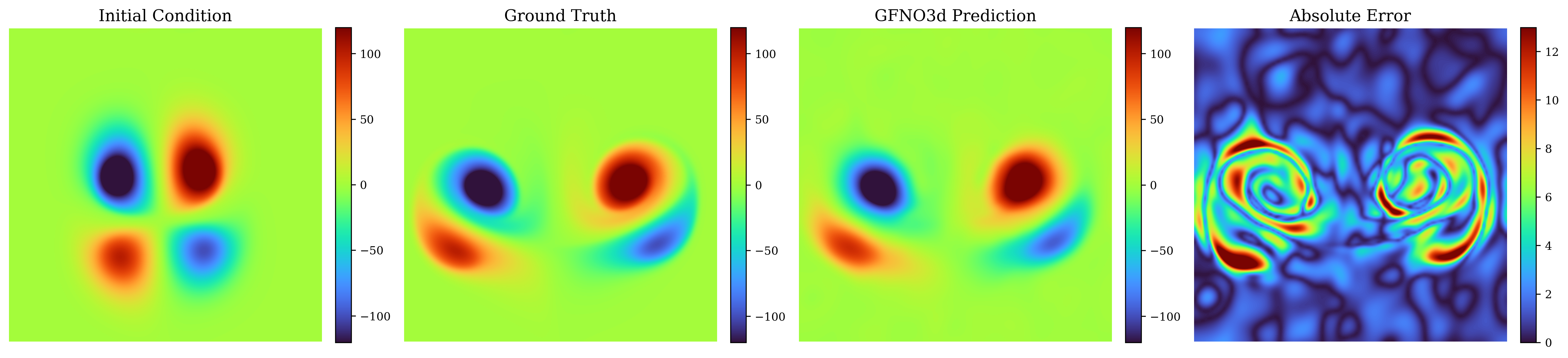}
        \subcaption{GFNO.}
    \end{subfigure}
    \caption{
    OOD prediction under $SE(2)$ perturbations for the $(2+1)$-dimensional Navier-Stokes equations. PACE-FNO more accurately preserves vortex placement and rollout geometry; the remaining gap reflects the coupled rotation-translation alignment error.
    }
    \label{fig:ns_3d_ood_visualization}
\end{figure}
\FloatBarrier
\subsection{Comparison with other methods}
\label{app:lpsda_comparison}

\paragraph{Scope of the LieLAC comparison.}
LieLAC \cite{shumaylov2024lie} is the closest prior work in spirit, and we discuss it in the main text as a post-hoc Lie-algebra canonicalization method. We do not report a head-to-head LieLAC number in the tables. At the time of writing, we could not find a public implementation or released PDE datasets for the published version. The paper also does not specify enough details of the data-generation pipeline for us to reproduce its protocol on our Burgers, SWE, and Navier-Stokes settings, including the train/test splits, solver choices, and symmetry ranges. Reporting our own reimplementation would therefore require several judgment calls and could be unfair to their method. We instead compare against baselines whose code paths and data protocols we can reproduce locally, and leave a direct empirical comparison to future work once an implementation or an exact experimental configuration is available.

\begin{table}[htbp]
\centering
\setlength{\tabcolsep}{4.2pt}
\renewcommand{\arraystretch}{1.06}
\small
\caption{In-distribution autoregressive NMSE on the KdV equation under the LPSDA protocol~\cite{brandstetter2022lie}, evaluated at 512~training samples. Lower NMSE indicates better accuracy.}
\label{tab:lpsda_kdv_ar_id_compare}
\begin{tabular}{llccc}
\toprule
& & & & \multicolumn{1}{c}{Number of samples} \\
\cmidrule(lr){5-5}
Task & Solver & Sym. & Architecture & 512 \\
\midrule
KdV (20s)
& FNO (AR)
& $-$
& \shortstack[c]{$m{=}32$, $w{=}256$ \\ 5 layers, 20 frames}
& $0.0130 \pm 0.0026$ \\

KdV (20s)
& FNO (AR)
& $g_1g_2g_3g_4$
& \shortstack[c]{$m{=}32$, $w{=}256$ \\ 5 layers, 20 frames}
& $0.0053 \pm 0.0012$ \\

KdV (20s)
& PACE-FNO (AR)
& \shortstack[c]{windowed $g_1$ \\ + structured $g_2$--$g_4$}
& \shortstack[c]{$m{=}20$, $w{=}64$ \\ 4 layers, 1 frame}
& $\mathbf{0.0047 \pm 0.0001}$ \\

KdV (20s)
& PACE-FNO (AR)
& \shortstack[c]{temporal-window $g_1$ \\ + PACE $g_2$--$g_4$}
& \shortstack[c]{$m{=}32$, $w{=}256$ \\ 5 layers, 20 frames}
& $0.0061 \pm 0.0009$ \\
\bottomrule
\end{tabular}
\end{table}

Table~\ref{tab:lpsda_kdv_ar_id_compare} compares PACE-FNO with locally reproduced LPSDA baselines on the KdV equation. PACE-FNO achieves the lowest NMSE at 512~training samples. The best PACE-FNO variant uses a single input frame with a compact FNO predictor (modes~$m=20$, width~$w=64$, four Fourier layers), while the strongest LPSDA-augmented FNO baseline uses a 20-frame history window and a larger FNO predictor (modes~$m=32$, width~$w=256$, five~layers). The difference is consistent with structured canonicalization, rather than parameter count, being the source of the improvement.

\begin{table}[htbp]
\centering
\setlength{\tabcolsep}{4.0pt}
\renewcommand{\arraystretch}{1.06}
\small
\caption{In-distribution autoregressive NMSE on the Kuramoto-Sivashinsky equation under the LPSDA protocol~\cite{brandstetter2022lie}, evaluated at 512~training samples. Lower NMSE indicates better accuracy.}
\label{tab:lpsda_ks_ar_id_compare}
\begin{tabular}{llccc}
\toprule
& & & & \multicolumn{1}{c}{Number of samples}\\
\cmidrule(lr){5-5}
Task & Solver & Sym. & Architecture & 512 \\
\midrule
KS (20s)
& FNO (AR)
& $-$
& \shortstack[c]{$m{=}32$, $w{=}256$ \\ 5 layers, 20 frames}
& $0.0010 \pm 0.0002$\\

KS (20s)
& FNO (AR)
& $g_1g_2g_3$
& \shortstack[c]{$m{=}32$, $w{=}256$ \\ 5 layers, 20 frames}
& $0.0009 \pm 0.0001$\\

KS (20s)
& PACE-FNO (AR)
& \shortstack[c]{windowed $g_1$ \\ + structured $g_2$--$g_3$}
& \shortstack[c]{$m{=}20$, $w{=}64$ \\ 4 layers, 1 frame}
& $0.0008 \pm 0.0002$\\
KS (20s)
& PACE-FNO (AR)
& \shortstack[c]{windowed $g_1$ \\ + PACE $g_2$--$g_3$}
& \shortstack[c]{$m{=}20$, $w{=}64$ \\ 4 layers, 20 frames }
& $\mathbf{0.0003 \pm 0.0001}$\\
\bottomrule
\end{tabular}
\end{table}

Table~\ref{tab:lpsda_ks_ar_id_compare} repeats the comparison on the KS equation with the same locally reproduced LPSDA baselines. All methods already operate at very low error in this in-distribution setting, leaving limited room for improvement. PACE-FNO still achieves the lowest NMSE with a smaller FNO predictor: a single-frame architecture with four Fourier layers at width~64 matches or surpasses the 20-frame, five-layer, width-256 LPSDA baselines. As in KdV, the comparison is consistent with canonicalization, rather than parameter count, explaining the improvement.

\section{Ablation and TTA sensitivity}
\label{app:ablation_tta}

\begin{table}[htbp]
\centering
\caption{Unified ablation study of PACE-FNO. The \textit{w/o Canonicalization} variant disables learned pullback/pushforward during inference. For SWE, \textit{w/o Full Lie Gen.} is numerically unstable (unclipped error reported). Dashes indicate entries with no independent training cost or evaluation.}
\label{tab:ablation_pacefno_all}
\setlength{\tabcolsep}{2.5pt}
\renewcommand{\arraystretch}{0.98}
\scriptsize
\resizebox{\textwidth}{!}{%
\begin{tabular}{llcccc}
\toprule
\multirow{2}{*}{Dataset} & \multirow{2}{*}{Model}
& \multicolumn{2}{c}{Relative Error}
& \multicolumn{2}{c}{Runtime (s)} \\
\cmidrule(lr){3-4} \cmidrule(lr){5-6}
& & ID $\downarrow$ & OOD $\downarrow$ & Train (s/ep.) & OOD Inf. (s) \\
\midrule

\multirow{4}{*}{1D Burgers}
& PACE-FNO + TTA
& $0.0621 \pm 0.0028$
& $0.0260 \pm 0.0012$
& $0.4864 \pm 0.0037$
& $5.9590 \pm 0.0963$ \\

& w/o Full Lie Gen.
& $0.0471 \pm 0.0027$
& $0.0193 \pm 0.0007$
& $0.4902 \pm 0.0034$
& $6.0546 \pm 0.2355$ \\

& PACE-FNO (one-shot)
& $-$
& $0.0267 \pm 0.0014$
& $-$
& $0.5287 \pm 0.0171$ \\

& w/o Canonicalization
& $-$
& $1.0384 \pm 0.0003$
& $-$
& $0.4016 \pm 0.0058$ \\
\midrule

\multirow{5}{*}{2-D Burgers}
& PACE-FNO + TTA
&$0.0393 \pm 0.0012$
& $0.0640 \pm 0.0012$
& $1.5770 \pm 0.0066$
& $4.4345 \pm 0.0409$\\

& w/o Structured Spatial Gen.
& $0.0392 \pm 0.0008$
& $0.0666 \pm 0.0076$
& $1.0761 \pm 0.0268$
& $4.3969 \pm 0.0336$ \\

& w/o Full Lie Gen.
& $0.0303 \pm 0.0018$
& $0.5478 \pm 0.0303$
& $1.1030 \pm 0.0083$
& $4.3551 \pm 0.0131$ \\

& PACE-FNO (one-shot)
& $-$
& $0.0755 \pm 0.0139$
& $-$
& $0.1399 \pm 0.0023$ \\

& w/o Canonicalization
& $-$
& $1.0085 \pm 0.0035$
& $-$
& $0.1417 \pm 0.0311$ \\
\midrule

\multirow{5}{*}{2-D SWE}
& PACE-FNO + TTA
& $0.0491 \pm 0.0002$
& $0.1071 \pm 0.0168$
& $2.6139 \pm 0.0046$
& $8.1388 \pm 0.0855$ \\

& w/o Structured Spatial Gen.
& $0.0501 \pm 0.0011$
& $0.1008 \pm 0.0142$
& $2.5280 \pm 0.0059$
& $7.8222 \pm 0.0129$ \\

& w/o Full Lie Gen.
& $0.0586 \pm 0.0058$
& $2.11{\times}10^{4} \pm 2.28{\times}10^{4}$
& $2.0817 \pm 0.0222$
& $7.5391 \pm 0.0296$ \\

& PACE-FNO (one-shot)
& $-$
& $0.1069 \pm 0.0168$
& $-$
& $0.2952 \pm 0.0349$ \\

& w/o Canonicalization
& $-$
& $0.7232 \pm 0.0008$
& $-$
& $0.1426 \pm 0.0260$ \\
\midrule

\multirow{4}{*}{(2+1)D NS}
& PACE-FNO + TTA
& $0.0129 \pm 0.0006$
& $0.2593 \pm 0.0522$
& $16.5571 \pm 0.3631$
& $10.0376 \pm 0.0418$ \\

& w/o Full Lie Gen.
& $0.0133 \pm 0.0006$
& $0.6280 \pm 0.0076$
& $11.0809 \pm 0.0649$
& $2.7494 \pm 0.0169$ \\

& PACE-FNO (one-shot)
& $-$
& $0.2766 \pm 0.0353$
& $-$
& $3.3728 \pm 0.1381$ \\
\bottomrule
\end{tabular}
}
\end{table}

The ablation above separates the one-shot canonicalizer from iterative test-time refinement. Figure~\ref{fig:tta_sensitivity_errorbar_swe} reports the sensitivity of the final OOD error to the tested TTA learning rates and adaptation-step budgets on the 2-D shallow-water system; the OOD relative error is nearly flat across the tested learning rates and adaptation steps.
\begin{figure}[htbp]
    \centering
    \includegraphics[width=0.9\linewidth]{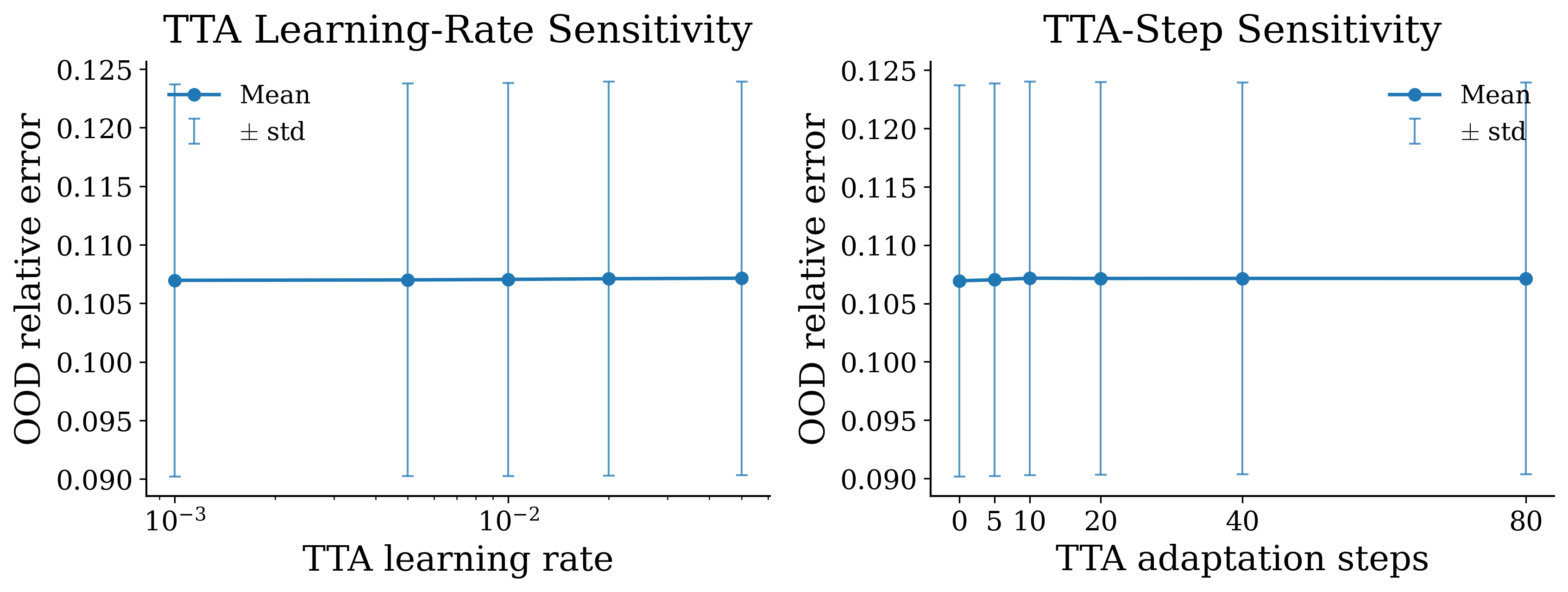}
    \caption{TTA sensitivity on the 2-D shallow-water system. The OOD relative error is nearly flat across the tested learning rates and adaptation steps.}
    \label{fig:tta_sensitivity_errorbar_swe}
\end{figure}

\section{Generalization Error and Covering Numbers}
\label{app:generalization}

We formalize the connection between covering numbers and generalization for operator learning. These definitions support the symmetry-induced complexity reduction in Theorem~\ref{thm:symmetry_complexity_reduction} and the subsequent remark on improved generalization.

Let $\rho$ denote the data distribution over $\mathcal{A}\times\mathcal{U}$, and let $\ell:\mathcal{U}\times\mathcal{U}\to[0,B]$ be a bounded loss function. For a learned operator $F\in\mathcal{F}$, the population risk and empirical risk are
\begin{equation*}
    \mathcal{R}(F) = \mathbb{E}_{(a,u)\sim\rho}\,\ell(F(a),u),
    \qquad
    \widehat{\mathcal{R}}_N(F) = \frac{1}{N}\sum_{i=1}^{N}\ell(F(a_i),u_i).
\end{equation*}

\begin{definition}[Generalization Error]
\label{def:gen_error}
The generalization error of an operator $F\in\mathcal{F}$ with respect to $N$ i.i.d.\ samples is
\begin{equation*}
    \mathrm{Gen}_N(F) := \mathcal{R}(F) - \widehat{\mathcal{R}}_N(F).
    \label{eq:gen_error_def}
\end{equation*}
The uniform generalization error over the class is $\sup_{F\in\mathcal{F}}|\mathrm{Gen}_N(F)|$.
\end{definition}

\begin{definition}[Covering Number]
\label{def:covering_number}
Let $(\mathcal{F},d)$ be a metric space. For $\epsilon>0$, the $\epsilon$-covering number $\mathcal{N}(\mathcal{F},\epsilon,d)$ is the minimum cardinality of a set $\{F_1,\dots,F_m\}\subseteq\mathcal{F}$ such that for every $F\in\mathcal{F}$ there exists $j\in[m]$ with $d(F,F_j)\le\epsilon$.
\end{definition}

For operator learning, the natural metric is the uniform deviation over the input domain of interest. In our setting this is the OOD-saturated set $\mathcal{A}_{\mathrm{OOD}}$ defined in Lemma~\ref{lemma:compactness} below:
\begin{equation*}
    d_\infty(F_1,F_2) := \sup_{a\in\mathcal{A}_{\mathrm{OOD}}}\|F_1(a)-F_2(a)\|_{\mathcal{U}}.
\end{equation*}

\begin{lemma}[Covering Number to Generalization Error]
\label{lemma:covering_to_gen}
Let $\mathcal{F}$ be a class of operators from $\mathcal{A}$ to $\mathcal{U}$, and let the loss $\ell$ be $L_\ell$-Lipschitz in its first argument and bounded in $[0,B]$. For any $\epsilon>0$ and $\delta\in(0,1)$, with probability at least $1-\delta$ over the draw of $N$ i.i.d.\ samples,
\begin{equation*}
    \sup_{F\in\mathcal{F}}\big|\mathcal{R}(F)-\widehat{\mathcal{R}}_N(F)\big|
    \le
    2L_\ell\epsilon + B\sqrt{\frac{\log\!\big(2\,\mathcal{N}(\mathcal{F},\epsilon,d_\infty)/\delta\big)}{2N}}.
    \label{eq:covering_gen_bound}
\end{equation*}
\end{lemma}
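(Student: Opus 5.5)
The plan is the standard cover-plus-union-bound argument: discretize $\mathcal{F}$ with an $\epsilon$-net in $d_\infty$, prove uniform convergence on the net by Hoeffding's inequality and a union bound, and transfer to all of $\mathcal{F}$ using Lipschitz continuity of the loss. Throughout, the data distribution $\rho$ is assumed to be supported on $\mathcal{A}_{\mathrm{OOD}}\times\mathcal{U}$, so $d_\infty$ controls the deviation at every sample point. If $\rho$ puts mass elsewhere, the metric must be taken over the support of $\rho$ instead.

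\textbf{Step 1: build the net.} Let $m=\mathcal{N}(\mathcal{F},\epsilon,d_\infty)$, assumed finite, and fix an $\epsilon$-cover $\{F_1,\dots,F_m\}\subseteq\mathcal{F}$ as in Definition~\ref{def:covering_number}. For each fixed $F_j$, the losses $\ell(F_j(a_i),u_i)$, $i=1,\dots,N$, are i.i.d.\ and take values in $[0,B]$. Hoeffding's inequality therefore gives
\[
\Pr\big(|\mathcal{R}(F_j)-\widehat{\mathcal{R}}_N(F_j)|>t\big)\le 2\exp(-2Nt^2/B^2).
\]
A union bound over the $m$ centers, with $t=B\sqrt{\log(2m/\delta)/(2N)}$, shows that with probability at least $1-\delta$,
\[
\max_j|\mathcal{R}(F_j)-\widehat{\mathcal{R}}_N(F_j)|\le t.
\]
This $t$ is exactly the second term of the claimed bound.

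\textbf{Step 2: transfer to all of $\mathcal{F}$.} Take any $F\in\mathcal{F}$ and choose $j$ with $d_\infty(F,F_j)\le\epsilon$. Since $\ell$ is $L_\ell$-Lipschitz in its first argument, for every $(a,u)$ in the support,
\[
|\ell(F(a),u)-\ell(F_j(a),u)|\le L_\ell\|F(a)-F_j(a)\|_{\mathcal{U}}\le L_\ell\epsilon.
\]
Taking the expectation gives $|\mathcal{R}(F)-\mathcal{R}(F_j)|\le L_\ell\epsilon$. Averaging over the sample gives $|\widehat{\mathcal{R}}_N(F)-\widehat{\mathcal{R}}_N(F_j)|\le L_\ell\epsilon$.

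\textbf{Step 3: combine.} By the triangle inequality,
\[
|\mathcal{R}(F)-\widehat{\mathcal{R}}_N(F)|\le 2L_\ell\epsilon+t.
\]
Taking the supremum over $F\in\mathcal{F}$ yields the statement.

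None of the steps is hard. The only point needing care is measurability of the supremum, which is avoided entirely because the bound is obtained pointwise from the finite net event. The support requirement on $\rho$ is a hypothesis to make explicit rather than something to prove.
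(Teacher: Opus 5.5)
Your proposal is correct and matches the paper's proof essentially step for step: an $\epsilon$-cover in $d_\infty$, then a Lipschitz transfer giving $|\mathrm{Gen}_N(F)|\le 2L_\ell\epsilon+|\mathrm{Gen}_N(F_j)|$, then Hoeffding plus a union bound with $t=B\sqrt{\log(2m/\delta)/(2N)}$. Your explicit hypothesis that $\rho$ is supported on $\mathcal{A}_{\mathrm{OOD}}\times\mathcal{U}$, which is needed so that $d_\infty$ controls the loss at every sample, is used only implicitly in the paper, so stating it is a worthwhile clarification.
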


\begin{proof}
Take an $\epsilon$-cover $\{F_1,\dots,F_m\}$ of $\mathcal{F}$ under $d_\infty$ with $m=\mathcal{N}(\mathcal{F},\epsilon,d_\infty)$. For any $F\in\mathcal{F}$, let $F_j$ be the cover element with $d_\infty(F,F_j)\le\epsilon$. The $L_\ell$-Lipschitz property of $\ell$ gives
\[
|\ell(F(a),u)-\ell(F_j(a),u)|\le L_\ell\|F(a)-F_j(a)\|_{\mathcal{U}}\le L_\ell\epsilon,
\]
so both population and empirical risks differ by at most $L_\ell\epsilon$ from those of $F_j$. Hence
\[
|\mathrm{Gen}_N(F)|\le 2L_\ell\epsilon + |\mathrm{Gen}_N(F_j)|.
\]
For a fixed $F_j$, Hoeffding's inequality for losses bounded in $[0,B]$ yields
\[
\Pr\!\big(|\mathrm{Gen}_N(F_j)|\ge t\big)\le 2\exp\!\left(-\frac{2Nt^2}{B^2}\right).
\]
A union bound over the $m$ cover elements and the choice $t=B\sqrt{\frac{\log(2m/\delta)}{2N}}$ complete the proof.
\end{proof}

\begin{remark}[Improved Generalization via Canonicalization]
\label{rem:improved_generalization}
Theorem~\ref{thm:symmetry_complexity_reduction} establishes that the orbit-averaging projection $\Pi_G$ is non-expansive under $d_\infty$, which yields the covering-number inequality $\mathcal{N}(\mathcal{F}_G,\epsilon,d_\infty)\le\mathcal{N}(\mathcal{F},\epsilon,d_\infty)$. Substituting this into Lemma~\ref{lemma:covering_to_gen}, the generalization bound for the equivariant class $\mathcal{F}_G$ is at most the bound for the original (unconstrained) class $\mathcal{F}$ at the same sample size $N$. Moreover, when the learned operator $F$ is approximately equivariant with commutation residual $\eta_G(F)\le\eta$, the proof of Theorem~\ref{thm:symmetry_complexity_reduction} shows $d_\infty(F,\Pi_G F)\le\eta$, and the $L_\ell$-Lipschitz property of $\ell$ gives
\begin{equation}
    |\mathcal{R}(F)-\mathcal{R}(\Pi_G F)|\le L_\ell\eta,
    \qquad
    |\widehat{\mathcal{R}}_N(F)-\widehat{\mathcal{R}}_N(\Pi_G F)|\le L_\ell\eta.
    \label{eq:risk_deviation_eta}
\end{equation}
Combining~\eqref{eq:risk_deviation_eta} with Lemma~\ref{lemma:covering_to_gen} applied to $\Pi_G F\in\mathcal{F}_G$ and using the triangle inequality yields
\begin{equation*}
    |\mathrm{Gen}_N(F)|
    \le
    \underbrace{2L_\ell\epsilon + B\sqrt{\frac{\log\!\big(2\,\mathcal{N}(\mathcal{F}_G,\epsilon,d_\infty)/\delta\big)}{2N}}}_{\text{generalization bound for }\mathcal{F}_G}
    \;+\;
    2L_\ell\eta.
    \label{eq:gen_bound_with_eta}
\end{equation*}
In PACE-FNO, canonicalization makes the overall pipeline approximately equivariant, so the residual $\eta$ is bounded by the alignment error of the Lie-algebra coordinate estimator. As the estimator improves, $\eta\to 0$ and the generalization gap approaches that of the fully equivariant class. Since $\mathcal{N}(\mathcal{F}_G,\epsilon,d_\infty)\le\mathcal{N}(\mathcal{F},\epsilon,d_\infty)$, this bound is never worse than that of an unconstrained operator; the gain comes from the symmetry constraint reducing the covering number of the hypothesis class.
\end{remark}

\section{Proofs}
\label{proof}
The standard FNO approximation result requires a compact input set and a continuous target operator. We restate the needed assumptions in the notation used here, then prove the two bounds used in the main text.

\paragraph{Induced terminal group element.}
If the physical solution operator is equivariant under the symmetry group $G_B$, then for each initial group action $g_0\in G_B$ there exists a terminal group element $g_T\in G_B$ satisfying the intertwining relation
\[
\mathcal{G}^\dagger\circ\mathcal{C}_{g_0} = \mathcal{C}_{g_T}\circ\mathcal{G}^\dagger.
\]
For the spatial transformations studied here (translation, rotation), equivariance implies $g_T=g_0$ because the geometric shift is conserved during temporal evolution on the torus. For kinematic transformations (Galilean boosts), the background velocity transports the field by $\mathbf{v}_{\mathrm{bg}}T$ over the time horizon $T$, inducing $g_T = \exp(\xi_0 + \mathbf{v}_{\mathrm{bg}}T\cdot\nabla)$. This relation is applied in Theorem~\ref{thm:equiv_approx} to separate the prediction error into canonical approximation and geometric alignment components.

\begin{lemma}[Compactness and Orbit Saturation]
\label{lemma:compactness}
Let the physical space $D \subset \mathbb{R}^d$ be a bounded open set with a Lipschitz boundary. Let the input and output spaces be $\mathcal{A}=L^2(D; \mathbb{R}^{d_a})$ and $\mathcal{U}=L^2(D; \mathbb{R}^{d_u})$. Assume that the canonical training measure $\mu$ is supported on a compact subset $\mathcal{A}_c \subset \mathcal{A}$. Let $G_B\subset G$ be the bounded set of group elements used to define the OOD shift family, and assume $G_B$ is compact. The OOD set is the orbit saturation of the canonical set under $G_B$:
\begin{equation*}
    \mathcal{A}_{\mathrm{OOD}} := \{ \mathcal{C}_g(a) : a \in \mathcal{A}_c, g \in G_B \} \subset \mathcal{A}.
\end{equation*}
Assume that the action $(g,a)\mapsto\mathcal{C}_g(a)$ is continuous, that the true physical evolution operator $\mathcal{G}^\dagger: \mathcal{A}_{\mathrm{OOD}} \to \mathcal{U}$ is equivariant for the transformations considered, and that $\mathcal{G}^\dagger$ is locally Lipschitz on $\mathcal{A}_{\mathrm{OOD}}$.
\end{lemma}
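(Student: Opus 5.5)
The content of Lemma~\ref{lemma:compactness} that has to be proved is that the orbit saturation $\mathcal{A}_{\mathrm{OOD}}$ is a compact subset of $\mathcal{A}$, so that the FNO approximation theorem (which needs a compact input set and a continuous target) can be applied on it. The plan is to realise $\mathcal{A}_{\mathrm{OOD}}$ as a continuous image of a compact set. We then read off the regularity of $\mathcal{G}^\dagger$ on that set.

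\textbf{Step 1: compactness.} I would first note that $G_B\times\mathcal{A}_c$ is compact in the product topology, by Tychonoff for two factors or directly via sequences, since both spaces are metrizable. The map $\Phi:(g,a)\mapsto\mathcal{C}_g(a)$ is continuous by hypothesis, and by construction $\mathcal{A}_{\mathrm{OOD}}=\Phi(G_B\times\mathcal{A}_c)$. The continuous image of a compact set is compact, so $\mathcal{A}_{\mathrm{OOD}}$ is compact in $L^2(D;\mathbb{R}^{d_a})$. It is in particular closed and bounded, and it contains $\mathcal{A}_c$ whenever $e\in G_B$.

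\textbf{Step 2: regularity of the target operator.} I would show that $\mathcal{G}^\dagger$ is Lipschitz on all of $\mathcal{A}_{\mathrm{OOD}}$ by a Lebesgue-number argument. Local Lipschitzness gives, around each $a$, a ball $B(a,r_a)$ on which $\mathcal{G}^\dagger$ is $L_a$-Lipschitz. Compactness yields a finite subcover and a Lebesgue number $\lambda>0$, so pairs with $\|a-b\|<\lambda$ satisfy $\|\mathcal{G}^\dagger a-\mathcal{G}^\dagger b\|\le(\max_i L_{a_i})\|a-b\|$.

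Pairs with $\|a-b\|\ge\lambda$ are handled by boundedness. Continuity on a compact set bounds $\|\mathcal{G}^\dagger\|$ by some $M$, so $\|\mathcal{G}^\dagger a-\mathcal{G}^\dagger b\|\le 2M\le (2M/\lambda)\|a-b\|$. Hence $\mathcal{G}^\dagger$ is Lipschitz on $\mathcal{A}_{\mathrm{OOD}}$ with constant $\max(\max_i L_{a_i},2M/\lambda)$, and therefore uniformly continuous and bounded there. These are exactly the inputs Theorem~\ref{theorem:fno_universal} requires.

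\textbf{Step 3: equivariance.} I would record that equivariance makes the intertwining relation $\mathcal{G}^\dagger\circ\mathcal{C}_{g_0}=\mathcal{C}_{g_T}\circ\mathcal{G}^\dagger$ well posed on $\mathcal{A}_{\mathrm{OOD}}$. The only point needing comment is that $\mathcal{A}_{\mathrm{OOD}}$ need not be invariant under $G_B$ unless $G_B$ is closed under composition. So I would only use the relation for $a\in\mathcal{A}_c$ and $g_0\in G_B$, where $\mathcal{C}_{g_0}a\in\mathcal{A}_{\mathrm{OOD}}$ by definition; this is all Theorem~\ref{thm:equiv_approx} uses.

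\textbf{Main obstacle.} The substantive subtlety is the joint continuity of $(g,a)\mapsto\mathcal{C}_g a$ in $L^2$. This is assumed here, but if I were to justify it I would argue as follows. First, $\|\mathcal{C}_g a-\mathcal{C}_h b\|\le\|\mathcal{C}_g(a-b)\|+\|(\mathcal{C}_g-\mathcal{C}_h)b\|$. The first term is controlled because translations and rotations act isometrically on the torus. The second term is controlled by strong continuity of translation and rotation on $L^2$, using density of continuous functions. For Galilean boosts, the added constant field is trivially continuous in the boost parameter.
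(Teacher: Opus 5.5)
Your argument is correct, but the paper has no proof to compare it with. As stated, Lemma~\ref{lemma:compactness} asserts no conclusion: it only defines $\mathcal{A}_{\mathrm{OOD}}$ and lists standing hypotheses, and the paper then merely remarks that FNO approximation applies on the canonical set. You extract the content the title suggests, and both steps are sound. First, $\mathcal{A}_{\mathrm{OOD}}=\Phi(G_B\times\mathcal{A}_c)$ is compact as a continuous image of a compact set. Second, local Lipschitzness of $\mathcal{G}^\dagger$ upgrades to a global constant via a Lebesgue number plus the $2M/\lambda$ bound for far-apart pairs.

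What Step~1 buys over the paper is that the suprema over $\mathcal{A}_{\mathrm{OOD}}$ are finite for continuous maps. These are the suprema in Theorem~\ref{thm:equiv_approx}, in $\epsilon_{\mathrm{geo},0}$, and in the metric $d_\infty$ of Theorem~\ref{thm:symmetry_complexity_reduction}; the paper leaves their finiteness implicit.

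Your stated motivation is misplaced, though. Theorem~\ref{theorem:fno_universal} is stated and used only on $\mathcal{A}_c$, since avoiding approximation on the orbit is the point of canonicalization. What that theorem actually needs from this lemma is that $\mathcal{G}^\dagger$, declared only on $\mathcal{A}_{\mathrm{OOD}}$, be defined and continuous on $\mathcal{A}_c$. That requires $\mathcal{A}_c\subseteq\mathcal{A}_{\mathrm{OOD}}$, which holds when $e\in G_B$. You mention this only in passing; it should be an explicit hypothesis. Your global Lipschitz constant for $\mathcal{G}^\dagger$ is never used downstream.

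Two smaller remarks. Your Step~3 caution is sharper than the paper. With $G_B$ only a compact subset, $\mathcal{A}_{\mathrm{OOD}}$ need not be $G_B$-invariant. Yet the proof of Theorem~\ref{thm:symmetry_complexity_reduction} relies on exactly that invariance when it treats $\mathcal{C}_g$ as a bijection of $\mathcal{A}_{\mathrm{OOD}}$ and integrates against Haar measure on $G_B$.

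Your sketch of joint continuity is not load-bearing, since continuity is assumed, but it would not verify the hypothesis as written. On the lemma's bounded Lipschitz domain $D\subset\mathbb{R}^d$, translations do not map $L^2(D)$ to itself. On the square torus, rotations by arbitrary angles are not automorphisms, as the paper concedes for the Navier--Stokes experiment. So the \emph{isometric action} claim holds only for torus translations and right-angle rotations.
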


Under these assumptions, the standard Fourier Neural Operator approximation result applies on the canonical compact set.

\begin{theorem}
    \label{theorem:fno_universal}
    Given the premises established in Lemma \ref{lemma:compactness}, for any $\epsilon > 0$, there exists a Fourier Neural Operator $\mathcal{G}_\theta$ of sufficient depth or width such that
    \begin{equation*}
        \sup_{a \in \mathcal{A}_c} \|\mathcal{G}_\theta(a) - \mathcal{G}^\dagger(a)\|_{\mathcal{U}} \le \epsilon.
    \end{equation*}
\end{theorem}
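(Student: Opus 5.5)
The plan is to reduce Theorem~\ref{theorem:fno_universal} to the universal approximation theorem for Fourier Neural Operators of Kovachki, Lanthaler and Mishra. That theorem states: for a continuous operator $\mathcal{G}:H^s(\mathbb{T}^d)\to H^{s'}(\mathbb{T}^d)$, a compact set $K$, and $\epsilon>0$, some FNO approximates $\mathcal{G}$ uniformly on $K$ to accuracy $\epsilon$. The work therefore splits into three steps: verify that the premises of Lemma~\ref{lemma:compactness} give a continuous target on a compact set; match the function spaces to the setting of that result; and invoke it.

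\textbf{Continuity.} Lemma~\ref{lemma:compactness} assumes $\mathcal{G}^\dagger$ is locally Lipschitz on $\mathcal{A}_{\mathrm{OOD}}$, and $\mathcal{A}_c\subset\mathcal{A}_{\mathrm{OOD}}$ by taking $g=e\in G_B$. Local Lipschitz continuity implies continuity, so the restriction $\mathcal{G}^\dagger|_{\mathcal{A}_c}$ is continuous. Compactness of $\mathcal{A}_c$ is assumed directly. Although not needed here, the same continuity argument also gives compactness of $\mathcal{A}_{\mathrm{OOD}}$: it is the image of the compact set $G_B\times\mathcal{A}_c$ under the continuous action map.

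\textbf{Extension and space matching.} The universal theorem is usually stated for operators defined on the whole space, or at least on an open set. Since $\mathcal{A}_c$ is a compact subset of the metric space $L^2$, I would use Dugundji's extension theorem to extend $\mathcal{G}^\dagger|_{\mathcal{A}_c}$ to a continuous map $\bar{\mathcal{G}}:L^2(D;\mathbb{R}^{d_a})\to L^2(D;\mathbb{R}^{d_u})$ whose values lie in the convex hull of the compact image. Because approximation is only required on $\mathcal{A}_c$, the extension does not affect the conclusion.

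The domain is then identified with the torus, as in all the experiments. For a general Lipschitz domain $D$, I would embed $D$ in a box, extend fields by a bounded linear extension operator, and restrict the output afterward. Both maps are continuous, so compactness and continuity carry over to the periodic problem. Taking $s=s'=0$ places us in the $L^2=H^0$ setting of the theorem.

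\textbf{Invoking the theorem.} Applying the universal approximation result to $\bar{\mathcal{G}}$ on the compact set $\mathcal{A}_c$ yields an FNO $\mathcal{G}_\theta$ with $\sup_{a\in\mathcal{A}_c}\|\mathcal{G}_\theta(a)-\mathcal{G}^\dagger(a)\|_{\mathcal{U}}\le\epsilon$, which is the claim.

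The main obstacle is the domain mismatch. FNOs are defined on periodic grids, while the lemma allows a general Lipschitz $D$. The extension and restriction maps must preserve uniform approximation: the extension operator has to be bounded, and restriction is $1$-Lipschitz, so the error does not grow. If this is awkward, I would instead state the result directly for $D=\mathbb{T}^d$, which covers every case in the paper, and note the general case as a remark.
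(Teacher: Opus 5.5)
Your proposal is correct and follows the same route as the paper, which gives no separate argument and simply invokes the standard FNO universal approximation theorem (continuous target, compact input set) on $\mathcal{A}_c$. The steps you add are what the paper leaves implicit: continuity on $\mathcal{A}_c$ via $\mathcal{A}_c\subset\mathcal{A}_{\mathrm{OOD}}$ (this uses $e\in G_B$, which the lemma does not state but the paper also tacitly assumes), the Dugundji extension to all of $L^2$, and the extension/restriction step from the Lipschitz domain $D$ to a periodic box (for $L^2$, extension by zero already suffices).
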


\begin{remark}
    Theorem \ref{theorem:fno_universal} summarizes the standard compact-set approximation setting for neural operators. Its relevance to OOD prediction is limited: when an input field is transformed by a continuous Lie group element $g \in G$, the transformed data $a_{\mathrm{OOD}} = \mathcal{C}_g(a)$ may fall outside the original canonical compact set $\mathcal{A}_c$. The theorem then no longer applies directly to that transformed input. PACE-FNO addresses this specific geometric mismatch by estimating the generator and applying $\mathcal{C}_{g}^{-1}$ to map the observation back toward $\mathcal{A}_c$. The resulting guarantee is conditional on successful alignment; estimator error remains an explicit term in Theorem~\ref{thm:equiv_approx}.
\end{remark}

The input geometric alignment error is
\[
\epsilon_{\mathrm{geo},0}
:=
\sup_{a \in \mathcal{A}_{\mathrm{OOD}}}
\|\mathcal{C}_{\hat{g}_0}^{-1}(a) - \mathcal{C}_{g_0}^{-1}(a)\|_{\mathcal{A}},
\]
and the terminal alignment error is
\[
\epsilon_{\mathrm{geo},T}
:=
\sup_{a_c \in \mathcal{A}_c}
\|\mathcal{C}_{\hat{g}_T}(\mathcal{G}^\dagger(a_c)) - \mathcal{C}_{g_T}(\mathcal{G}^\dagger(a_c))\|_{\mathcal{U}}.
\]

\begin{lemma}[Isometry and Lipschitz Action of the Lie Group]
\label{lemma:Lipschitz_1}
    Assume that the group action $\mathcal{C}_g$ on the output space $\mathcal{U}$ is an isometric representation. For the continuum translations and measure-preserving rotations modeled in our experiments, the $L^2$ norm is preserved: $\|\mathcal{C}_g(u)\|_{\mathcal{U}} = \|u\|_{\mathcal{U}}$ for any $g \in G_B$ and $u \in \mathcal{U}$. More generally, it is enough to assume that there exists a constant $K_C \ge 1$ such that, for any $g \in G_B$ and $u_1,u_2\in\mathcal{U}$,
    \[
    \|\mathcal{C}_g(u_1) - \mathcal{C}_g(u_2)\|_{\mathcal{U}} \le K_C \|u_1 - u_2\|_{\mathcal{U}}.
    \]
\end{lemma}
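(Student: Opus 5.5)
The plan is to verify the isometry claim directly from the definition of the induced action, $[\mathcal{C}_g u](x)=u(g^{-1}\cdot x)$, and then read off the Lipschitz bound with $K_C=1$, which shows that the more general hypothesis with $K_C\ge 1$ is consistent with the isometric case.

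\emph{Norm preservation.} I would first note that $\mathcal{C}_g$ is linear in $u$, since composition with a fixed map of the domain is linear. For $g\in G_B$ acting by a translation $x\mapsto x+\tau$ on $\mathbb{T}^d$, or by a rotation $x\mapsto R_\theta x$ that is measure-preserving on the domain used, I would substitute $y=g^{-1}\cdot x$ in
\[
\|\mathcal{C}_g u\|_{\mathcal{U}}^2=\int_D |u(g^{-1}\cdot x)|^2\,dx .
\]
The Jacobian has $|\det|=1$, because translation has Jacobian the identity and $\det R_\theta=1$. Since $g$ maps $D$ onto itself (periodically for the torus), this gives $\|\mathcal{C}_g u\|_{\mathcal{U}}=\|u\|_{\mathcal{U}}$.

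\emph{Spectral implementation.} As a cross-check, I would note that the discrete actions are also unitary. By Parseval, the Fourier shift multiplies each coefficient by the unimodular phase $\exp(-2\pi i\langle \mathbf{k},\boldsymbol{\tau}\rangle)$, so it preserves the $\ell^2$ norm of the coefficients. Each shear in the three-shear factorization is a row-wise Fourier phase shift and is therefore also unitary, so their composition is unitary.

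\emph{Lipschitz bound.} With linearity and norm preservation in hand,
\[
\|\mathcal{C}_g u_1-\mathcal{C}_g u_2\|_{\mathcal{U}}=\|\mathcal{C}_g(u_1-u_2)\|_{\mathcal{U}}=\|u_1-u_2\|_{\mathcal{U}},
\]
so the stated inequality holds with $K_C=1$. For the Galilean component, the action is affine: it adds a constant $v_{\mathrm{bg}}$, or $h\mathbf{V}$ on the momentum channels in SWE, composed with a translation. For a pure additive constant the shift cancels in differences, so $K_C=1$ again. For the SWE boost $[h,\mathbf m]\mapsto[h,\mathbf m+h\mathbf V]$, the map is linear with operator norm at most $1+|\mathbf V|$ (by the triangle inequality), which is bounded over compact $G_B$. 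Taking $K_C$ as the supremum of these constants over $G_B$ gives the general statement.

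The main obstacle is the rotation case on the square torus. A generic rotation is not an automorphism of $\mathbb{T}^2$, so the change of variables is only exact for lattice-preserving angles or for the ``controlled rotated-field construction'' the paper adopts, for instance fields compactly supported inside the fundamental cell. I would handle this by stating the isometry under that construction and otherwise falling back on the Lipschitz hypothesis with some $K_C\ge1$, which the lemma explicitly allows.
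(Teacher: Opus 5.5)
Your verification is correct, but the paper has no proof to compare it with. The statement is posed as a hypothesis for Theorem~\ref{thm:equiv_approx} (``Assume that\dots'', ``it is enough to assume\dots''), and the paper only asserts, without argument, that translations and measure-preserving rotations preserve the $L^2$ norm. Your unit-Jacobian change of variables is exactly the argument behind that assertion, and linearity then gives $K_C=1$. The paper's formulation buys modularity: any action with a uniform Lipschitz constant plugs into the error decomposition. Yours checks that the actions actually used qualify, and it sharpens two points.

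For rotations you need not fall back on an assumed $K_C$. Your own spectral cross-check shows the implemented three-shear map is unitary, since each shear cyclically shifts every line of the fundamental cell. So $K_C=1$ holds for every angle used. The square-torus issue bears only on whether this map is a true rotation, i.e.\ on the equivariance of $\mathcal{G}^\dagger$ assumed in Lemma~\ref{lemma:compactness}, not on this lemma.

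For the Galilean factor, your analysis shows the norm-preservation clause cannot hold on all of $G_B$. The additive Burgers boost is distance-preserving but not norm-preserving. The SWE boost $[h,\mathbf m]\mapsto[h,\mathbf m+h\mathbf V]$ has operator norm strictly above $1$ when $\mathbf V\neq 0$ (at most $1+|\mathbf V|$, as you note). Consequently, the step in the proof of Theorem~\ref{thm:equiv_approx} that uses ``the isometry case'' to bound term (II) by $\epsilon$ is valid for the spatial and Burgers actions, but for SWE it yields only $K_C\epsilon$.
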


\begin{lemma}[Lipschitz Continuity of the Learned Operator]
\label{lemma:Lipschitz_2}
    Assume that the trained Fourier Neural Operator $\mathcal{G}_\theta$ is Lipschitz on a neighborhood of $\mathcal{A}_c$ that contains the pulled-back estimates considered in the proof. That is, for some $L_\theta \ge 1$ and all $a_1,a_2$ in this neighborhood,
    \[
    \|\mathcal{G}_\theta(a_1) - \mathcal{G}_\theta(a_2)\|_{\mathcal{U}} \le L_\theta \|a_1 - a_2\|_{\mathcal{A}}.
    \]
\end{lemma}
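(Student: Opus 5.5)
The lemma is stated as a standing hypothesis, but it can be justified from the layer structure of $\mathcal{G}_\theta$, so the plan is to exhibit an explicit (possibly loose) Lipschitz constant. I will write the FNO as the composition
\[
\mathcal{G}_\theta = \mathcal{Q}\circ \mathcal{L}_{L}\circ\cdots\circ\mathcal{L}_1\circ\mathcal{P},
\]
where $\mathcal{P}$ and $\mathcal{Q}$ are the pointwise lifting and projection maps, and each Fourier layer is $\mathcal{L}_t(v)=\sigma\big(Wv+\mathcal{F}^{-1}(R_\phi\cdot\mathcal{F}v)\big)$. Since a composition of Lipschitz maps is Lipschitz with constant equal to the product of the constants, it suffices to bound each factor on $L^2(D)$.

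The bound proceeds factor by factor.
\begin{itemize}
\item The pointwise linear maps $W$, $\mathcal{P}$ and $\mathcal{Q}$ (affine, if biases are present) have $L^2$-Lipschitz constants equal to the operator norms of their matrices, because they act identically at every $x$.
\item For the spectral branch, I use Parseval on the torus. The truncation keeps finitely many modes, and $R_\phi$ multiplies each retained mode by a matrix $R_\phi(k)$. Hence $\|\mathcal{F}^{-1}(R_\phi\cdot\mathcal{F}v)\|_{L^2}\le \max_{k}\|R_\phi(k)\|_{\mathrm{op}}\,\|v\|_{L^2}$.
\item The activation $\sigma$ (ReLU or GELU) is globally Lipschitz with a constant $L_\sigma$, so the Nemytskii map $v\mapsto\sigma(v)$ is $L_\sigma$-Lipschitz on $L^2$.
\end{itemize}
Each layer is therefore Lipschitz with constant $L_\sigma\big(\|W\|+\max_k\|R_\phi(k)\|\big)$. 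Multiplying the layer constants gives a global constant $L_\theta$ valid on all of $\mathcal{A}$, and a fortiori on any neighborhood of $\mathcal{A}_c$ containing the pulled-back estimates $\mathcal{C}^{-1}_{\hat g_0}(a)$. Replacing $L_\theta$ by $\max(L_\theta,1)$ gives the normalization $L_\theta\ge1$ without loss of generality.

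The main obstacle is that the statement is really about the trained network at the discretization used in the experiments, with inputs evaluated across resolutions. One must check that the spectral bound is resolution-independent. Because only a fixed finite set of modes is retained, Parseval yields the same constant at any grid fine enough to resolve those modes, so I will state the bound in the continuum $L^2$ norm and remark that it transfers to the grid norm.

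A second subtlety is that $L_\theta$ may be very large, which makes the resulting bound in Theorem~\ref{thm:equiv_approx} loose. The lemma needs only finiteness, so I will note this looseness rather than try to sharpen it. If sharper local constants are wanted, the fallback is to restrict to the stated neighborhood and use local Lipschitz bounds of $\sigma$ there. That gives the weaker-looking form actually stated in the lemma.
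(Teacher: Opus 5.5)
Your argument is correct, but it takes a genuinely different route from the paper, because the paper gives no proof at all. Lemma~\ref{lemma:Lipschitz_2} is a standing hypothesis (``Assume that \dots''). It is imposed only locally, on a neighborhood of $\mathcal{A}_c$ containing the pulled-back inputs $\mathcal{C}^{-1}_{\hat g_0}(a)$, and it is used directly in step (I) of the proof of Theorem~\ref{thm:equiv_approx}.

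You instead derive the property from the architecture. Pointwise lift, projection and $W_t$, truncated spectral multipliers bounded via Parseval, and a globally Lipschitz Nemytskii activation compose to the global constant
$L_\theta \le \|\mathcal{Q}\|\,\|\mathcal{P}\|\prod_t L_\sigma\bigl(\|W_t\|+\max_k\|R_t(k)\|\bigr)$
on all of $L^2$. This trivially implies the lemma's local form, and enlarging the constant to $\max(L_\theta,1)$ is harmless.

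\textbf{What your route buys.} The hypothesis becomes automatic and checkable for the vanilla FNO written in the Preliminaries (periodic domain, $L^2$ norms, ReLU/GELU, finitely many retained modes). The constant is explicit and resolution-independent. The caveat that the neighborhood must contain the pulled-back estimates disappears entirely.

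\textbf{What the paper's route buys.} As a bare assumption it is architecture-agnostic. It also covers predictors that are not globally Lipschitz on $L^2$, for example those with instance or layer normalization, or settings with Sobolev-type norms where even the ReLU Nemytskii map fails to be Lipschitz. It also permits a local constant far smaller than your layerwise product. That product typically grows geometrically with depth and makes $M_0=K_CL_\theta$ in Theorem~\ref{thm:equiv_approx} very loose, as you note.

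\textbf{One correction.} Your fallback of sharpening via local Lipschitz bounds of $\sigma$ ``on the neighborhood'' does not work as stated. An $L^2$-neighborhood gives no pointwise control of the pre-activations $W_t v+\mathcal{F}^{-1}(R_t\cdot\mathcal{F}v)$: the $W_t v$ term can be arbitrarily large in $L^\infty$. So $\sigma$ cannot be restricted to a bounded interval. The fallback is not needed, however, since your global bound already yields the stated local form.
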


\begin{proof}[Proof of Theorem \ref{thm:equiv_approx}]
    Let $a \in \mathcal{A}_{\mathrm{OOD}}$ be an arbitrary out-of-distribution input. By Lemma \ref{lemma:compactness}, there exists a canonical state $a_c \in \mathcal{A}_c$ and a true initial group element $g_0 \in G_B$ such that
    \[
    a = \mathcal{C}_{g_0}(a_c) \quad \implies \quad a_c = \mathcal{C}_{g_0}^{-1}(a)
    \]
    Because $\mathcal{G}^\dagger$ is equivariant for the transformations considered, its value on the OOD state can be written using the canonical state and the induced terminal group element $g_T$:
    \[
    \mathcal{G}^\dagger(a) = \mathcal{G}^\dagger(\mathcal{C}_{g_0}(a_c)) = \mathcal{C}_{g_T}(\mathcal{G}^\dagger(a_c))
    \]
    The triangle inequality separates the total prediction error into input-alignment, canonical-approximation, and terminal-alignment terms:
    \begin{align*}
    \| \mathcal{C}_{\hat{g}_T}(\mathcal{G}_\theta(\mathcal{C}_{\hat{g}_0}^{-1}(a))) - \mathcal{C}_{g_T}(\mathcal{G}^\dagger(a_c)) \|_{\mathcal{U}}
    &= \Big\| \mathcal{C}_{\hat{g}_T}(\mathcal{G}_\theta(\mathcal{C}_{\hat{g}_0}^{-1}(a))) - \mathcal{C}_{\hat{g}_T}(\mathcal{G}_\theta(a_c)) \notag\\
    &\quad + \mathcal{C}_{\hat{g}_T}(\mathcal{G}_\theta(a_c)) - \mathcal{C}_{\hat{g}_T}(\mathcal{G}^\dagger(a_c)) \notag\\
    &\quad + \mathcal{C}_{\hat{g}_T}(\mathcal{G}^\dagger(a_c)) - \mathcal{C}_{g_T}(\mathcal{G}^\dagger(a_c)) \Big\|_{\mathcal{U}}\\
    &\le \underbrace{\| \mathcal{C}_{\hat{g}_T}(\mathcal{G}_\theta(\mathcal{C}_{\hat{g}_0}^{-1}(a))) - \mathcal{C}_{\hat{g}_T}(\mathcal{G}_\theta(a_c)) \|_{\mathcal{U}}}_{\mathrm{(I)}\ \text{Initial alignment error}} \notag\\
    &\quad + \underbrace{\| \mathcal{C}_{\hat{g}_T}(\mathcal{G}_\theta(a_c)) - \mathcal{C}_{\hat{g}_T}(\mathcal{G}^\dagger(a_c)) \|_{\mathcal{U}}}_{\mathrm{(II)}\ \text{Canonical-domain approximation error}} \notag\\
    &\quad + \underbrace{\| \mathcal{C}_{\hat{g}_T}(\mathcal{G}^\dagger(a_c)) - \mathcal{C}_{g_T}(\mathcal{G}^\dagger(a_c)) \|_{\mathcal{U}}}_{\mathrm{(III)}\ \text{Terminal group-action mismatch}}.
\end{align*}
    For the first term, Lemmas \ref{lemma:Lipschitz_1} and \ref{lemma:Lipschitz_2} give
    \[
    \mathrm{(I)}
\le
K_C \|\mathcal{G}_\theta(\mathcal{C}_{\hat{g}_0}^{-1}(a)) - \mathcal{G}_\theta(a_c)\|_{\mathcal{U}}
\le
K_C \cdot L_\theta \|\mathcal{C}_{\hat{g}_0}^{-1}(a) - \mathcal{C}_{g_0}^{-1}(a)\|_{\mathcal{A}}.
    \]
    By the definition of $\epsilon_{\mathrm{geo},0}$, this term is bounded by $M_0\epsilon_{\mathrm{geo},0}$, where $M_0:=K_C L_\theta$. For the canonical-domain approximation term, the isometry case of Lemma \ref{lemma:Lipschitz_1} and Theorem \ref{theorem:fno_universal} give
    \[
    \mathrm{(II)}
=
\|\mathcal{G}_\theta(a_c)-\mathcal{G}^\dagger(a_c)\|_{\mathcal{U}} \le \epsilon.
    \]
    The third term is exactly bounded by the definition of the terminal alignment error:
    \[
    \mathrm{(III)} \le \epsilon_{\mathrm{geo},T}.
    \]
    Combining the three bounds and taking the supremum over $a \in \mathcal{A}_{\mathrm{OOD}}$ gives
    \begin{equation*}
        \sup_{a \in \mathcal{A}_{\mathrm{OOD}}}\| \mathcal{C}_{\hat{g}_T} \circ \mathcal{G}_\theta \circ \mathcal{C}_{\hat{g}_0}^{-1}(a) - \mathcal{G}^\dagger(a) \|_{\mathcal{U}} \le M_0 \epsilon_{\mathrm{geo},0} + \epsilon + \epsilon_{\mathrm{geo},T}.
    \end{equation*}
\end{proof}

\begin{proof}[Proof of Theorem \ref{thm:symmetry_complexity_reduction}]
Let $\nu$ be the normalized Haar measure on the compact group $G_B$. Define the orbit-averaging projection $\Pi_G:\mathcal{F}\to\mathcal{F}_{G}$ by
\[
    (\Pi_G F)(a)
    =
    \int_{G_B}
    \mathcal{C}_g^{-1}F(\mathcal{C}_g a)\,d\nu(g),
\]
where the integral is understood in the Banach space $\mathcal{U}$. For any $h\in G_B$, the change of variables $k=gh$ and the right invariance of $\nu$ give
\begin{align*}
    (\Pi_G F)(\mathcal{C}_h a)
    &=
    \int_{G_B}
    \mathcal{C}_g^{-1}F(\mathcal{C}_g\mathcal{C}_h a)\,d\nu(g) \\
    &=
    \int_{G_B}
    \mathcal{C}_{kh^{-1}}^{-1}F(\mathcal{C}_k a)\,d\nu(k) \\
    &=
    \mathcal{C}_h
    \int_{G_B}
    \mathcal{C}_k^{-1}F(\mathcal{C}_k a)\,d\nu(k)
    =
    \mathcal{C}_h(\Pi_G F)(a).
\end{align*}
Thus $\Pi_G F$ is $G_B$-equivariant.

Next, because each $\mathcal{C}_g$ is an isometry and a bijection on $\mathcal{A}_{\mathrm{OOD}}$,
\begin{align*}
    d_\infty(\Pi_G F,\Pi_G H)
    &=
    \sup_{a\in\mathcal{A}_{\mathrm{OOD}}}
    \left\|
    \int_{G_B}
    \mathcal{C}_g^{-1}
    \big(F(\mathcal{C}_g a)-H(\mathcal{C}_g a)\big)
    \,d\nu(g)
    \right\|_{\mathcal{U}} \\
    &\le
    \sup_{a\in\mathcal{A}_{\mathrm{OOD}}}
    \int_{G_B}
    \|F(\mathcal{C}_g a)-H(\mathcal{C}_g a)\|_{\mathcal{U}}
    \,d\nu(g) \\
    &\le
    d_\infty(F,H).
\end{align*}
Hence $\Pi_G$ is non-expansive. If $\{F_j\}_{j=1}^{m}$ is an $r$-cover of $\mathcal{F}$ under $d_\infty$, then $\{\Pi_G F_j\}_{j=1}^{m}$ is an $r$-cover of $\mathcal{F}_{G}=\Pi_G(\mathcal{F})$. Therefore,
\[
    \mathcal{N}(\mathcal{F}_{G},r,d_\infty)
    \le
    \mathcal{N}(\mathcal{F},r,d_\infty).
\]

It remains to connect this complexity reduction to the risk. Let $\mathcal{R}$ and $\widehat{\mathcal{R}}_N$ be as defined in~\eqref{eq:gen_error_def}. Applying Lemma~\ref{lemma:covering_to_gen} to $\mathcal{F}_G$ gives, with probability at least $1-\delta$,
\[
    \sup_{F\in\mathcal{F}_G}|\mathcal{R}(F)-\widehat{\mathcal{R}}_N(F)|
    \le
    2L_\ell\epsilon + B\sqrt{\frac{\log\!\big(2\,\mathcal{N}(\mathcal{F}_G,\epsilon,d_\infty)/\delta\big)}{2N}}.
\]
Since $\mathcal{N}(\mathcal{F}_G,\epsilon,d_\infty)\le\mathcal{N}(\mathcal{F},\epsilon,d_\infty)$, this bound is no larger than the analogous bound for the unconstrained class $\mathcal{F}$.

Finally, suppose $F$ is approximately equivariant in the sense that the residual
\[
\eta_G(F)
:=
\sup_{\substack{g\in G_B\\ a\in\mathcal{A}_{\mathrm{OOD}}}}
\left\|
(F\circ\mathcal{C}_g)(a)-(\mathcal{C}_g\circ F)(a)
\right\|_{\mathcal{U}}
\]
satisfies $\eta_G(F)\le\eta$. By applying the same bound after $\mathcal{C}_g^{-1}$ and using isometry,
\[
    \|\mathcal{C}_g^{-1}F(\mathcal{C}_g a)-F(a)\|_{\mathcal{U}}
    \le
    \eta
\]
for every $g\in G_B$ and $a\in\mathcal{A}_{\mathrm{OOD}}$. Averaging gives
\[
    d_\infty(F,\Pi_G F)\le \eta.
\]
The $L_\ell$-Lipschitz property of the loss then implies
\[
    |\mathcal{R}(F)-\mathcal{R}(\Pi_G F)|\le L_\ell\eta,
    \qquad
    |\widehat{\mathcal{R}}_N(F)-\widehat{\mathcal{R}}_N(\Pi_G F)|\le L_\ell\eta.
\]
Therefore the generalization gap of $F$ is bounded by the equivariant-class gap of $\Pi_G F$ plus $2L_\ell\eta$.
\end{proof}

\begin{lemma}[Local Smoothness and Quadratic Growth Around the Minima Manifold]
\label{lemma:quadratic_growth_manifold}
Assume that the zero set $\Xi^*$ is nonempty, and let
\[
\mathcal{J}_{\mathrm{TTA}} : \mathfrak{g} \to \mathbb{R}_{\ge 0}
\]
be continuously differentiable on an open neighborhood $\mathcal{N}(\Xi^*) \subset \mathfrak{g}$ of
\[
\Xi^* := \{ \xi \in \mathfrak{g} \mid \mathcal{J}_{\mathrm{TTA}}(\xi)=0 \}.
\]
Assume further that there exists $L>0$ such that, for all $\hat{\xi}_1,\hat{\xi}_2 \in \mathcal{N}(\Xi^*)$,
\begin{equation*}
    \|\nabla \mathcal{J}_{\mathrm{TTA}}(\hat{\xi}_1)-\nabla \mathcal{J}_{\mathrm{TTA}}(\hat{\xi}_2)\|_{\mathfrak{g}}
    \le
    L \|\hat{\xi}_1-\hat{\xi}_2\|_{\mathfrak{g}},
\end{equation*}
and that there exists $\alpha>0$ such that, for all $\hat{\xi} \in \mathcal{N}(\Xi^*)$,
\begin{equation*}
    \mathcal{J}_{\mathrm{TTA}}(\hat{\xi})
    \ge
    \alpha\,\mathrm{dist}(\hat{\xi},\Xi^*)^2,
\end{equation*}
where
\[
\mathrm{dist}(\hat{\xi},\Xi^*)
:=
\inf_{\xi' \in \Xi^*}\|\hat{\xi}-\xi'\|_{\mathfrak{g}}.
\]
Since $\Xi^*$ is the zero set of $\mathcal{J}_{\mathrm{TTA}}$, we have
\[
\min \mathcal{J}_{\mathrm{TTA}} = 0.
\]
The quadratic-growth condition immediately yields the local error bound \cite{karimi2016linear}
\begin{equation*}
    \mathrm{dist}(\hat{\xi},\Xi^*)
    \le
    \alpha^{-1/2}\,\mathcal{J}_{\mathrm{TTA}}(\hat{\xi})^{1/2}.
\end{equation*}
\end{lemma}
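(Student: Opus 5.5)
The only nontrivial content of Lemma~\ref{lemma:quadratic_growth_manifold} is the concluding local error bound; the identity $\min\mathcal{J}_{\mathrm{TTA}}=0$ comes first and is immediate. I would prove both directly from the quadratic-growth hypothesis, without using the gradient-Lipschitz assumption. That assumption is not needed for the error bound itself. It is listed only because it will be used later for the gradient-flow and descent statements about TTA.

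First, I would record that $\mathcal{J}_{\mathrm{TTA}}$ takes values in $\mathbb{R}_{\ge 0}$ by definition. It is a squared Euclidean norm in Eq.~\eqref{eq:tta_main}, and an average of squares or of $1-\cos$ terms in the appendix variants. So $\inf\mathcal{J}_{\mathrm{TTA}}\ge 0$. Since $\Xi^*$ is nonempty and every $\xi\in\Xi^*$ satisfies $\mathcal{J}_{\mathrm{TTA}}(\xi)=0$, this infimum is attained, and $\min\mathcal{J}_{\mathrm{TTA}}=0$. Consequently $\Xi^*$ coincides with the set of global minimizers, which justifies calling it the minima manifold.

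Second, I would fix $\hat\xi\in\mathcal{N}(\Xi^*)$ and start from the growth inequality $\alpha\,\mathrm{dist}(\hat\xi,\Xi^*)^2\le\mathcal{J}_{\mathrm{TTA}}(\hat\xi)$. Both sides are nonnegative and $\alpha>0$, so I can divide by $\alpha$. Since $t\mapsto t^{1/2}$ is monotone on $[0,\infty)$, taking square roots gives
\[
\mathrm{dist}(\hat\xi,\Xi^*)\le\alpha^{-1/2}\mathcal{J}_{\mathrm{TTA}}(\hat\xi)^{1/2}.
\]
The distance is finite because $\Xi^*\neq\emptyset$, so the square root is well defined and no closure argument is needed. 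The infimum need not even be attained.

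The main obstacle is bookkeeping rather than analysis: the bound is only claimed on $\mathcal{N}(\Xi^*)$, where quadratic growth is assumed. I would state explicitly that nothing is asserted outside that neighborhood. For instance, the non-convex landscape of Figure~\ref{fig:ood_tta_landscape} may contain spurious basins away from $\Xi^*$, where the bound can fail. Finally, I would note the connection to \cite{karimi2016linear}: together with the $L$-smoothness assumption, quadratic growth is the form needed later to relate $\|\nabla\mathcal{J}_{\mathrm{TTA}}\|$ to the distance to $\Xi^*$.
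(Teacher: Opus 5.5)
Your argument is correct and matches the paper's inline justification. Nonnegativity of $\mathcal{J}_{\mathrm{TTA}}$ together with nonemptiness of $\Xi^*$ gives $\min\mathcal{J}_{\mathrm{TTA}}=0$, and dividing the quadratic-growth inequality by $\alpha$ and taking square roots on $\mathcal{N}(\Xi^*)$ yields the error bound. You are also right that the gradient-Lipschitz hypothesis plays no role here: the paper uses it only in the Barbalat step of the proof of Theorem~\ref{thm:tta_local_stability_qg}.
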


\begin{theorem}[Monotone Energy Decay for Test-Time Refinement Near the Minima Manifold]
\label{thm:tta_local_stability_qg}
Assume the hypotheses of Lemma~\ref{lemma:quadratic_growth_manifold}. Consider a global solution of the continuous-time gradient flow
\[
\frac{d\hat{\xi}}{dt}
=
-\nabla \mathcal{J}_{\mathrm{TTA}}(\hat{\xi}(t)).
\]
Let the initial point $\hat{\xi}^{(0)}$ satisfy
\[
\mathcal{S}_0
:=
\Big\{
\xi \in \mathfrak{g}
\;\Big|\;
\mathcal{J}_{\mathrm{TTA}}(\xi)
\le
\mathcal{J}_{\mathrm{TTA}}(\hat{\xi}^{(0)})
\Big\}
\subset \mathcal{N}(\Xi^*).
\]
Then, for all $t\ge 0$, the energy is nonincreasing along the flow:
\begin{equation*}
    \frac{d}{dt}\mathcal{J}_{\mathrm{TTA}}(\hat{\xi}(t))
    =
    -\|\nabla \mathcal{J}_{\mathrm{TTA}}(\hat{\xi}(t))\|_{\mathfrak{g}}^2
    \le 0.
\end{equation*}
Consequently,
\begin{equation*}
    \mathcal{J}_{\mathrm{TTA}}(\hat{\xi}(t))
    +
    \int_0^t
    \|\nabla \mathcal{J}_{\mathrm{TTA}}(\hat{\xi}(s))\|_{\mathfrak{g}}^2\,ds
    =
    \mathcal{J}_{\mathrm{TTA}}(\hat{\xi}^{(0)}),
\end{equation*}
and the trajectory remains in $\mathcal{S}_0 \subset \mathcal{N}(\Xi^*)$. There exists
\[
\mathcal{J}_\infty \in [0,\mathcal{J}_{\mathrm{TTA}}(\hat{\xi}^{(0)})]
\]
such that
\begin{equation*}
    \lim_{t\to\infty}\mathcal{J}_{\mathrm{TTA}}(\hat{\xi}(t))
    =
    \mathcal{J}_\infty.
\end{equation*}
In addition,
\begin{equation*}
    \int_0^\infty
    \|\nabla \mathcal{J}_{\mathrm{TTA}}(\hat{\xi}(t))\|_{\mathfrak{g}}^2\,dt
    =
    \mathcal{J}_{\mathrm{TTA}}(\hat{\xi}^{(0)})-\mathcal{J}_\infty
    < \infty,
\end{equation*}
and
\begin{equation*}
    \lim_{t\to\infty}
    \|\nabla \mathcal{J}_{\mathrm{TTA}}(\hat{\xi}(t))\|_{\mathfrak{g}}
    = 0.
\end{equation*}
The distance to the minima manifold is controlled by the energy:
\begin{equation*}
    \mathrm{dist}(\hat{\xi}(t),\Xi^*)
    \le
    \alpha^{-1/2}\,\mathcal{J}_{\mathrm{TTA}}(\hat{\xi}(t))^{1/2}
    \le
    \alpha^{-1/2}\,\mathcal{J}_{\mathrm{TTA}}(\hat{\xi}^{(0)})^{1/2}.
\end{equation*}
If, in addition, $\mathcal{J}_\infty = 0$, then
\begin{equation*}
    \lim_{t\to\infty}\mathrm{dist}(\hat{\xi}(t),\Xi^*) = 0.
\end{equation*}
If the sublevel set $\mathcal{S}_0$ is precompact, then the trajectory has accumulation points, and every accumulation point is a critical point of $\mathcal{J}_{\mathrm{TTA}}$.
\end{theorem}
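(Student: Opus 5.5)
The argument is the standard Lyapunov computation for gradient flows, followed by the error bound of Lemma~\ref{lemma:quadratic_growth_manifold}. I will write $J:=\mathcal{J}_{\mathrm{TTA}}$ and $\xi(t):=\hat{\xi}(t)$.

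\textbf{Energy identity and invariance of $\mathcal{S}_0$.} As long as the trajectory stays in $\mathcal{N}(\Xi^*)$, $J$ is $C^1$ there. The chain rule then gives
\[
\tfrac{d}{dt}J(\xi(t))=\langle\nabla J(\xi(t)),\dot\xi(t)\rangle=-\|\nabla J(\xi(t))\|_{\mathfrak g}^2\le 0 .
\]
Integrating from $0$ to $t$ yields the stated energy identity. Invariance follows from a continuity argument. Let $t^*=\sup\{t: \xi(s)\in\mathcal{S}_0 \text{ for all } s\le t\}$. On $[0,t^*)$ the energy is nonincreasing, so $J(\xi(t))\le J(\xi^{(0)})$. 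Because $\mathcal{S}_0\subset\mathcal{N}(\Xi^*)$ and $\mathcal{N}$ is open, the trajectory is still in $\mathcal{N}$ slightly beyond $t^*$, where the derivative is still nonpositive. Hence the trajectory stays in $\mathcal{S}_0$ beyond $t^*$, which forces $t^*=\infty$. The Lipschitz gradient gives local existence and uniqueness; the statement already assumes a global solution.

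\textbf{Convergence of the energy and of the gradient.} $J(\xi(t))$ is nonincreasing and bounded below by $0$, so it has a limit $\mathcal{J}_\infty\in[0,J(\xi^{(0)})]$. Letting $t\to\infty$ in the energy identity gives finiteness of $\int_0^\infty\|\nabla J\|^2$.

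To get $\|\nabla J(\xi(t))\|\to0$ I use a Barbalat-type argument. Set $h(t)=\|\nabla J(\xi(t))\|^2$. Since $\nabla J$ is $L$-Lipschitz on $\mathcal{N}$ and $\|\dot\xi\|=\|\nabla J\|$, we get
\[
\bigl|\|\nabla J(\xi(t))\|-\|\nabla J(\xi(s))\|\bigr|\le L\int_s^t\|\nabla J(\xi(r))\|\,dr .
\]
This is the main obstacle: the bound does not make $h$ uniformly continuous unless $\|\nabla J\|$ is bounded along the trajectory. Precompactness of $\mathcal{S}_0$ would supply that bound, but only the final clause assumes it. My first route avoids boundedness. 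Compute
\[
\tfrac{d}{dt}\tfrac12\|\nabla J(\xi(t))\|^2=-\langle \nabla J,\nabla^2J\,\nabla J\rangle\le L\|\nabla J\|^2,
\]
interpreting this via difference quotients since $\nabla J$ is only Lipschitz. Then $h'\le 2Lh$ with $h\in L^1(0,\infty)$. If $h(t_n)\ge\varepsilon$ along a sequence $t_n\to\infty$, Grönwall run backward gives $h\ge\varepsilon e^{-2L}$ on each interval $[t_n-1,t_n]$. Infinitely many such disjoint intervals contradict integrability, so $h\to0$.

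\textbf{Distance control, zero limit, and accumulation points.} The distance bounds follow by applying the local error bound of Lemma~\ref{lemma:quadratic_growth_manifold} at $\xi(t)\in\mathcal{N}(\Xi^*)$ and then using monotonicity of $J$. If $\mathcal{J}_\infty=0$, the same bound sends $\mathrm{dist}(\xi(t),\Xi^*)\to0$. Finally, if $\mathcal{S}_0$ is precompact, the trajectory has accumulation points $\bar\xi=\lim\xi(t_n)$. The limit lies in $\overline{\mathcal{S}_0}$, which I assume sits in the region where $\nabla J$ extends continuously; that is where the Lipschitz assumption is used. Continuity of $\nabla J$ together with $\nabla J(\xi(t_n))\to0$ then gives $\nabla J(\bar\xi)=0$.
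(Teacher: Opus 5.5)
Your proof is correct and follows the paper's outline: chain rule and energy identity, monotone convergence of the energy, a Barbalat-type step for $\nabla\mathcal{J}_{\mathrm{TTA}}\to 0$, then the quadratic-growth error bound of Lemma~\ref{lemma:quadratic_growth_manifold}. The one genuine difference is how you show the gradient vanishes.

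The paper invokes Barbalat's lemma directly. It asserts that $t\mapsto\nabla\mathcal{J}_{\mathrm{TTA}}(\hat\xi(t))$ is uniformly continuous because $\nabla\mathcal{J}_{\mathrm{TTA}}$ is $L$-Lipschitz on $\mathcal{N}(\Xi^*)$. You instead derive $h'\le 2Lh$ for $h=\|\nabla\mathcal{J}_{\mathrm{TTA}}(\hat\xi(t))\|_{\mathfrak g}^2$ and run Gr\"onwall backward against $h\in L^1(0,\infty)$; this is valid and self-contained.

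The obstacle you flag is not actually there, though. Uniform continuity needs no a priori bound on $\|\nabla\mathcal{J}_{\mathrm{TTA}}\|$ along the trajectory. The energy identity and Cauchy--Schwarz give $\|\hat\xi(t)-\hat\xi(s)\|_{\mathfrak g}\le\int_s^t\|\nabla\mathcal{J}_{\mathrm{TTA}}(\hat\xi(r))\|_{\mathfrak g}\,dr\le\bigl(|t-s|\,\mathcal{J}_{\mathrm{TTA}}(\hat\xi^{(0)})\bigr)^{1/2}$. Hence $g(t)=\|\nabla\mathcal{J}_{\mathrm{TTA}}(\hat\xi(t))\|_{\mathfrak g}$ is uniformly continuous with modulus $L\bigl(|t-s|\,\mathcal{J}_{\mathrm{TTA}}(\hat\xi^{(0)})\bigr)^{1/2}$. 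Barbalat for a uniformly continuous $g\ge 0$ with $\int_0^\infty g^2<\infty$ then finishes. This is the justification the paper's one-line claim implicitly relies on; your differential-inequality route reaches the same conclusion without Barbalat.

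Your continuation argument for the invariance of $\mathcal{S}_0$, and your extra assumption in the accumulation-point clause, are more careful than the paper's ``hence'' and ``standard compactness arguments.'' Both are discharged by one observation. If $\mathcal{J}_{\mathrm{TTA}}$ is continuous on $\mathfrak g$, as the paper itself uses in the proof of Corollary~\ref{cor:equivalent_attractors_qg}, then $\mathcal{S}_0$ is closed. So $\hat\xi(t^*)\in\mathcal{S}_0\subset\mathcal{N}(\Xi^*)$ in your continuation step. Likewise every accumulation point already lies in $\mathcal{S}_0\subset\mathcal{N}(\Xi^*)$, where $\nabla\mathcal{J}_{\mathrm{TTA}}$ is continuous, so no additional hypothesis is needed.
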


\begin{proof}[Proof of Theorem \ref{thm:tta_local_stability_qg}]
Since
\[
\frac{d\hat{\xi}}{dt}
=
-\nabla \mathcal{J}_{\mathrm{TTA}}(\hat{\xi}(t)),
\]
the chain rule gives
\[
\frac{d}{dt}\mathcal{J}_{\mathrm{TTA}}(\hat{\xi}(t))
=
\left\langle
\nabla \mathcal{J}_{\mathrm{TTA}}(\hat{\xi}(t)),
\frac{d\hat{\xi}}{dt}
\right\rangle_{\mathfrak{g}}
=
-
\|\nabla \mathcal{J}_{\mathrm{TTA}}(\hat{\xi}(t))\|_{\mathfrak{g}}^2
\le 0.
\]
This proves monotonicity, and integrating over $[0,t]$ yields
\[
\mathcal{J}_{\mathrm{TTA}}(\hat{\xi}(t))
+
\int_0^t
\|\nabla \mathcal{J}_{\mathrm{TTA}}(\hat{\xi}(s))\|_{\mathfrak{g}}^2 ds
=
\mathcal{J}_{\mathrm{TTA}}(\hat{\xi}^{(0)}).
\]
Hence the trajectory remains in the initial sublevel set $\mathcal{S}_0$.

Because $\mathcal{J}_{\mathrm{TTA}}(\hat{\xi}(t))$ is nonincreasing and bounded below by $0$, it converges to a limit $\mathcal{J}_\infty \ge 0$. Passing to the limit $t\to\infty$ in the identity above gives
\[
\int_0^\infty
\|\nabla \mathcal{J}_{\mathrm{TTA}}(\hat{\xi}(s))\|_{\mathfrak{g}}^2 ds
=
\mathcal{J}_{\mathrm{TTA}}(\hat{\xi}^{(0)})-\mathcal{J}_\infty
< \infty.
\]

Since $\hat{\xi}(\cdot)$ is absolutely continuous and $\nabla \mathcal{J}_{\mathrm{TTA}}$ is $L$-Lipschitz on $\mathcal{N}(\Xi^*)$, the map
\[
t \mapsto \nabla \mathcal{J}_{\mathrm{TTA}}(\hat{\xi}(t))
\]
is uniformly continuous along the trajectory. Together with the square-integrability established above, Barbalat's lemma implies
\[
\|\nabla \mathcal{J}_{\mathrm{TTA}}(\hat{\xi}(t))\|_{\mathfrak{g}} \to 0
\qquad
\text{as } t\to\infty.
\]

The quadratic-growth assumption in Lemma~\ref{lemma:quadratic_growth_manifold} gives
\[
\mathcal{J}_{\mathrm{TTA}}(\hat{\xi}(t))
\ge
\alpha\,\mathrm{dist}(\hat{\xi}(t),\Xi^*)^2,
\]
and therefore
\[
\mathrm{dist}(\hat{\xi}(t),\Xi^*)
\le
\alpha^{-1/2}\,\mathcal{J}_{\mathrm{TTA}}(\hat{\xi}(t))^{1/2}.
\]
Since the energy is nonincreasing,
\[
\mathrm{dist}(\hat{\xi}(t),\Xi^*)
\le
\alpha^{-1/2}\,\mathcal{J}_{\mathrm{TTA}}(\hat{\xi}^{(0)})^{1/2}.
\]
Finally, if $\mathcal{J}_{\mathrm{TTA}}(\hat{\xi}(t)) \to 0$, the same inequality implies
\[
\mathrm{dist}(\hat{\xi}(t),\Xi^*) \to 0.
\]

If $\mathcal{S}_0$ is precompact, standard compactness arguments imply the existence of accumulation points. Since the gradient vanishes asymptotically, every accumulation point is a critical point of $\mathcal{J}_{\mathrm{TTA}}$.
\end{proof}

\begin{lemma}[Lipschitz Continuity of Inverse Group Action]
\label{lemma:lipschitz_action}
Let $G$ be a finite-dimensional Lie group with Lie algebra $\mathfrak{g}$, and let
\[
\mathcal{C}: G \times \mathcal{A} \to \mathcal{A}
\]
be a smooth group action on the physical field space $\mathcal{A}$. Let $U \subset \mathfrak{g}$ be an open neighborhood on which the exponential map is a diffeomorphism onto its image. Then, for any fixed out-of-distribution input $a_{\mathrm{OOD}} \in \mathcal{A}_{\mathrm{OOD}}$, there exists a constant $L_a > 0$ such that for all $\hat{\xi}_1, \hat{\xi}_2 \in U$,
\begin{equation*}
    \big\|
    \mathcal{C}_{\exp(\hat{\xi}_1)}^{-1}(a_{\mathrm{OOD}})
    -
    \mathcal{C}_{\exp(\hat{\xi}_2)}^{-1}(a_{\mathrm{OOD}})
    \big\|_{\mathcal{A}}
    \le
    L_a \|\hat{\xi}_1 - \hat{\xi}_2\|_{\mathfrak{g}}.
\end{equation*}
In other words, the inverse group action parameterized through the exponential map is locally Lipschitz continuous with respect to the Lie algebra coordinates.
\end{lemma}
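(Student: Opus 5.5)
We prove the Lipschitz bound by treating the map $\Phi_a:U\to\mathcal{A}$, $\Phi_a(\xi):=\mathcal{C}_{\exp(\xi)}^{-1}(a_{\mathrm{OOD}})=\mathcal{C}_{\exp(-\xi)}(a_{\mathrm{OOD}})$, as a composition of smooth maps and applying the mean value inequality in the Banach space $\mathcal{A}$. The steps, in order:
\begin{enumerate}
\item The exponential map $\exp:\mathfrak{g}\to G$ is smooth, and so is inversion $g\mapsto g^{-1}$ in a Lie group. The orbit map $g\mapsto\mathcal{C}_g(a_{\mathrm{OOD}})$ is smooth because the action $\mathcal{C}$ is jointly smooth. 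Hence $\Phi_a$ is $C^1$ on $U$.
\item The derivative is explicit. Writing $g=\exp(-\xi)$, we have $D\Phi_a(\xi)[\eta]=D_g\mathcal{C}(g,a_{\mathrm{OOD}})\big[D\exp(-\xi)[-\eta]\big]$. Here $D\exp$ is expressed through the standard formula $(1-e^{-\mathrm{ad}_\xi})/\mathrm{ad}_\xi$ after left translation. Its operator norm is therefore bounded by a continuous function of $\xi$.
\item For $\hat\xi_1,\hat\xi_2\in U$, the mean value inequality along a path $\gamma$ from $\hat\xi_2$ to $\hat\xi_1$ gives $\|\Phi_a(\hat\xi_1)-\Phi_a(\hat\xi_2)\|_{\mathcal{A}}\le \sup_{\gamma}\|D\Phi_a\|\cdot\mathrm{length}(\gamma)$.
\end{enumerate}

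The main obstacle is making the constant $L_a$ uniform and finite on $U$, where two issues arise.
\begin{itemize}
\item \emph{Boundedness.} A continuous operator norm need not be bounded on an open, possibly noncompact set. I would first shrink $U$ to a precompact (e.g.\ ball-shaped) neighborhood with $\overline U$ inside the domain where $\exp$ is a diffeomorphism. Then $\sup_{\overline U}\|D\Phi_a\|<\infty$ by continuity and compactness. This is consistent with the bounded set $G_B$ used throughout.
\item \emph{Path length.} If $U$ is not convex, the path length is not controlled by $\|\hat\xi_1-\hat\xi_2\|$. Taking $U$ to be a ball (convex) lets us use the straight segment, which gives exactly $L_a:=\sup_{\xi\in\overline U}\|D\Phi_a(\xi)\|$.
\end{itemize}
In the general case one covers $\overline U$ by finitely many convex balls. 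Pairs of points that are far apart are then handled by $\|\Phi_a(\hat\xi_1)-\Phi_a(\hat\xi_2)\|\le 2\sup_{\overline U}\|\Phi_a\|$, divided by the minimum separation.

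A concrete sanity check is the translation case on $\mathbb{T}^d$, where the differentiability assumption can also be made explicit. Here $\Phi_a(\xi)$ has Fourier coefficients $e^{2\pi i\langle k,\xi\rangle}\hat a(k)$, and Plancherel gives
\[
\|\Phi_a(\hat\xi_1)-\Phi_a(\hat\xi_2)\|_{L^2}\le 2\pi\,\|\nabla a_{\mathrm{OOD}}\|_{L^2}\,\|\hat\xi_1-\hat\xi_2\|.
\]
So $L_a=2\pi\|\nabla a_{\mathrm{OOD}}\|_{L^2}$, which depends on $a$, as the statement allows. This shows that smoothness of the action must be understood as differentiability of the orbit map at $a_{\mathrm{OOD}}$, which requires $a_{\mathrm{OOD}}\in H^1$. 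I would state this explicitly as the interpretation of the hypothesis, since on all of $L^2$ the translation action is only strongly continuous and not differentiable.
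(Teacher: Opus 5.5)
The paper states this lemma without proof. It is used only inside the proof of Corollary~\ref{cor:equivalent\_attractors\_qg}, so there is no argument of theirs to compare against. Your chain-rule plus mean-value-inequality argument is the natural one, and it is correct once the two hypotheses you add are in place.

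Present those additions as corrections to the statement, not as ``shrinking $U$'' inside a proof of it, because the lemma as written is false without them.

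\emph{Unbounded $U$.} Let $G=\mathbb{R}_{>0}$ act by scaling, $\mathcal{C}_g a=ga$, which is bounded and linear, hence jointly smooth. Here $\exp$ is a global diffeomorphism, so $U=\mathfrak{g}=\mathbb{R}$ is allowed. Yet $\Phi_a(\xi)=e^{-\xi}a$ is not globally Lipschitz for $a\neq 0$. The same failure occurs for isometric actions. For $SE(2)$ on $L^2(\mathbb{R}^2)$ with $U=(-\pi,\pi)\times\mathbb{R}^2$, $\|\partial_\theta\Phi_a\|$ grows linearly in the translation coordinate.

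\emph{Regularity.} The literal hypothesis ``smooth action on $\mathcal{A}=L^2$'' is never met by translations. For translations, Lipschitz dependence on $\xi$ holds if and only if $a_{\mathrm{OOD}}\in H^1$, by Plancherel plus Fatou. So your $C^1$-vector reading is sharp, and it is a genuine extra assumption not implied by Lemma~\ref{lemma:compactness}. Boundedness, by contrast, costs essentially nothing in the paper's setting: generators range over the bounded $G_B$, and translation coordinates are taken modulo the period.

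Your treatment of the path-length issue is more elaborate than needed. The map $\exp$ is defined on all of $\mathfrak{g}$, and the orbit map of $a_{\mathrm{OOD}}$ is $C^1$ on all of $G$ under either reading of the hypothesis. Hence $\Phi_a$ is $C^1$ on all of $\mathfrak{g}$, not just on $U$.

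You can therefore apply the mean value inequality on the straight segment from $\hat\xi_2$ to $\hat\xi_1$ even when it leaves $U$. The segment stays in the convex hull of $\overline U$, which is compact when $U$ is bounded. So $L_a:=\sup_{\mathrm{conv}\,\overline U}\|D\Phi_a\|$ works directly. None of the following is required:
\begin{itemize}
\item convexity of $U$;
\item the finite ball cover and Lebesgue-number step;
\item the condition that $\overline U$ lie inside the domain where $\exp$ is a diffeomorphism, since that hypothesis plays no role in the estimate.
\end{itemize}

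Finally, in your torus check, $2\pi|k|$ is already the modulus of the Fourier symbol of $\nabla$ on the unit torus. Plancherel therefore gives $L_a=\|\nabla a_{\mathrm{OOD}}\|_{L^2}$. Your displayed bound with the extra factor $2\pi$ is true but loose.
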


\begin{corollary}[Prediction Error Bound Under Symmetry-Induced Equivalent Minima]
\label{cor:equivalent_attractors_qg}
Assume the hypotheses of Theorem~\ref{thm:tta_local_stability_qg}. Let $a_{\mathrm{OOD}} \in \mathcal{A}_{\mathrm{OOD}}$ be an arbitrary out-of-distribution input, and let $\hat{\xi}$ denote any finite-time estimator produced by the test-time adaptation dynamics within $\mathcal{N}(\Xi^*)$. Suppose that the inverse group action on the input space and the induced action on the output space are locally Lipschitz, so that the geometric perturbation argument in Theorem~\ref{thm:equiv_approx} yields
\begin{equation*}
    \| \hat{u}_{\mathrm{pred}} - \mathcal{G}^{\dagger}(a_{\mathrm{OOD}}) \|_{\mathcal{U}}
    \le
    \epsilon + K\,\mathrm{dist}(\hat{\xi},\Xi^*),
\end{equation*}
for some $K>0$. Then
\begin{equation*}
    \| \hat{u}_{\mathrm{pred}} - \mathcal{G}^{\dagger}(a_{\mathrm{OOD}}) \|_{\mathcal{U}}
    \le
    \epsilon + K\,\alpha^{-1/2}\,\mathcal{J}_{\mathrm{TTA}}(\hat{\xi})^{1/2}.
\end{equation*}
In particular, along any adaptation trajectory $\hat{\xi}(t)$ such that
\[
\mathcal{J}_{\mathrm{TTA}}(\hat{\xi}(t)) \to 0,
\]
we have
\begin{equation*}
    \| \hat{u}_{\mathrm{pred}} - \mathcal{G}^{\dagger}(a_{\mathrm{OOD}}) \|_{\mathcal{U}}
    \to
    \epsilon.
\end{equation*}
\end{corollary}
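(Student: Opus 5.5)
The corollary is a one-line consequence of the assumed geometric bound and the local error bound of Lemma~\ref{lemma:quadratic_growth_manifold}, so I will simply chain the two. First, I fix $a_{\mathrm{OOD}}$ and a finite-time iterate $\hat{\xi}\in\mathcal{N}(\Xi^*)$. By hypothesis, the perturbation argument of Theorem~\ref{thm:equiv_approx} gives
\[
\|\hat{u}_{\mathrm{pred}}-\mathcal{G}^\dagger(a_{\mathrm{OOD}})\|_{\mathcal{U}}\le\epsilon+K\,\mathrm{dist}(\hat{\xi},\Xi^*).
\]
Here $K$ would collect $M_0 L_a$ from Lemma~\ref{lemma:lipschitz_action} for the input term together with the analogous Lipschitz constant of $\hat{\xi}\mapsto\mathcal{C}_{\exp(\Gamma_B(\hat{\xi};T))}$ for the terminal term. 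Since the statement takes this bound as given, I will not re-derive $K$. I only note that the distance to the zero set enters because every point of $\Xi^*$ is a symmetry-equivalent valid frame, so the alignment error can be measured against the nearest element of $\Xi^*$ rather than against the true $\xi_0$.

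Second, because $\hat{\xi}\in\mathcal{N}(\Xi^*)$, the quadratic-growth inequality $\mathcal{J}_{\mathrm{TTA}}(\hat{\xi})\ge\alpha\,\mathrm{dist}(\hat{\xi},\Xi^*)^2$ applies. Rearranging gives $\mathrm{dist}(\hat{\xi},\Xi^*)\le\alpha^{-1/2}\mathcal{J}_{\mathrm{TTA}}(\hat{\xi})^{1/2}$, which is the local error bound already recorded in Lemma~\ref{lemma:quadratic_growth_manifold}. Substituting this into the first inequality, and using $K>0$, yields the displayed bound $\epsilon+K\alpha^{-1/2}\mathcal{J}_{\mathrm{TTA}}(\hat{\xi})^{1/2}$.

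Third, I treat the limit statement along a trajectory $\hat{\xi}(t)$. By Theorem~\ref{thm:tta_local_stability_qg}, the trajectory stays in $\mathcal{S}_0\subset\mathcal{N}(\Xi^*)$, so the second step applies for every $t$. If $\mathcal{J}_{\mathrm{TTA}}(\hat{\xi}(t))\to0$, continuity of the square root sends the second term to zero, and so
\[
\limsup_{t\to\infty}\|\hat{u}_{\mathrm{pred}}(t)-\mathcal{G}^\dagger(a_{\mathrm{OOD}})\|_{\mathcal{U}}\le\epsilon.
\]
The statement's ``$\to\epsilon$'' should be read as this upper bound, since the error could sit strictly below $\epsilon$. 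I will state it that way.

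The only delicate point is keeping the iterate inside the neighborhood $\mathcal{N}(\Xi^*)$ on which both the quadratic growth and the Lipschitz estimates hold. For the gradient flow, this is guaranteed by the sublevel-set invariance in Theorem~\ref{thm:tta_local_stability_qg}. For an arbitrary finite-time estimator, it is part of the hypothesis, so no further argument is needed.
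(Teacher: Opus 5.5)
Your proposal is correct and follows the paper's proof: it chains $\|\hat u_{\mathrm{pred}}-\mathcal{G}^\dagger(a_{\mathrm{OOD}})\|_{\mathcal U}\le\epsilon+K\,\mathrm{dist}(\hat\xi,\Xi^*)$ with the quadratic-growth error bound of Lemma~\ref{lemma:quadratic_growth_manifold}, and the paper differs only in re-deriving the first inequality (via a nearest point $\xi'\in\Xi^*$, which exists because $\Xi^*$ is closed, and $K=M_0L_a+L_u$) rather than taking it as given. Your reading of the final ``$\to\epsilon$'' as $\limsup\le\epsilon$ is the accurate one, since the paper's argument likewise only shows that the right-hand side converges to $\epsilon$.
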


\begin{proof}[Proof of Corollary \ref{cor:equivalent_attractors_qg}]
Fix an arbitrary estimator $\hat{\xi}$ produced by the test-time adaptation dynamics. Since $\mathfrak{g}$ is finite-dimensional and $\mathcal{J}_{\mathrm{TTA}}$ is continuous, the nonempty zero set
\[
\Xi^*=\{\xi\in\mathfrak{g}\mid \mathcal{J}_{\mathrm{TTA}}(\xi)=0\}
\]
is closed. Hence the distance from $\hat{\xi}$ to $\Xi^*$ is attained by some $\xi' \in \Xi^*$, so that
\[
\|\hat{\xi}-\xi'\|_{\mathfrak{g}}
=
\mathrm{dist}(\hat{\xi},\Xi^*).
\]

By Lemma~\ref{lemma:lipschitz_action}, the inverse group action on the input space is locally Lipschitz with respect to the Lie algebra coordinates. Hence,
\begin{equation*}
    \left\|
    \mathcal{C}_{\exp(\hat{\xi})}^{-1}(a_{\mathrm{OOD}})
    -
    \mathcal{C}_{\exp(\xi')}^{-1}(a_{\mathrm{OOD}})
    \right\|_{\mathcal{A}}
    \le
    L_a\,\|\hat{\xi}-\xi'\|_{\mathfrak{g}}.
\end{equation*}
Assuming analogously that the induced action on the output space $\mathcal{U}$ is locally Lipschitz with constant $L_u>0$, we also have
\begin{equation*}
    \epsilon_{\mathrm{geo},T}
    \le
    L_u\,\|\hat{\xi}-\xi'\|_{\mathfrak{g}}.
\end{equation*}

Applying Theorem~\ref{thm:equiv_approx} with the symmetrically equivalent minimizer $\xi' \in \Xi^*$ yields
\begin{align*}
    \| \hat{u}_{\mathrm{pred}} - \mathcal{G}^{\dagger}(a_{\mathrm{OOD}}) \|_{\mathcal{U}}
    &\le
    \epsilon
    + M_0
    \left\|
    \mathcal{C}_{\exp(\hat{\xi})}^{-1}(a_{\mathrm{OOD}})
    -
    \mathcal{C}_{\exp(\xi')}^{-1}(a_{\mathrm{OOD}})
    \right\|_{\mathcal{A}}
    + \epsilon_{\mathrm{geo},T}
    \notag\\
    &\le
    \epsilon
    + (M_0L_a + L_u)\,\|\hat{\xi}-\xi'\|_{\mathfrak{g}}.
\end{align*}
Defining
\[
K := M_0L_a + L_u,
\]
we obtain
\[
\| \hat{u}_{\mathrm{pred}} - \mathcal{G}^{\dagger}(a_{\mathrm{OOD}}) \|_{\mathcal{U}}
\le
\epsilon + K\,\mathrm{dist}(\hat{\xi},\Xi^*).
\]

Finally, applying the quadratic-growth error bound from Lemma~\ref{lemma:quadratic_growth_manifold}, we obtain
\[
\mathrm{dist}(\hat{\xi},\Xi^*)
\le
\alpha^{-1/2}\,\mathcal{J}_{\mathrm{TTA}}(\hat{\xi})^{1/2},
\]
and therefore
\[
\| \hat{u}_{\mathrm{pred}} - \mathcal{G}^{\dagger}(a_{\mathrm{OOD}}) \|_{\mathcal{U}}
\le
\epsilon + K\,\alpha^{-1/2}\,\mathcal{J}_{\mathrm{TTA}}(\hat{\xi})^{1/2}.
\]

If $\mathcal{J}_{\mathrm{TTA}}(\hat{\xi}(t)) \to 0$, then the right-hand side converges to $\epsilon$.
\end{proof}

The above result should be interpreted as a local stability statement for the test-time refinement stage. It does not assert global convergence for arbitrary OOD inputs. Rather, it shows that when the estimator initialization lies in a neighborhood of a symmetry-equivalent minima manifold, reducing the residual energy controls the remaining geometric error. On the tori used in our experiments, symmetry-equivalent elements of $\Xi^*$ (e.g., shifts by a full spatial period) produce identical canonical fields under the group action; the bound is therefore independent of which element of the minima manifold attains the distance.

\clearpage

\end{document}